\documentclass{article}

\PassOptionsToPackage{round}{natbib}
\usepackage[main,final]{neurips_2026}

\usepackage[utf8]{inputenc}
\usepackage[T1]{fontenc}
\usepackage{amsmath}
\usepackage{amssymb}
\usepackage{mathtools}
\usepackage{graphicx}
\usepackage{tikz}
\usepackage{float}
\usepackage{algorithm}
\usepackage{algpseudocode}
\usepackage{booktabs}
\usepackage{hyperref}
\usepackage{url}
\usepackage{xcolor}
\usepackage{microtype}
\usetikzlibrary{shapes, positioning, calc, arrows.meta, fit, decorations.pathreplacing}

\usepackage{amsthm}
\newtheorem{definition}{Definition}[section]
\newtheorem{theorem}{Theorem}[section]

\newtheorem{lemma}{Lemma}[section]
\newtheorem{corollary}{Corollary}[section]
\newtheorem{assumption}{Assumption}[section]
\theoremstyle{remark}
\newtheorem{remark}{Remark}[section]

\title{Bandits via Additive Quantized Representations}

\author{%
  Ami Tavory \quad Noam Touitou \quad Tal Sarig \quad
  Frank Cheng \quad Ido Guy \\
  Meta Platforms
}

\begin{document}

\maketitle

\begin{abstract}

Contextual bandits require balancing nonlinear reward modeling with
online efficiency. Tree ensembles and neural methods capture
nonlinearities but require periodic retraining and large replay
buffers. Linear models update efficiently per observation with
$O(1)$ memory, but are fundamentally restricted to linear reward
structures. We propose
Residual Quantization (RQ) as a representation layer to bridge this
gap. An offline-trained RQ codebook maps continuous contexts into
discrete centroid assignments across $\ell$ levels, set dynamically
through a shadow mechanism. This enables a spectrum of additive bandit
algorithms that achieve nonlinear expressivity with strictly bounded
memory. Across 13 datasets, RQ variants beat their
non-RQ counterparts on 11 of 13 datasets, often by wide
margins, while matching doubling-retrain XGBoost and neural
baselines at up to $1000\times$ less memory.

\end{abstract}

\section{Introduction}\label{sec:intro}

In contextual bandits, an agent repeatedly observes a context,
selects an action, and receives a reward, learning to improve its
choices from each interaction. Every decision is made once and in
real time, so the algorithm should ideally update from each
observation as it arrives.

Tree ensembles~\cite{nilsson2024tets} and neural
methods~\cite{zhou2020neural,zhang2021neural} capture nonlinear
reward structure, but they periodically retrain on a
replay buffer that can grow linearly with the horizon.
At five million rounds on a 28-feature dataset, a doubling-retrain
bandit (retraining at exponentially spaced intervals on all past
data) consumes over 1\,GB of replay memory
(Figure~\ref{fig:pareto}).
Linear methods (LinTS~\cite{agrawal2013thompson},
LinUCB~\cite{abbasi2011improved}) avoid memory and spiky retraining
issues, but are structurally restricted to linear rewards.

This is not a smooth tradeoff. Practitioners face a binary choice:
expressive models with unbounded memory, or bounded memory with
linear rewards. The gap between the two regimes, nonlinear
expressiveness at fixed memory cost, is the problem we address.

\smallskip
\noindent\textbf{Our approach.}
We use Residual Quantization (RQ)~\cite{chen2010approximate} as a
bandit representation layer to enable gradient-boosted reward
modeling with fixed $O(1)$ memory.
We train an RQ codebook of $\ell$ levels with $b$ centroids each
via unsupervised $k$-means on unlabeled contexts (assumed
available and cheap) and freeze it before the bandit starts.
RQ codebooks were originally developed for efficient retrieval;
we repurpose them for gradient-boosted bandits, where we fit
a bandit model at each level to predict the \emph{pseudo-target}, the
gap between the observed reward and the cumulative
prediction of shallower levels. With
$\ell$ levels and $b$ centroids, this yields $b^\ell$ effective
cells using only $\ell b$ parameters, a potentially exponential
compression.
The online phase requires no replay buffer, no retraining, and
fixed $O(1)$ memory at every horizon.

The algorithm adapts depth online via a \emph{shadow}
mechanism: a shadow model trains at the next depth but does not
influence arm selection, so the active model's performance is
unaffected during evaluation. A statistical test at pre-committed
check times promotes the shadow if it improves prediction;
otherwise the method defaults to its initial depth
(Section~\ref{sec:drq}).

\smallskip

We evaluate the architecture across three variants that span
the efficiency--robustness spectrum:
\emph{TS-RQ} ($O(\ell K)$/round) uses per-centroid scalar
counters with no features, robust and simple.
\emph{SGD-LinTS-RQ} ($O(\ell d K)$) fits diagonal linear models on
the context residuals at each level.
\emph{LinTS-RQ} ($O(\ell d^2 K)$) fits full-covariance LinTS, the
quality ceiling, competitive with XGBoost and NeuralCB at $1000\times$ less
memory when data per parameter suffices
(Section~\ref{sec:exp-xgb}).
All three use shadow promotion and maintain fixed memory.

\smallskip
\noindent\textbf{Contributions.}
\begin{enumerate}
\item \emph{RQ as a bandit model}: we repurpose
      residual quantization from retrieval to bandits, using the
      additive codebook as a nonlinear reward model that maintains
      fixed $O(1)$ memory and requires no replay buffer.
\item \emph{Analysis}: under three base-learner convergence
      properties (verified for all variants), the asymptotic
      regret rate is bounded by the additive misspecification
      $\Delta_\ell$ (Theorem~\ref{thm:asymptotic}).  We decompose
      depth-dependent approximation error exactly into saturated
      quantization and additive-interaction terms, and prove
      geometric convergence to the irreducible floor when
      reconstruction and additive-tail errors contract
      geometrically (Theorem~\ref{thm:depth-rate} and
      Corollary~\ref{cor:depth-rate}).
      A shadow mechanism adapts depth online with
      false-promotion probability $\le \alpha$
      (Theorem~\ref{thm:shadow}).
\item \emph{Scale experiments}: we evaluate on 13 datasets
      with up to $11$M rounds and 30 seeds each, showing that
      RQ methods improve over their within-class baselines on
      11 of 13 datasets and that LinTS-RQ is competitive with XGBoost (7W/4L/2T) at
      $1000\times$ less memory.
\end{enumerate}

\begin{figure*}[t]
\centering
\resizebox{\textwidth}{!}{%
\begin{tikzpicture}

\def\bb{4}              
\def\numlev{3}          
\def\cs{0.6}            
\def\dotratio{0.4}      
\def\obsA{2}            
\def\obsB{1}            
\def\obsC{3}            
\def\isoanglea{35}      
\def\isoangleb{22}      
\def\panelgap{2.5}      
\def\blobrad{0.09}      

\pgfmathtruncatemacro{\lastidx}{\bb - 1}
\pgfmathtruncatemacro{\totalcells}{\bb^\numlev}
\pgfmathtruncatemacro{\lastcell}{\totalcells - 1}
\pgfmathtruncatemacro{\obsFlat}{\obsA*\bb*\bb + \obsB*\bb + \obsC}
\pgfmathtruncatemacro{\uninformed}{(\bb-1)^\numlev}
\pgfmathtruncatemacro{\informed}{\totalcells - \uninformed}
\pgfmathsetmacro{\informedpct}{round(\informed/\totalcells*100)}

\pgfmathsetmacro{\dotrad}{\dotratio * \cs}

\pgfmathsetmacro{\cubew}{\bb * (cos(\isoanglea) + cos(\isoangleb)) * \cs}
\pgfmathsetmacro{\cubeh}{\bb * \cs + \bb * (sin(\isoanglea) + sin(\isoangleb)) * \cs}

\pgfmathsetmacro{\cubeLeftX}{0}
\pgfmathsetmacro{\cubeRightX}{\cubew + \panelgap}

\pgfmathsetmacro{\visLeft}{\cubeLeftX - \bb * cos(\isoangleb) * \cs}
\pgfmathsetmacro{\visRight}{\cubeRightX + \bb * cos(\isoanglea) * \cs}
\pgfmathsetmacro{\visWidth}{\visRight - \visLeft}

\pgfmathsetmacro{\sqsize}{\visWidth / \totalcells}
\pgfmathsetmacro{\flatheight}{0.45}  
\pgfmathsetmacro{\flatdotrad}{\dotratio * \sqsize * 0.8}
\pgfmathsetmacro{\flatY}{\cubeh + 1.8}  

\pgfmathsetmacro{\labelpad}{0.5}

\newcommand{\isoproject}[3]{%
  \pgfmathsetmacro{\isox}{(#1) * cos(\isoanglea) * \cs - (#3) * cos(\isoangleb) * \cs}%
  \pgfmathsetmacro{\isoy}{(#2) * \cs + (#1) * sin(\isoanglea) * \cs + (#3) * sin(\isoangleb) * \cs}%
}

\newcommand{\fillslab}[7]{%
  \isoproject{#1}{#3}{#5} \coordinate (f0) at (\isox,\isoy);
  \isoproject{#2}{#3}{#5} \coordinate (f1) at (\isox,\isoy);
  \isoproject{#2}{#3}{#6} \coordinate (f2) at (\isox,\isoy);
  \isoproject{#1}{#3}{#6} \coordinate (f3) at (\isox,\isoy);
  \fill[black, opacity=#7] (f0)--(f1)--(f2)--(f3)--cycle;
  \isoproject{#1}{#4}{#5} \coordinate (f0) at (\isox,\isoy);
  \isoproject{#2}{#4}{#5} \coordinate (f1) at (\isox,\isoy);
  \isoproject{#2}{#4}{#6} \coordinate (f2) at (\isox,\isoy);
  \isoproject{#1}{#4}{#6} \coordinate (f3) at (\isox,\isoy);
  \fill[black, opacity=#7] (f0)--(f1)--(f2)--(f3)--cycle;
  \isoproject{#1}{#3}{#5} \coordinate (f0) at (\isox,\isoy);
  \isoproject{#1}{#4}{#5} \coordinate (f1) at (\isox,\isoy);
  \isoproject{#1}{#4}{#6} \coordinate (f2) at (\isox,\isoy);
  \isoproject{#1}{#3}{#6} \coordinate (f3) at (\isox,\isoy);
  \fill[black, opacity=#7] (f0)--(f1)--(f2)--(f3)--cycle;
  \isoproject{#2}{#3}{#5} \coordinate (f0) at (\isox,\isoy);
  \isoproject{#2}{#4}{#5} \coordinate (f1) at (\isox,\isoy);
  \isoproject{#2}{#4}{#6} \coordinate (f2) at (\isox,\isoy);
  \isoproject{#2}{#3}{#6} \coordinate (f3) at (\isox,\isoy);
  \fill[black, opacity=#7] (f0)--(f1)--(f2)--(f3)--cycle;
  \isoproject{#1}{#3}{#5} \coordinate (f0) at (\isox,\isoy);
  \isoproject{#2}{#3}{#5} \coordinate (f1) at (\isox,\isoy);
  \isoproject{#2}{#4}{#5} \coordinate (f2) at (\isox,\isoy);
  \isoproject{#1}{#4}{#5} \coordinate (f3) at (\isox,\isoy);
  \fill[black, opacity=#7] (f0)--(f1)--(f2)--(f3)--cycle;
  \isoproject{#1}{#3}{#6} \coordinate (f0) at (\isox,\isoy);
  \isoproject{#2}{#3}{#6} \coordinate (f1) at (\isox,\isoy);
  \isoproject{#2}{#4}{#6} \coordinate (f2) at (\isox,\isoy);
  \isoproject{#1}{#4}{#6} \coordinate (f3) at (\isox,\isoy);
  \fill[black, opacity=#7] (f0)--(f1)--(f2)--(f3)--cycle;
}

\newcommand{\isogrid}{%
  \foreach \gb in {0,...,\bb} {
    \foreach \gc in {0,...,\bb} {
      \isoproject{0}{\gb}{\gc}
      \pgfmathsetmacro{\sx}{\isox}\pgfmathsetmacro{\sy}{\isoy}
      \isoproject{\bb}{\gb}{\gc}
      \draw[gray!60, very thin] (\sx, \sy) -- (\isox, \isoy);
    }
  }
  \foreach \ga in {0,...,\bb} {
    \foreach \gc in {0,...,\bb} {
      \isoproject{\ga}{0}{\gc}
      \pgfmathsetmacro{\sx}{\isox}\pgfmathsetmacro{\sy}{\isoy}
      \isoproject{\ga}{\bb}{\gc}
      \draw[gray!60, very thin] (\sx, \sy) -- (\isox, \isoy);
    }
  }
  \foreach \ga in {0,...,\bb} {
    \foreach \gb in {0,...,\bb} {
      \isoproject{\ga}{\gb}{0}
      \pgfmathsetmacro{\sx}{\isox}\pgfmathsetmacro{\sy}{\isoy}
      \isoproject{\ga}{\gb}{\bb}
      \draw[gray!60, very thin] (\sx, \sy) -- (\isox, \isoy);
    }
  }
}

\begin{scope}[shift={(\visLeft cm, \flatY)}]
  \node[font=\footnotesize\bfseries, anchor=south]
    at ({\visWidth/2}, {\flatheight + 0.3}) {Flat};

  \foreach \i in {0,...,\lastcell} {
    \pgfmathtruncatemacro{\flatobs}{\i == \obsFlat ? 1 : 0}
    \draw[gray!80, ultra thin]
      ({\i*\sqsize}, 0) rectangle ++({\sqsize}, {\flatheight});
    \ifnum\flatobs=1
      \fill[black, opacity=0.08]
        ({\i*\sqsize}, 0) rectangle ++({\sqsize}, {\flatheight});
      \pgfmathsetmacro{\cx}{\i*\sqsize + \sqsize/2}
      \pgfmathsetmacro{\cy}{\flatheight/2}
      \shade[ball color=black!80] (\cx, \cy) circle (\flatdotrad);
    \fi
  }

  \node[font=\scriptsize, anchor=north, text width=5cm, align=center]
    at ({\visWidth/2}, {-\labelpad})
    {$b^\ell{=}64$ parameters \quad$\cdot$\quad
     1 obs $\to$ 1 cell updated};
\end{scope}

\begin{scope}[shift={(\cubeLeftX cm, 0)}]
  \isoproject{\bb/2}{0}{\bb/2}
  \node[font=\footnotesize\bfseries, anchor=south]
    at (\isox, {\cubeh + 0.3}) {Factorial};

  \fillslab{\obsA}{\obsA+1}{0}{\bb}{0}{\bb}{0.08}
  \fillslab{0}{\bb}{\obsB}{\obsB+1}{0}{\bb}{0.08}
  \fillslab{0}{\bb}{0}{\bb}{\obsC}{\obsC+1}{0.08}

  \isogrid

  \foreach \a in {\lastidx,...,0} {
    \foreach \c in {\lastidx,...,0} {
      \foreach \b in {0,...,\lastidx} {
        \isoproject{\a+0.5}{\b+0.5}{\c+0.5}
        \pgfmathtruncatemacro{\cubeobs}{\a*\bb*\bb + \b*\bb + \c}
        \ifnum\cubeobs=\obsFlat
          \shade[ball color=black!80] (\isox, \isoy) circle (\dotrad);
        \fi
      }
    }
  }

  \isoproject{\bb/2}{0}{\bb/2}
  \node[font=\scriptsize, anchor=north, text width=5cm, align=center]
    at (\isox, {-\labelpad})
    {$b^\ell{=}64$ parameters \quad$\cdot$\quad
     1 obs $\to$ 1 cell updated};
\end{scope}

\begin{scope}[shift={(\cubeRightX cm, 0)}]
  \isoproject{\bb/2}{0}{\bb/2}
  \node[font=\footnotesize\bfseries, anchor=south]
    at (\isox, {\cubeh + 0.3}) {Additive (Ours)};

  \fillslab{\obsA}{\obsA+1}{0}{\bb}{0}{\bb}{0.08}
  \fillslab{0}{\bb}{\obsB}{\obsB+1}{0}{\bb}{0.08}
  \fillslab{0}{\bb}{0}{\bb}{\obsC}{\obsC+1}{0.08}

  \isogrid

  \foreach \a in {\lastidx,...,0} {
    \foreach \c in {\lastidx,...,0} {
      \foreach \b in {0,...,\lastidx} {
        \isoproject{\a+0.5}{\b+0.5}{\c+0.5}
        \pgfmathtruncatemacro{\sa}{equal(\a, \obsA)}
        \pgfmathtruncatemacro{\sb}{equal(\b, \obsB)}
        \pgfmathtruncatemacro{\sc}{equal(\c, \obsC)}
        \pgfmathtruncatemacro{\shared}{\sa + \sb + \sc}
        \pgfmathsetmacro{\impactrad}{%
          \shared > 0 ?
            \dotrad * 2 * \shared * \shared / (\numlev * (\shared + \numlev))
          : 0}
        \ifdim\impactrad pt>0pt
          \shade[ball color=black!80] (\isox, \isoy) circle (\impactrad);
        \fi
      }
    }
  }

  \isoproject{\bb/2}{0}{\bb/2}
  \node[font=\scriptsize, anchor=north, text width=5cm, align=center]
    at (\isox, {-\labelpad})
    {$\ell b{=}12$ parameters \quad$\cdot$\quad
     1 obs $\to$ $\ell{=}\numlev$ params updated
     $\to$ \pgfmathprintnumber[precision=0]{\informedpct}\% of cells informed};
\end{scope}

\end{tikzpicture}
}

\caption{Partitioning the context space creates nonlinear reward
models from linear per-cell learners, but finer partitions fragment
the data: each cell sees fewer observations. All three schemes
realize $b^\ell{=}64$ cells ($b{=}4$, $\ell{=}3$).
\textbf{Top/Bottom left:} Flat and factorial parameterizations
learn each cell from its own data alone.
\textbf{Bottom right:} Additive RQ achieves the same $64$-cell
granularity with $\ell b{=}12$ shared parameters---each observation
updates $\ell{=}3$ centroids, pooling information across cells.
Fine partition without data starvation.}
\label{fig:system}
\end{figure*}

\begin{figure*}[t]
\centering
\includegraphics[width=\textwidth]{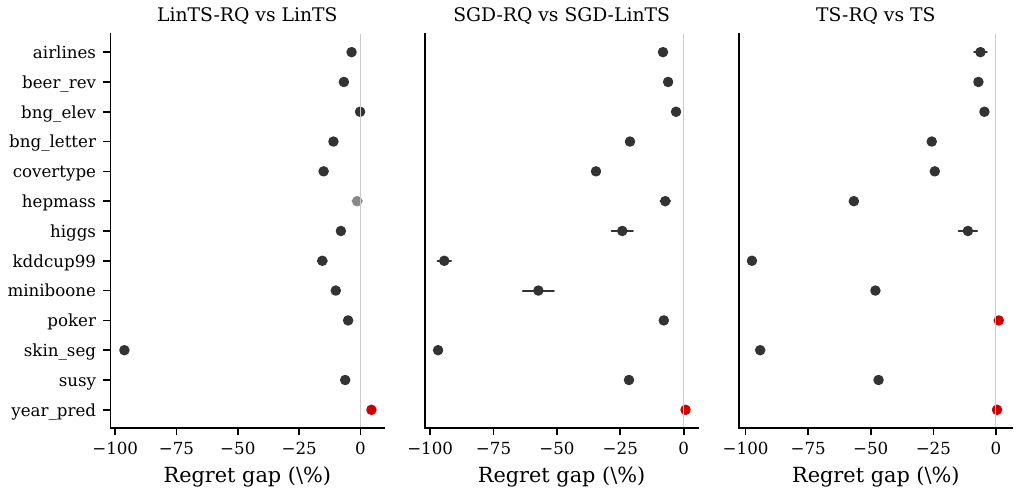}
\caption{Regret gap (\%) of each RQ method relative to its within-class
baseline on 13 datasets (30 seeds, 95\% CIs). Negative = RQ method has
lower regret.
Black: significant win (CI excludes zero).
Gray: not significant.
Red: significant loss.
\textbf{Left:} LinTS-RQ vs LinTS wins on 11/13 datasets; the
sole loss (year\_prediction, $d{=}90$, $K{=}84$) is explained by the
data sufficiency condition $N/(bKd^2) \ll 1$
(Section~\ref{sec:exp-limitation}).
\textbf{Center:} SGD-LinTS-RQ vs SGD-LinTS wins on 12/13.
\textbf{Right:} TS-RQ vs TS wins on 11/13.}
\label{fig:main_results}
\end{figure*}

\begin{figure}[t]
\centering
\includegraphics[width=0.85\columnwidth]{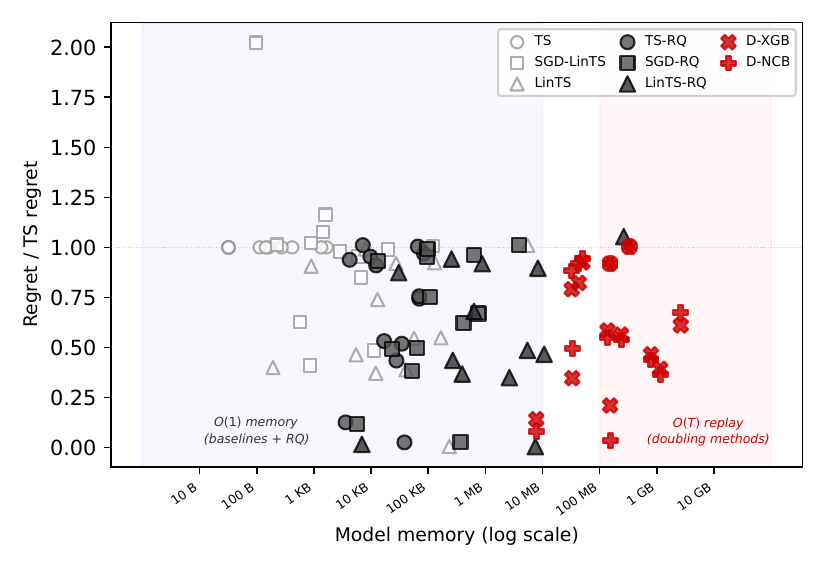}
\caption{Memory vs.\ quality across 13 datasets.
Each point is one (method, dataset) pair.
Hollow gray: linear baselines ($O(1)$ memory, high regret).
Solid gray: RQ methods ($O(1)$ memory, low regret).
Red: doubling methods ($O(T)$ replay buffers, low regret).
RQ methods match tree/neural regret at $1000\times$ less memory.
The vertical drop from hollow to solid at the same $x$-position
shows the regret gain from the RQ codebook.}
\label{fig:pareto}
\end{figure}

\section{Method}\label{sec:method}

\noindent\textbf{Setting.}\label{sec:setting}
We consider the standard stochastic contextual bandit setting~\cite{lattimore2020bandit}: at each
round~$t$, the agent observes a context $x_t \in \mathbb{R}^d$, selects
an arm $a_t \in [K]$, and receives a reward
$r_t \in \{0, 1\}$.
The goal is to minimize cumulative regret
$R(T) = \sum_{t=1}^T [r^*(x_t) - r_t]$, where $r^*$ is the oracle
reward.

We assume access to a pool of $n_{\mathrm{rq}}$ \emph{unlabeled}
contexts---separate from the bandit evaluation pool---on which we
train the RQ codebook via sequential $k$-means.
We train the codebook once and freeze it before the bandit starts.
This is a standard assumption in settings where unlabeled contexts are
abundant (user profiles, item catalogs, sensor readings) but rewards
require costly sequential interaction.
The codebook encodes each context via greedy $k$-means assignment:
\begin{equation}\label{eq:rq_encode}
    k_i = \arg\min_{k \in [b]} \|x^{(i-1)} - c_k^{(i)}\|^2, \qquad
    x^{(i)} = x^{(i-1)} - c_{k_i}^{(i)}, \qquad x^{(0)} = x,
\end{equation}
producing a code sequence $z = [k_1, \ldots, k_\ell]$ that
partitions the context space, and context residuals
$x^{(0)}, x^{(1)}, \ldots, x^{(\ell-1)}$ that serve as per-level
feature inputs to the bandit model below.

\smallskip
\noindent\textbf{Gradient-boosted bandit over RQ levels.}\label{sec:algorithms}
All methods share the architecture in Algorithm~\ref{alg:rqbandit}.
Each RQ level operates on the context residual left by shallower
levels, and boosts~\cite{friedman2001greedy} the reward prediction:
level~$0$ predicts the raw reward; level~$i > 0$ predicts the
\emph{pseudo-target}---the gap between the observed reward and the
cumulative prediction of levels $0, \ldots, i{-}1$.
A damping factor $\eta \in (0,1]$ controls each level's
contribution.

\begin{algorithm}[t]
\caption{RQ Bandit with Shadow Promotion}
\label{alg:rqbandit}
\begin{algorithmic}[1]
\Require Codebook $\mathcal{C}^{(1)}, \ldots, \mathcal{C}^{(\ell_{\max})}$;
         model array $\mathcal{M}[i, c, a]$ for each level $i \in [\ell_{\max}]$,
         centroid $c \in [b]$, arm $a \in [K]$;
         clipping function $\sigma(z) = \mathrm{clip}(z + 0.5, 0, 1)$;
         damping $\eta$; initial depth $\ell_0$;
         significance level $\alpha$
\State $\ell_{\mathrm{act}} \leftarrow \ell_0$;
       initialize shadow at level $\ell_0{+}1$
       \label{alg:init}
\For{$t = 1, \ldots, T$}
    \State \textbf{Encode:} observe $x_t$; compute codes
           $(k_1, \ldots, k_{\ell_{\max}})$ and context residuals
           $(x_t^{(0)}, x_t^{(1)}, \ldots)$ via~\eqref{eq:rq_encode}
           \label{alg:encode}
    \State \textbf{Score:}
           $s_a \leftarrow \sum_{i=1}^{\ell_{\mathrm{act}}}
           \eta^{i-1} \cdot
           \mathcal{M}[i,\, k_i,\, a].\mathrm{sample}(x_t^{(i-1)})$
           for each arm $a$
           \label{alg:score}
    \State \textbf{Select:}
           $a_t \leftarrow \arg\max_a s_a$;
           observe reward $r_t \in \{0,1\}$
           \label{alg:select}
    \State \textbf{Targets (residual cascade):}
           $\hat{y}_0 \leftarrow 0$
           \label{alg:targets}
    \For{$i = 1, \ldots, \ell_{\mathrm{act}}$}
        \State $\hat{\mu}_{i} \leftarrow
               \mathcal{M}[i,\, k_i,\, a_t].\mathrm{mean}(x_t^{(i-1)})$
               \Comment{level-$i$ prediction}
        \State $\tilde{r}_{i} \leftarrow r_t -
               \sigma(\hat{y}_{i-1})$
               \Comment{residual target}
        \State $\hat{y}_i \leftarrow \hat{y}_{i-1} +
               \eta^{i-1} \hat{\mu}_{i}$
               \Comment{accumulate}
    \EndFor
    \State \textbf{Update:}
           $\mathcal{M}[i,\, k_i,\, a_t].\mathrm{update}(x_t^{(i-1)},
           \tilde{r}_i)$
           for $i = 1, \ldots, \ell_{\mathrm{act}}$
           \label{alg:update}
    \State \textbf{Shadow:}
           $\mathcal{M}[\ell_{\mathrm{act}}{+}1,\, k_{\ell_{\mathrm{act}}+1},\, a_t].\mathrm{update}(x_t^{(\ell_{\mathrm{act}})}, \tilde{r}_{\ell_{\mathrm{act}}+1})$
           \label{alg:shadow}
    \If{$t \in \{T_0, 2T_0, 4T_0, \ldots\}$ \textbf{and}
        $\ell_{\mathrm{act}} < \ell_{\max}$}
        \If{Freedman test (Theorem~\ref{thm:shadow},
            Appendix~\ref{app:promotion_proofs}): shadow
            significantly better at level
            $\alpha / (n_c \cdot (\ell_{\max}{-}1))$, where $n_c$ is the number of checks}
            \State $\ell_{\mathrm{act}} \mathrel{+}= 1$;
                   spawn new shadow
                   \label{alg:promote}
        \EndIf
    \EndIf
\EndFor
\end{algorithmic}
\end{algorithm}

\noindent The three instantiations differ only in the per-level model
class~$\mathcal{M}$ (Table~\ref{tab:variants}).

\begin{table}[t]
\centering
\caption{Model variants sharing Algorithm~\ref{alg:rqbandit}, differing only in per-cell learner~$\mathcal{M}$
(full pseudocode in Appendix~\ref{app:model_variants}).}
\label{tab:variants}
\begin{tabular}{@{}l l l l@{}}
\toprule
\textbf{Method} & $\mathcal{M}$ & \textbf{Features used} & \textbf{Cost/round} \\
\midrule
TS-RQ & Count/sum & Index $k_i$ only & $O(\ell K)$ \\
SGD-LinTS-RQ & Diagonal linear & Index $k_i$ + $[x^{(i-1)}; 1] \in \mathbb{R}^{d+1}$ & $O(\ell d K)$ \\
LinTS-RQ   & Full-covariance LinTS & Index $k_i$ + $[x^{(i-1)}; 1] \in \mathbb{R}^{d+1}$ & $O(\ell d^2 K)$ \\
\bottomrule
\end{tabular}
\end{table}

\noindent For LinTS-RQ, pseudo-targets at deeper levels have
smaller magnitude and fewer effective samples; we describe
the per-level regularization and numerical scaling in
Section~\ref{sec:exp-setup}.

\smallskip
\noindent\textbf{Shadow promotion.}\label{sec:drq}
Algorithm~\ref{alg:rqbandit} adapts depth $\ell$ online via a shadow
level that trains alongside the active model but does not influence
arm selection (line~\ref{alg:shadow}).
At pre-committed check times (line~\ref{alg:promote}), a statistical
test compares the shadow's prediction accuracy against the active
model over a trailing window.
If the shadow wins, it is promoted and a new shadow spawned;
otherwise the active depth is unchanged.
We analyze the false-promotion guarantee in
Section~\ref{sec:analysis-shadow}.

\section{Analysis}\label{sec:analysis}

We analyze Algorithm~\ref{alg:rqbandit} along three axes:
asymptotic regret under base-learner convergence properties,
the effect of depth on misspecification, and false-promotion
control for the shadow mechanism.

\subsection{Asymptotic regret}\label{sec:analysis-regret}

\begin{definition}[Additive misspecification]\label{def:delta}
The \emph{additive misspecification} at depth~$\ell$ is
\begin{equation}\label{eq:delta}
    \Delta_\ell(\mathcal{M}) = \inf_{f_1, \ldots, f_\ell \in \mathcal{M}}
    \mathbb{E}_{x}\!\left[
        \max_a \left| r^*(x, a) -
        \sigma\!\left(\sum_{i=1}^{\ell}
        f_i(k_i(x), a)\right) \right|
    \right].
\end{equation}
\end{definition}

\noindent We require three properties of the per-cell base
learner~$\mathcal{M}$, stated informally here (formal
definitions and verification for all three variants in
Appendix~\ref{app:base_learner_axioms}):
\begin{enumerate}
\item \emph{Eventual boundedness}: score means do not
      diverge.
\item \emph{Exploration}: if an arm rarely visits a
      centroid, its score variance remains positive,
      ensuring continued exploration.
\item \emph{Consistency}: when shallower levels stabilize,
      each level's predictions converge to the best
      predictor in its function class.
\end{enumerate}

\begin{theorem}[Asymptotic regret]\label{thm:asymptotic}
For any base learner~$\mathcal{M}$ satisfying
Properties~1--3, let $\hat{r}_\infty$ denote the converged
prediction of the sequential residual-fitting procedure
(Appendix~\ref{app:fixed_depth_proofs}).
Then
\begin{equation}\label{eq:asymptotic_regret}
    \limsup_{T \to \infty}\;
    \frac{R(T, \ell)}{T}
    \;\le\;
    2\,\mathbb{E}_{x}\!\left[\max_a
    \left|r^*(x, a) - \hat{r}_\infty(x, a)\right|
    \right].
\end{equation}
Since $\hat{r}_\infty$ is one feasible additive predictor,
the right-hand side equals
$2\,\Delta_\ell(\mathcal{M})$ when the stagewise and joint
optima coincide; in general it is controlled by the
stagewise approximation error.
\end{theorem}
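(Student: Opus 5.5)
The argument reduces regret to prediction error through the standard ``greedy-on-a-good-estimate'' inequality, and then shows that the algorithm's scores converge to $\hat{r}_\infty$ on the arms and contexts that matter.

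\emph{Step 1: a pointwise regret bound.} Fix a context $x$. Suppose the arm chosen maximizes some score $s(x,\cdot)$ with $\max_a|s(x,a)-r^*(x,a)|\le\varepsilon(x)$. Write $a^*$ for the optimal arm and $a_t$ for the chosen arm. Then
\[
r^*(x,a^*)-r^*(x,a_t)\le [r^*(x,a^*)-s(x,a^*)]+[s(x,a_t)-r^*(x,a_t)]\le 2\varepsilon(x),
\]
using $s(x,a_t)\ge s(x,a^*)$. The factor $2$ in \eqref{eq:asymptotic_regret} comes from this step. Taking expectations over the i.i.d.\ contexts and averaging over $t$ gives $R(T,\ell)/T$ bounded by $2\,\mathbb{E}_x[\max_a|r^*-s_t|]$ plus the stochastic reward noise. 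That noise averages out by a martingale strong law, since $r_t\in\{0,1\}$ is bounded.

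\emph{Step 2: convergence of the scores to $\hat r_\infty$.} I would argue by induction on the level $i=1,\dots,\ell$.
\begin{itemize}
\item Property~1 (eventual boundedness) keeps the cumulative predictions $\hat y_{i-1}$ in a compact set. Together with $\sigma$ being $1$-Lipschitz, this makes the pseudo-targets $\tilde r_i$ bounded.
\item Assume levels $1,\dots,i-1$ have converged. Property~3 (consistency) then gives that the level-$i$ means converge to the stagewise-optimal predictor $f_i^\infty$ on each (centroid, arm) cell visited infinitely often.
\item Property~2 (exploration) says that an arm which rarely visits a centroid keeps positive score variance. By a Borel--Cantelli argument it is then selected infinitely often. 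Hence every cell of positive context mass is visited infinitely often, so consistency applies on all relevant cells.
\end{itemize}
It remains to handle the Thompson sampling noise. The posterior variance on a cell shrinks as that cell's count grows, so the sampled scores $s_a$ concentrate around their means. This gives $s_t(x,a)\to\hat r_\infty(x,a)$ in probability, uniformly over the finitely many codes and arms.

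\emph{Step 3: combining the steps.} Apply Step~1 with $\varepsilon_t(x)=\max_a|r^*-\hat r_\infty|+\max_a|\hat r_\infty-s_t|$. Bounded convergence removes the second term in the Ces\`aro limit, which yields \eqref{eq:asymptotic_regret}. For the remark after the theorem: $\hat r_\infty=\sigma(\sum_i f_i^\infty)$ with each $f_i^\infty\in\mathcal M$, so it is a feasible candidate in Definition~\ref{def:delta}. When it attains the infimum, the right-hand side equals $2\Delta_\ell(\mathcal M)$.

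\emph{Main obstacle.} The hardest step is reconciling Properties~2 and~3 in Step~2. Exploration must force infinitely many visits to every cell. At the same time, the residual exploration it induces must contribute only $o(T)$ to the regret. The targets at level $i$ are also nonstationary until the shallower levels settle. My first approach would be to show that the exploration probability on each cell decays as the posterior concentrates, so the rounds spent exploring have zero density. I would then handle the nonstationary targets with a two-timescale argument: once levels $<i$ are within $\delta$ of their limits, the level-$i$ targets are a $\delta$-perturbed stationary sequence, and I would let $\delta\to0$.
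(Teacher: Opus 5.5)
Your route is the paper's: induction over levels with convergence of the shallower levels feeding Property~3, infinitely many visits to every cell from Property~2 plus Borel--Cantelli, vanishing posterior variances so the Thompson draws concentrate, and the greedy inequality that produces the factor $2$. Your Step~1 is a mild improvement on the paper's bookkeeping. The paper bounds regret at $\hat a(x)=\arg\max_a\hat r_\infty(x,a)$ and then argues separately that $a_t\neq\hat a(x_t)$ on only $o(T)$ rounds, which forces a side remark about ties. You instead apply the inequality directly to $\sigma(s_t)$, writing $\hat r_\infty=\sigma(\mu_\infty)$. The chosen arm also maximizes $\sigma(s_t)$ because $\sigma$ is nondecreasing, and $|\sigma(s_t)-\hat r_\infty|\le|s_t-\mu_\infty|$ by $1$-Lipschitzness. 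This absorbs the sampling noise into $\varepsilon_t$ and needs no gap or tie argument.

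The step that does not go through as written is the exploration step. A positive lower bound on the variance of $s_a$ does not by itself give a selection probability bounded away from zero. If a competitor's score mean drifted upward, or arm $a$'s contributions from other levels drifted downward, $P(a_t=a\mid x_t,\mathcal F_{t-1})$ could decay summably, and Borel--Cantelli would yield nothing. Moreover, without a shape assumption a variance floor gives no anti-concentration at all. The paper closes this with the ingredient you spent elsewhere. Property~1 bounds every arm's score mean by $B$ after some $t_0$. Since all three variants draw Gaussians, every round with $k_i(x_t)=c$ then satisfies $P(a_t=a\mid x_t,\mathcal F_{t-1})\ge[1-\Phi(2B/\sigma_{\min})](1/2)^{K-1}$. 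These rounds occur infinitely often, and since the bound is conditional rather than independent, L\'evy's extension of Borel--Cantelli gives the contradiction.

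The job you assigned to Property~1 is vacuous: $r_t\in\{0,1\}$ and $\sigma\in[0,1]$ already put every pseudo-target in $[-1,1]$. Likewise, the $1$-Lipschitz property of $\sigma$ is needed not for boundedness but to verify the hypothesis of Property~3 at level $i$: $|\tilde r_{i,t}-\tilde r_{i,t}^{\infty}|\le\sum_{j<i}|(\hat\theta_{a,j,t}-\theta^*_{a,j})^\top\phi_j(x_t)|\to0$ by the induction hypothesis. With that in place, the two-timescale argument in your last paragraph is unnecessary, since Property~3 is stated for vanishing perturbations precisely to absorb the nonstationarity. The tension you raise between forced exploration and $o(T)$ exploration cost also dissolves. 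Property~2's variance floor applies only to cells visited finitely often, and part (a) shows there are none. So all posterior variances tend to zero, which is exactly your concentration step.
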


\noindent The key technical step is an inductive convergence
proof across boosting levels: each level's targets depend on
all shallower levels' estimates, creating a coupled system
not covered by single-model analysis.
We establish Properties~1--3 for TS-RQ, SGD-LinTS-RQ, and
LinTS-RQ in Appendix~\ref{app:base_learner_axioms}; the
full proof appears in
Appendix~\ref{app:fixed_depth_proofs}.
The verification follows the standard approach in the TS
literature~\cite{agrawal2013thompson}; a fully rigorous
treatment of the strong law under adaptive Thompson Sampling
with misspecification remains open in general, as noted
by~\cite{agrawal2012analysis}.
The bound says that average regret per round depends on the
stagewise approximation error, not directly on the horizon $T$.
When that error is zero, the method achieves sublinear regret;
under misspecification, regret may grow linearly at the residual
error rate. This is not the optimal $O(\sqrt{T})$ rate that LinTS
achieves under correct specification; the tradeoff is a richer
non-linear reward model at fixed memory.

\subsection{Misspecification and depth}
\label{sec:analysis-depth}

The additive parameterization resolves $b^\ell$ effective cells
with only $\ell b$ parameters per arm, and each observation
updates $\ell$ centroid models---so exploration cost grows
linearly in~$\ell$ while resolution grows exponentially.
Since deeper models contain shallower ones
($f_{\ell+1} \equiv 0$),
$\Delta_{\ell+1} \le \Delta_\ell$ trivially; the question
is the \emph{rate} of decay.

We can now separate the two mechanisms that determine this
rate.  For one arm and the identity-link readout, let $S_\ell$
be the saturated space of square-integrable functions of the
prefix code $z_{1:\ell}$, and let
$A_\ell=\{\sum_{j\leq\ell}g_j(z_j)\}$ be the compact additive
space used by RQ.  Write
$\delta_\ell=\operatorname{dist}(r^*,A_\ell)$,
$\delta_\ell^{\rm sat}=\operatorname{dist}(r^*,S_\ell)$,
and
$\iota_\ell=\operatorname{dist}(P_{S_\ell}r^*,A_\ell)$.
Let $A_\infty=\overline{\bigcup_\ell A_\ell}$,
$\delta_\infty=\operatorname{dist}(r^*,A_\infty)$, and
$\tau_\ell=\operatorname{dist}(P_{A_\infty}r^*,A_\ell)$.

\begin{theorem}[Depth-rate decomposition]\label{thm:depth-rate}
If $r^*$ is $L$-Lipschitz and
$\mathrm{MSE}_\ell=\mathbb{E}\|x-\hat x_\ell\|^2$ is the
depth-$\ell$ RQ reconstruction error, then
\begin{align}
\delta_\ell^2
&=(\delta_\ell^{\rm sat})^2+\iota_\ell^2,
&
(\delta_\ell^{\rm sat})^2
&\leq L^2\mathrm{MSE}_\ell,
&
\iota_\ell
&\leq\delta_\infty+\tau_\ell .
\label{eq:depth-decomposition}
\end{align}
Consequently,
\begin{equation}
\delta_\ell^2-\delta_\infty^2
\leq L^2\mathrm{MSE}_\ell
   +2\delta_\infty\tau_\ell+\tau_\ell^2.
\label{eq:depth-excess}
\end{equation}
\end{theorem}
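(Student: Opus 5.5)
The plan is to work in the Hilbert space $L^2(P_x)$ for a single arm with identity link. Every statement in \eqref{eq:depth-decomposition} then reduces to a projection identity or a triangle inequality. The structural fact is the nesting $A_\ell\subseteq S_\ell$. Each $g_j(z_j)$ is a function of the prefix $z_{1:\ell}$ when $j\le\ell$, so every additive predictor is measurable with respect to the prefix code. Both $S_\ell$ and $A_\ell$ are closed subspaces. For $A_\ell$ this holds because it is a finite sum of finite-dimensional spaces, since each $z_j$ takes $b$ values. Hence orthogonal projections onto them exist.

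For the Pythagorean identity, I will write $r^*=P_{S_\ell}r^*+(r^*-P_{S_\ell}r^*)$. The second term is orthogonal to $S_\ell$, hence to $A_\ell$. For any $g\in A_\ell$, the vector $P_{S_\ell}r^*-g$ lies in $S_\ell$, which gives $\|r^*-g\|^2=(\delta^{\rm sat}_\ell)^2+\|P_{S_\ell}r^*-g\|^2$. Taking the infimum over $g$ yields $\delta_\ell^2=(\delta_\ell^{\rm sat})^2+\iota_\ell^2$.

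For the saturated bound, I will use the fact that $P_{S_\ell}r^*=\mathbb{E}[r^*(x)\mid z_{1:\ell}]$ is the best prefix-measurable predictor. In particular it beats the candidate $r^*(\hat x_\ell)$, where $\hat x_\ell=\sum_{i\le\ell}c^{(i)}_{k_i}$ is a deterministic function of $z_{1:\ell}$. Lipschitzness then gives $(\delta^{\rm sat}_\ell)^2\le\mathbb{E}|r^*(x)-r^*(\hat x_\ell)|^2\le L^2\,\mathrm{MSE}_\ell$.

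The bound on $\iota_\ell$ is where I expect the only subtlety. The naive route compares $P_{S_\ell}r^*$ with $P_{A_\infty}r^*$, but $A_\infty$ is not contained in $S_\ell$. The clean fix is to use that projection onto the closed convex set $S_\ell\supseteq A_\ell$ is $1$-Lipschitz and fixes $A_\ell$. For any $g\in A_\ell$ this gives $\|P_{S_\ell}r^*-g\|=\|P_{S_\ell}(r^*-g)\|\le\|r^*-g\|$, so $\iota_\ell\le\delta_\ell$. Then the triangle inequality, $\|r^*-g\|\le\|r^*-P_{A_\infty}r^*\|+\|P_{A_\infty}r^*-g\|$, with the infimum over $g\in A_\ell$, gives $\delta_\ell\le\delta_\infty+\tau_\ell$. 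Together these give $\iota_\ell\le\delta_\infty+\tau_\ell$.

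For the consequence \eqref{eq:depth-excess}, I will not simply add the three bounds, since squaring $\iota_\ell\le\delta_\infty+\tau_\ell$ would leave an extra $\delta_\infty^2$. Instead I will refine the argument using orthogonality in $A_\infty$. The residual $r^*-P_{A_\infty}r^*$ is orthogonal to $A_\infty\supseteq A_\ell$, so Pythagoras gives the exact identity $\delta_\ell^2=\delta_\infty^2+\tau_\ell^2$. This already implies the stated excess bound, because $\tau_\ell^2\le L^2\mathrm{MSE}_\ell+2\delta_\infty\tau_\ell+\tau_\ell^2$. To keep the advertised form, I will also combine the Pythagorean identity, the saturated bound, and $\iota_\ell^2\le(\delta_\infty+\tau_\ell)^2$, then subtract $\delta_\infty^2$. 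That route yields exactly $L^2\mathrm{MSE}_\ell+2\delta_\infty\tau_\ell+\tau_\ell^2$.

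The main care points are the closedness of $A_\ell$, which makes the distances attained, and checking that $\hat x_\ell$ is prefix-measurable, which holds by the greedy encoding \eqref{eq:rq_encode}.
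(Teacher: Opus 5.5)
Your proof is correct. For the three claims in \eqref{eq:depth-decomposition} it follows the paper closely. You use the same Pythagorean split through $A_\ell\subseteq S_\ell$. For $\iota_\ell$ you use the same ingredients: nonexpansiveness of $P_{S_\ell}$, which fixes $A_\ell$, plus a triangle inequality through $P_{A_\infty}r^*$, merely reordered via $\iota_\ell\le\delta_\ell$.

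Three differences are worth recording.

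\emph{Saturated bound.} The paper never evaluates $r^*$ at $\hat x_\ell$. It uses $\operatorname{Var}(r^*(x)\mid c)=\tfrac12\mathbb{E}[(r^*(x)-r^*(x'))^2\mid c]$ with $x,x'$ drawn from the same prefix cell. So Lipschitzness is only applied between points of the support, and it gives the slightly tighter intermediate $L^2\,\mathbb{E}[\operatorname{Var}(x\mid z_{1:\ell})]$ before passing to $\mathrm{MSE}_\ell$. Your plug-in candidate $r^*(\hat x_\ell)$ is shorter. However, it needs $r^*$ to be defined and $L$-Lipschitz at centroid sums, which may lie off the support. This is harmless if $r^*$ is Lipschitz on all of $\mathbb{R}^d$, or after a McShane extension.

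\emph{Exact identity.} Your identity $\delta_\ell^2=\delta_\infty^2+\tau_\ell^2$ is correct. The $A_\ell$ are nested, so $A_\infty$ is a closed subspace containing $A_\ell$, and $r^*-P_{A_\infty}r^*\perp A_\infty$. It is strictly sharper than \eqref{eq:depth-excess}: the $L^2\mathrm{MSE}_\ell$ and $2\delta_\infty\tau_\ell$ terms are unnecessary there. It also yields $\delta_\ell\le\delta_\infty+\tau_\ell$ directly, so the corollary's geometric rate needs only the $\tau_\ell$ hypothesis. What the paper's route buys instead is the interpretive split of the excess into a saturated-quantization piece and an additive-interaction piece.

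\emph{Final step.} Your stated reason for not ``simply adding'' the bounds is mistaken. Squaring $\iota_\ell\le\delta_\infty+\tau_\ell$ produces a $\delta_\infty^2$ that cancels exactly against the $-\delta_\infty^2$ on the left. That is precisely the paper's one-line derivation, and it is the route you end up using anyway.
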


\begin{corollary}[Conditional geometric decay]\label{cor:depth-rate}
If $\mathrm{MSE}_\ell\leq M_0\rho_x^\ell$ and
$\tau_\ell\leq C_\tau\rho_\tau^\ell$, with
$\rho_x,\rho_\tau\in(0,1)$, then
\begin{equation}
\delta_\ell
\leq \delta_\infty
 +L\sqrt{M_0}\,\rho_x^{\ell/2}
 +C_\tau\rho_\tau^\ell
=\delta_\infty+O(\rho^\ell),
\end{equation}
where $\rho=\max\{\sqrt{\rho_x},\rho_\tau\}<1$.
\end{corollary}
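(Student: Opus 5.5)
The statement to prove is Corollary~\ref{cor:depth-rate}. The plan is to start from the exact decomposition in Theorem~\ref{thm:depth-rate}, rather than the squared excess bound \eqref{eq:depth-excess}, because a bound on $\delta_\ell$ itself comes out most cleanly that way. By \eqref{eq:depth-decomposition} we have $\delta_\ell^2=(\delta_\ell^{\rm sat})^2+\iota_\ell^2$. Applying $\sqrt{a^2+b^2}\le a+b$ for $a,b\ge 0$ gives
\[
\delta_\ell\le\delta_\ell^{\rm sat}+\iota_\ell .
\]

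Next we substitute the two remaining bounds of \eqref{eq:depth-decomposition}. The first gives $\delta_\ell^{\rm sat}\le L\sqrt{\mathrm{MSE}_\ell}$. The second gives $\iota_\ell\le\delta_\infty+\tau_\ell$. This yields
\[
\delta_\ell\le\delta_\infty+L\sqrt{\mathrm{MSE}_\ell}+\tau_\ell .
\]

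Now we insert the geometric hypotheses. The bound $\mathrm{MSE}_\ell\le M_0\rho_x^\ell$ implies $\sqrt{\mathrm{MSE}_\ell}\le\sqrt{M_0}\,\rho_x^{\ell/2}$, since square roots are monotone. The bound $\tau_\ell\le C_\tau\rho_\tau^\ell$ is used directly. Together these give the displayed inequality
\[
\delta_\ell\le\delta_\infty+L\sqrt{M_0}\,\rho_x^{\ell/2}+C_\tau\rho_\tau^\ell .
\]

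For the $O(\rho^\ell)$ form, set $\rho=\max\{\sqrt{\rho_x},\rho_\tau\}$. Then $\rho_x^{\ell/2}=(\sqrt{\rho_x})^\ell\le\rho^\ell$ and $\rho_\tau^\ell\le\rho^\ell$. The two correction terms are therefore bounded by $(L\sqrt{M_0}+C_\tau)\rho^\ell$. Finally, $\rho<1$ because $\rho_x,\rho_\tau\in(0,1)$ and hence $\sqrt{\rho_x}<1$.

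There is no real obstacle. The only subtle point is that one could instead start from \eqref{eq:depth-excess} and take square roots. That route controls $\delta_\ell-\delta_\infty$ by $\sqrt{L^2\mathrm{MSE}_\ell+2\delta_\infty\tau_\ell+\tau_\ell^2}$, which gives only $O(\rho_\tau^{\ell/2})$ in the $\tau$ term whenever $\delta_\infty>0$. The additive triangle-type step $\sqrt{a^2+b^2}\le a+b$ avoids this loss and gives the stated rate.
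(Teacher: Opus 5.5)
Your proof is correct and matches the paper's argument: combine the Pythagorean identity of Theorem~\ref{thm:depth-rate} with $\sqrt{u^2+v^2}\le u+v$ to get $\delta_\ell\le L\sqrt{\mathrm{MSE}_\ell}+\delta_\infty+\tau_\ell$, then substitute the two geometric hypotheses with $\rho=\max\{\sqrt{\rho_x},\rho_\tau\}$. Your closing remark on why naively taking square roots in \eqref{eq:depth-excess} would lose rate in the $\tau_\ell$ term is a sensible aside that the paper does not make.
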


\noindent The proof is in
Appendix~\ref{app:fixed_depth_proofs}.  Reconstruction controls
the saturated $b^\ell$-cell benchmark, while $\tau_\ell$
measures the additional interaction tail imposed by the compact
$O(\ell b)$ additive readout.  Thus geometric reconstruction
decay alone is insufficient; geometric additive-tail decay is
the explicit additional condition.  For multiple arms,
$\mathbb{E}\max_a|e_a|\leq
(\sum_a\mathbb{E}e_a^2)^{1/2}$, so the per-arm result transfers
to Definition~\ref{def:delta} up to a $\sqrt K$ factor.  A
Lipschitz output link similarly transfers a pre-link score bound
after multiplication by its Lipschitz constant.

Figure~\ref{fig:decay_summary} is consistent with the conditional
geometric law: fitting $a+b\rho^\ell$ to regret rates at
depths~$1$--$5$ yields median $R^2=0.99$ across 39
(dataset, method) pairs.  This finite-depth evidence does not
establish the assumptions of Corollary~\ref{cor:depth-rate}
universally.  Within-cell reward variance contracts with median
per-level ratio $0.83$ (Appendix~\ref{app:depth_ablation}).

\begin{figure}[t]
\centering
\includegraphics[width=0.85\columnwidth]{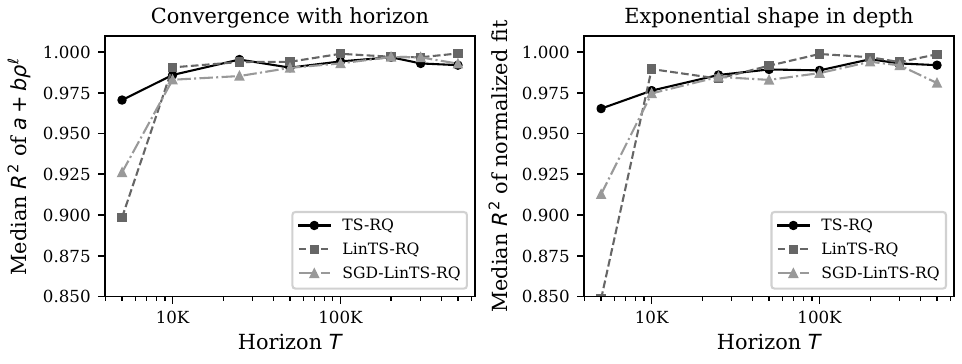}
\caption{Empirical behavior consistent with the conditional
geometric depth-rate law.
\textbf{Left:} Median $R^2$ of the exponential fit
$a + b\rho^\ell$ improves with horizon~$T$.
\textbf{Right:} Normalized regret decay
$(\Delta_\ell - \Delta_5)/(\Delta_1 - \Delta_5)$
at terminal~$T$ for each dataset; curves
cluster along exponential trajectories.}
\label{fig:decay_summary}
\end{figure}

\subsection{Shadow false-promotion control}
\label{sec:analysis-shadow}

\begin{theorem}[Shadow false-promotion control]
\label{thm:shadow}
The probability that the shadow is promoted despite having no
lower expected squared prediction error than the active model,
under the active policy's data distribution over the tested
window, is at most~$\alpha$ over the full horizon.
\end{theorem}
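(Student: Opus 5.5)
The plan is to build the test from a martingale difference sequence and then take a union bound over all tests. At each round $t$ in the trailing window $W$ preceding a check time $T_j\in\{T_0,2T_0,4T_0,\ldots\}$, we form
\[
D_t = (r_t-\hat y^{\mathrm{act}}_t)^2-(r_t-\hat y^{\mathrm{sh}}_t)^2 .
\]
Here both predictions are computed from model states that are $\mathcal F_{t-1}$-measurable; these are the states \emph{before} the update on line~\ref{alg:update}, so predictions are prequential. Write $\mu_t=\mathbb E[D_t\mid\mathcal F_{t-1},x_t,a_t]$. The null hypothesis ``the shadow has no lower expected squared error under the active policy's data distribution over the window'' is then formalized as $\sum_{t\in W}\mu_t\le 0$. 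Then $M_n=\sum_{t\le n,\,t\in W}(D_t-\mu_t)$ is a martingale. Because $r_t\in\{0,1\}$ and both predictions can be clipped to $[0,1]$ through $\sigma$, we get $|D_t|\le 1$ and hence $|D_t-\mu_t|\le 2$.

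The core step is applying Freedman's inequality, in its time-uniform form, to $M_n$. The predictable quadratic variation $V_n=\sum \mathrm{Var}(D_t\mid\mathcal F_{t-1})$ is not known, so I would use an empirical-variance version. One option is the self-normalized/stitched Freedman bound, which holds for all $V$ simultaneously. The simpler option is a deterministic bound $V_n\le 4|W|$. Either way we get a threshold $c(\beta,|W|,\hat V)$ such that
\[
\Pr\Big(\textstyle\sum_{t\in W}D_t\ge c\Big)\le\beta
\]
whenever $\sum\mu_t\le 0$. This holds because on the null $\sum D_t\ge c$ forces $M\ge c$. The test promotes exactly when $\sum_{t\in W}D_t\ge c$ with $\beta=\alpha/(n_c(\ell_{\max}-1))$, which is the level in Algorithm~\ref{alg:rqbandit}.

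Next comes the union bound over the horizon. The check times are pre-committed, so their number $n_c$ is fixed in advance; for horizon $T$ it is $\lceil\log_2(T/T_0)\rceil+1$. If $T$ is unknown, I would instead allocate $\beta_j=\alpha 6/(\pi^2 j^2(\ell_{\max}-1))$. There are at most $\ell_{\max}-1$ promotion slots. A false promotion is the event that at some check $j$, with some active depth, the null holds yet the test fires. Each such (check, depth) event has probability at most $\beta$, even conditionally on the past, by optional stopping. Summing over at most $n_c(\ell_{\max}-1)$ events gives total probability at most $\alpha$.

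The main obstacle is adaptivity. The active depth at check $j$, and the data distribution itself, depend on earlier test outcomes and on the Thompson-sampling randomness. The null is also a random, path-dependent statement, not a fixed hypothesis. I would handle this by conditioning on $\mathcal F_{T_j-|W|}$, the sigma-field at the start of the window, and applying the Freedman bound conditionally. The conditional-validity bound then gives, for every check and every depth, $\Pr(\text{fire}\wedge\text{null}\mid\mathcal F_{T_j-|W|})\le\beta$, and the tower property lets the union bound go through. A further subtlety is that the shadow is also updated on in-window data. Prequential evaluation keeps each $D_t$ predictable-minus-fresh, so the martingale structure survives. The statement should therefore be read as ``no lower \emph{cumulative conditional} expected squared error over the tested window'', and I would state that interpretation explicitly in the proof.
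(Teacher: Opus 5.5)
Your proposal is correct and follows the paper's proof. The centered paired squared-error differences form a bounded martingale, Freedman's inequality with the worst-case variance bound $4W$ gives the threshold, the null forces the noise term past that threshold, and a Bonferroni union bound yields $\alpha$. The only substantive difference is in the multiplicity count: the paper counts realized tests as $N\le n_c+(\ell_{\max}-1)$ and uses level $\alpha/N$, whereas you union-bound over the $n_c(\ell_{\max}-1)$ fixed (check, depth) pairs, which matches the per-test level in Algorithm~\ref{alg:rqbandit}. Together with your prequential predictions and conditioning on $\mathcal F_{T_j-|W|}$, this handles the adaptively chosen tests, including cascaded retests on the same window, more carefully than the paper's informal step $P(\text{promote}\mid\mu_W\le 0)\le P(M_W>a^*)$.
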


\noindent We test the paired squared-error differences via Freedman's
martingale inequality~\cite{freedman1975tail}, which applies
without requiring i.i.d.\ observations.
Given a finite planning horizon $T$ (or an upper bound on the
deployment length), the number of check times per depth is
$n_c = \lfloor\log_2(T/T_0)\rfloor$, known before the first
observation. The per-test significance level is Bonferroni-corrected
over all $n_c \cdot (\ell_{\max}{-}1)$ comparisons, giving total
false-promotion probability $\le \alpha$.
Proof in Appendix~\ref{app:promotion_proofs}.

\section{Experiments}\label{sec:experiments}

\subsection{Setup}\label{sec:exp-setup}
Following the supervised-to-bandit
protocol~\cite{riquelme2018deep}, we convert datasets
by assigning reward~$1$ for the correct class and~$0$
otherwise.
We evaluate on 13 tabular datasets from OpenML and UCI with
$N$ ranging from $130\text{K}$ to $11\text{M}$, feature dimension
$d \in [3, 90]$, and $K \in [2, 104]$ arms
(Table~\ref{tab:datasets} in Appendix~\ref{app:datasets}).
Following the codebook training protocol (Section~\ref{sec:setting}),
$n_{\mathrm{rq}}{=}50\text{K}$ contexts are held out for codebook
training; the bandit runs on the remaining pool with contexts drawn uniformly
at random without replacement, each seen once,
so the horizon $T = N - n_{\mathrm{rq}}$.
All methods use $b{=}16$ centroids per level, initial depth
$\ell_0{=}2$, $\ell_{\max}{=}8$, $\nu{=}0.1$, $\eta{=}0.5$,
$\lambda{=}1.0$, shadow $\alpha{=}0.05$, $W{=}1000$ (trailing window),
$T_0 = 50 \cdot b \cdot K$,
and 30 bandit seeds per configuration.
For LinTS-RQ, we scale the per-level regularization as
$\lambda_i = \lambda \cdot 0.25^{i-1}$ and add numerical jitter
$10^{i-1} \times 10^{-6}$ for Cholesky stability.
We fix all hyperparameters across all datasets---no per-dataset tuning.
Code is available at
\url{https://github.com/atavory/rqb}.
Each seed initializes all random number generators (arm selection,
posterior sampling) deterministically.

We compare six methods: three baselines (TS, SGD-LinTS, LinTS)
and their RQ variants with adaptive depth (TS-RQ, SGD-LinTS-RQ,
LinTS-RQ). We additionally compare against three nonlinear baselines:
Doubling-XGB (XGBoost: 100 trees, max depth~6, learning rate~0.1,
subsample~0.8, colsample\_bytree~0.8, $\epsilon{=}0.1$
greedy, doubling retrain schedule),
Doubling-NeuralCB (2-layer MLP, 128 hidden units, ReLU activations,
Adam optimizer with learning rate $10^{-3}$, batch size~256,
doubling retrain schedule, LinTS head),
and TabNet-LinTS~\cite{arik2021tabnet} (unsupervised
TabNet pre-training with default hyperparameters:
$N_{\mathrm{steps}}{=}3$, $N_a{=}N_d{=}8$, $\gamma{=}1.3$,
$\lambda_{\mathrm{sparse}}{=}10^{-3}$; encoder frozen on the same 50K
holdout, LinTS on learned features).
All representation-based methods (RQ, TabNet, Neural-Linear) use the
same 50K unlabeled holdout; the baselines (TS, LinTS, SGD-LinTS)
operate without any representation layer by design---they have no
mechanism to consume unlabeled data---and evaluate on the same
$T$-context pool, ensuring identical evaluation conditions.
We fix all hyperparameters---both for RQ methods and for the nonlinear
baselines (XGBoost, NeuralCB, TabNet)---across all 13
datasets with no per-dataset tuning.

We report regret \emph{rate} (cumulative regret divided by $T$)
throughout, making results length-invariant across datasets.
Win/loss calls use 95\% CIs from 30 seeds; a method ``wins'' only
when the CI excludes zero.

\subsection{Results}\label{sec:exp-main}

\noindent\textbf{Memory--quality tradeoff.}
Figure~\ref{fig:pareto} places all methods on a memory--regret
scatter, and Table \ref{tab:memory} gives the details. Regret is normalized by the context-free oracle rate
($1 - \max_a P(Y{=}a)$), so 1.0 corresponds to majority-class
guessing. Values above 1.0 occur when squared-loss exploration
produces overconfident wrong predictions on skewed datasets.
Doubling-XGB and Doubling-NeuralCB match LinTS-RQ
regret but require $O(T)$ replay memory (Table~\ref{tab:memory}).

\begin{table}[t]
\centering
\caption{Model memory at shadow-promoted depths ($b{=}16$). XGB replay
buffer grows $O(T)$; all RQ methods are fixed $O(1)$.}
\label{tab:memory}
\begin{tabular}{@{}l r r r@{}}
\toprule
& \textbf{TS-RQ} & \textbf{LinTS-RQ} & \textbf{XGB replay} \\
\midrule
higgs ($T{=}5\text{M}$) & 12\,KB & 620\,KB & 1.1\,GB \\
susy ($T{=}5\text{M}$) & 22\,KB & 176\,KB & 800\,MB \\
covertype ($T{=}530\text{K}$) & 60\,KB & 10\,MB & 51\,MB \\
\bottomrule
\end{tabular}
\end{table}
\smallskip

\noindent\textbf{Within-family comparisons.}
Figure~\ref{fig:main_results} shows the regret gap of each RQ method
relative to its within-class baseline, and Table \ref{tab:main_results} gives the details.
LinTS-RQ wins significantly on 10 of 13 datasets, with gaps
up to $96\%$ (skin\_segmentation);
SGD-LinTS-RQ wins on 12 of 13;
TS-RQ wins on 11 of 13 versus context-free TS.
The common failure case is year\_prediction ($d{=}90$,
$K{=}84$), where all three RQ variants show small losses
($+0.4\%$ to $+4.5\%$); we discuss this in the data
sufficiency analysis below.

\smallskip
\noindent\textbf{Nonlinear baselines.}\label{sec:exp-xgb}
LinTS-RQ achieves regret competitive with Doubling-XGB (7W/4L/2T).
The key differentiator is memory: at $T{=}5\text{M}$,
LinTS-RQ uses 620\,KB vs.\ XGB's 1.1\,GB---a gap that grows linearly
with $T$. Against TabNet-LinTS (same 50K holdout), LinTS-RQ wins on
11 of 13 datasets (11W/2L), with losses only on hepmass
($+0.5\%$) and year\_prediction ($+1.7\%$), showing that
unsupervised $k$-means produces a partition that matches or
exceeds TabNet's neural embedding, while the RQ codebook
trains in seconds via $k$-means versus minutes for TabNet
pretraining.
Per-step latency is also low: TS-RQ runs at
$50$--$750\,\mu\text{s}$/step across all datasets
(Appendix~\ref{app:timing}).

\begin{table}[t]
\centering
\caption{Win/loss/tie (W/L/T) summary across 13 datasets
(30 seeds, 95\% CIs). Negative gap = RQ method has lower regret.}
\label{tab:main_results}
\begin{tabular}{@{}l c c c@{}}
\toprule
\textbf{Comparison} & \textbf{W/L/T} & \textbf{Best win} & \textbf{Worst loss} \\
\midrule
LinTS-RQ vs LinTS & 10W/1L/2T & skin\_seg $-96\%$ & year\_pred $+4.5\%$ \\
SGD-RQ vs SGD-LinTS & 12W/1L & skin\_seg $-97\%$ & year\_pred $+0.7\%$ \\
TS-RQ vs TS & 11W/2L & kddcup99 $-97\%$ & year\_pred $+0.4\%$ \\
\bottomrule
\end{tabular}
\end{table}

\smallskip

\noindent\textbf{Data sufficiency.}
\label{sec:exp-limitation}
LinTS-RQ maintains a $d \times d$ covariance matrix for each of the $bK$
centroid-arm pairs, so the total parameter count scales as $bKd^2$.
The ratio $N / (bKd^2)$ measures contexts per parameter and is computable
before any bandit interaction (Figure~\ref{fig:sufficiency}).
On 12 of 13 datasets this ratio exceeds $1.1$ and LinTS-RQ wins;
on year\_prediction ($d{=}90$, $K{=}84$, ratio${=}0.04$), LinTS-RQ
degrades even at depth~1.
TS-RQ needs only $O(1)$ parameters per cell (a count and a reward sum),
so it is unaffected: it matches the TS baseline on year\_prediction
($93.0\%$ vs $93.4\%$).
The failure is structural, not fixable by tuning: LinTS-RQ degrades
with depth ($96.3\%$ at $\ell{=}1$, $97.4\%$ at $\ell{=}4$) as
each level fragments the already-sparse data further, while
TS-RQ stays flat at $93.0\%$ across all depths.
On synthetic data, LinTS-RQ wins at all ratios
$\ge 14$ (Appendix~\ref{app:synthetic}).

\begin{figure}[t!]
\centering
\includegraphics[width=0.85\columnwidth]{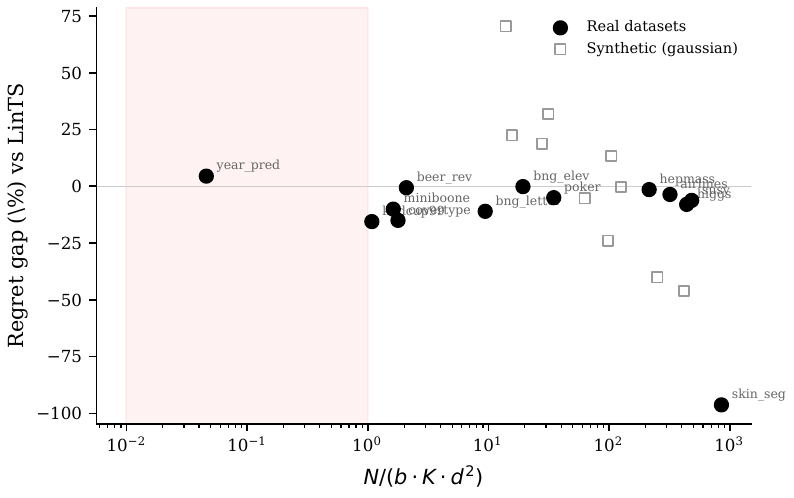}
\caption{Data sufficiency diagnostic: regret gap of LinTS-RQ
vs.\ LinTS as a function of $N/(bKd^2)$.
All real datasets with ratio~$> 1$ (outside the shaded region)
are wins or ties; year\_prediction (ratio~$= 0.04$) is the
sole loss. Synthetic Gaussian data (gray squares) confirm
the threshold (Appendix~\ref{app:synthetic}).}
\label{fig:sufficiency}
\end{figure}

\smallskip

\noindent\textbf{Effective rank.}
\label{sec:exp-correlation}
On datasets with effective rank
($\mathrm{tr}(\Sigma)/\lambda_{\max}(\Sigma)$) below $1.5$
(skin\_segmentation, miniboone),
SGD-LinTS performs \emph{worse than context-free TS}: the diagonal
covariance approximation spreads its $d$ parameters uniformly across
coordinates, wasting $d{-}1$ of them on noise when most variance lies
on a single axis.
TS-RQ avoids this failure mode entirely---the codebook geometry
captures dominant directions through centroid placement, without
estimating any covariance---and achieves $2.6\%$ and $14.7\%$
normalized regret on these two datasets
(Appendix~\ref{app:correlation}).

\smallskip

\noindent\textbf{Depth adaptation.}\label{sec:exp-depth}
Table~\ref{tab:depth_ablation} compares TS-RQ at fixed depths
against shadow-promoted adaptive depth.
On covertype, shadow promotion discovers depth~4 and gains $3.4$
percentage points over fixed $\ell{=}2$.
On higgs, the benefit beyond depth~2 is marginal, and the adaptive
method correctly settles at depth~3 with matching regret:
the marginal gain from depth~4 falls within the noise floor
of the Freedman test, so the shadow is never promoted.

\begin{table}[t]
\centering
\caption{TS-RQ regret rate (\%) at fixed vs.\ adaptive depth.}
\label{tab:depth_ablation}
\begin{tabular}{@{}l r r r@{}}
\toprule
& \textbf{$\ell{=}1$} & \textbf{$\ell{=}2$}
& \textbf{Adaptive} \\
\midrule
covertype  & 48.3\% & 42.2\% & 38.8\% (depth 4) \\
higgs      & 45.4\% & 43.6\% & 43.7\% (depth 3) \\
\bottomrule
\end{tabular}
\end{table}

\section{Related Work}\label{sec:related}

\noindent\textbf{Linear and GLM bandits.}
LinTS~\cite{agrawal2013thompson} and
LinUCB~\cite{li2010contextual,abbasi2011improved,dani2008stochastic}
achieve $O(d\sqrt{KT\log T})$ regret under linear rewards.
The misspecification bound~\cite{lattimore2020learning} gives $\Delta \cdot T$
linear regret under misspecification---the basis for our
asymptotic analysis (Appendix~\ref{app:fixed_depth_proofs}).

\smallskip
\noindent\textbf{Neural and tree-based bandits.}
NeuralUCB~\cite{zhou2020neural} and Neural
TS~\cite{zhang2021neural} require per-step gradient and
inverse-Hessian computation; efficient approximations
(EE-Net~\cite{ban2022eenet}, Neural-Linear~\cite{riquelme2018deep})
reduce to LinTS on last-layer features.
TETS~\cite{nilsson2024tets} combines XGBoost with TS but must refit
on accumulated history, incurring $O(T^2)$ cumulative cost.
Our offline codebook avoids both: no gradients, no refitting.
\smallskip
\noindent\textbf{Partition-based bandits.}
The X-armed bandit framework~\cite{bubeck2011x} and the Zooming
algorithm~\cite{kleinberg2008multi,slivkins2014contextual} adaptively
refine partitions to balance resolution against sample complexity.
Nonparametric histogram bandits~\cite{rigollet2010nonparametric,perchet2013multi}
achieve regret $O(T^{(d+2)/(d+1)})$ with flat partitions.
Tile coding~\cite{sutton2018reinforcement} uses additive partitions
in RL but with uniform tilings; our data-driven RQ codebook adapts
to $P(X)$. The additive parameterization uses $\ell b$ parameters
for $b^\ell$ cells---a potentially exponential compression.

\smallskip
\noindent\textbf{Model selection and quantization.}
CORRAL~\cite{agarwal2017corralling} and Regret
Balancing~\cite{pacchiano2020regret} run multiple base learners in
parallel. Our shadow promotion monitors one additional depth level
sequentially without influencing arm selection (Theorem~\ref{thm:shadow}).
Product quantization~\cite{jegou2010product}, residual
quantization~\cite{chen2010approximate}, and VQ-VAE~\cite{van2017neural}
compress representations; we show the same structure compresses
\emph{exploration} ($b^\ell$ cells from $\ell b$ parameters).
Self-supervised methods (SCARF~\cite{bahri2022scarf},
VIME~\cite{yao2020vime}, SubTab~\cite{ucar2021subtab},
TabNet~\cite{arik2021tabnet}) learn representations from unlabeled
data; our codebook achieves this via unsupervised $k$-means.

\section{Conclusion}\label{sec:conclusion}

We repurposed RQ codebooks as a representation layer for
contextual bandits, enabling gradient-boosted reward models with
fixed $O(1)$ memory. RQ variants improve on most datasets at
orders-of-magnitude less memory than replay-based methods, while
TS-RQ remains robust when LinTS-RQ's covariance starves.
Extensions to continuous rewards, online codebook adaptation,
and alternative per-level features (PCA, neural-linear) are
straightforward.

\section{Limitations}\label{sec:limitations}

The data sufficiency condition $N/(bKd^2) \gg 1$ is the sharpest
limitation: on year\_prediction ($d{=}90$, $K{=}84$, ratio~$0.04$),
LinTS-RQ degrades even at depth~1; a $p$-dimensional projection
($p \ll d$) would relax this to $N/(bKp^2)$.
The linear variants use squared loss for binary rewards (P\'olya-Gamma
augmentation could eliminate this).
Shadow promotion provides false-promotion control, though
post-promotion performance depends on the new policy's data
distribution (Remark~\ref{rem:covariate}).
The fixed-depth asymptotic analysis
(Appendix~\ref{app:fixed_depth_proofs}) is conditional on
base-learner convergence properties; the appendix makes these
assumptions explicit and verifies them for our model classes.
The codebook is trained via unsupervised $k$-means, which
minimizes reconstruction error rather than reward prediction;
a reward-aware partition (e.g., CART splits) could reduce the
misspecification $\Delta_\ell$ directly, at the cost of requiring
labeled data for codebook construction.
The codebook requires an offline holdout; in cold-start settings
it may degrade.
As with all bandit methods, reward-signal biases may be amplified; our method neither exacerbates nor mitigates this.

\section{Broader Impact}\label{sec:broader_impact}

This work introduces a general-purpose bandit representation and
does not target a specific application domain.
Standard fairness and privacy concerns for bandit algorithms
apply; we do not foresee risks specific to residual quantization.

\section*{Acknowledgments and Disclosures}

This work received no external funding. The authors declare no
competing interests.

\bibliographystyle{plainnat}
\bibliography{references}

\newpage
\appendix

\section{Base Learner Details}\label{app:model_variants}

This appendix specifies the per-cell operations for each model
class~$\mathcal{M}$ in Algorithm~\ref{alg:rqbandit}.
All three variants share the same outer loop; they differ only in how
each cell $\mathcal{M}[i, c, a]$ maintains its posterior and produces
predictions.
Each cell exposes three operations:
\texttt{sample}($x'$) returns a Thompson-sampling draw (used for arm
selection in the \textbf{Score} step);
\texttt{mean}($x'$) returns the posterior mean prediction (used to
build the cumulative prediction $\hat{y}_i$ in the \textbf{Targets}
step);
\texttt{update}($x'$, $\tilde{r}$) incorporates a new observation.
Here $x' = x^{(i-1)}$ is the context residual
from~\eqref{eq:rq_encode} and $\tilde{r}$ is the pseudo-target from
Algorithm~\ref{alg:rqbandit}, line~\ref{alg:targets}.
For the linear variants (SGD-LinTS-RQ, LinTS-RQ), a constant
intercept feature is appended: $x' \leftarrow [x^{(i-1)}; 1]$,
so the effective per-centroid dimension is $d{+}1$.
This ensures $\|\phi_i(x)\| \ge 1$
(Assumption~\ref{asm:features}), which is needed for the
exploration guarantee even when deep-level residuals are small.

\begin{algorithm}[H]
\caption{TS-RQ cell operations (count/sum)}
\label{alg:ts_rq}
\begin{algorithmic}[1]
\Require Regularization $\lambda$; exploration scale $\nu$
\State \textbf{State:} count $n \leftarrow 0$;\; sum $s \leftarrow 0$
\Function{sample}{$x'$} \Comment{$x'$ is ignored}
    \State $\hat{\mu} \leftarrow s\, / (\lambda + n)$
    \State \Return sample from
           $\mathcal{N}\!\left(\hat{\mu},\;
           \nu^2 / (\lambda + n)\right)$
\EndFunction
\Function{mean}{$x'$} \Comment{$x'$ is ignored}
    \State \Return $s\, / (\lambda + n)$
\EndFunction
\Function{update}{$x',\, \tilde{r}$} \Comment{$x'$ is ignored}
    \State $n \leftarrow n + 1$;\; $s \leftarrow s + \tilde{r}$
\EndFunction
\end{algorithmic}
\end{algorithm}

\begin{algorithm}[H]
\caption{SGD-LinTS-RQ cell operations (diagonal linear)}
\label{alg:sgd_lints_rq}
\begin{algorithmic}[1]
\Require Regularization $\lambda$; exploration scale $\nu$
\State \textbf{State:} diagonal precision
       $p \in \mathbb{R}^{d+1} \leftarrow \lambda \mathbf{1}$;\;
       sufficient statistic $f \in \mathbb{R}^{d+1} \leftarrow \mathbf{0}$
\Comment{$x' \leftarrow [x'; 1]$ (intercept appended)}
\Function{sample}{$x'$}
    \State $\hat{\theta} \leftarrow f \oslash p$
        \Comment{element-wise division}
    \State $\tilde{\theta} \sim \mathcal{N}\!\left(\hat{\theta},\;
           \nu^2\, \mathrm{diag}(p)^{-1}\right)$
    \State \Return $\tilde{\theta}^\top x'$
\EndFunction
\Function{mean}{$x'$}
    \State \Return $(f \oslash p)^\top x'$
\EndFunction
\Function{update}{$x',\, \tilde{r}$}
    \State $p \leftarrow p + x' \odot x'$
        \Comment{element-wise}
    \State $f \leftarrow f + \tilde{r} \cdot x'$
\EndFunction
\end{algorithmic}
\end{algorithm}

\begin{algorithm}[H]
\caption{LinTS-RQ cell operations (full-covariance LinTS)}
\label{alg:lints_rq}
\begin{algorithmic}[1]
\Require Regularization $\lambda_i = \lambda \cdot 0.25^{i-1}$;
         exploration scale $\nu$
\State \textbf{State:} inverse precision
       $B^{-1} \in \mathbb{R}^{(d+1) \times (d+1)}
       \leftarrow \lambda_i^{-1} I$;\;
       $f \in \mathbb{R}^{d+1} \leftarrow \mathbf{0}$
\Comment{$x' \leftarrow [x'; 1]$ (intercept appended)}
\Function{sample}{$x'$}
    \State $\hat{\theta} \leftarrow B^{-1} f$
    \State $\tilde{\theta} \sim \mathcal{N}\!\left(\hat{\theta},\,
           \nu^2 B^{-1}\right)$
    \State \Return $\tilde{\theta}^\top x'$
\EndFunction
\Function{mean}{$x'$}
    \State \Return $(B^{-1} f)^\top x'$
\EndFunction
\Function{update}{$x',\, \tilde{r}$}
    \Comment{Sherman--Morrison on $B^{-1}$}
    \State $v \leftarrow B^{-1} x'$
    \State $B^{-1} \leftarrow B^{-1}
           - v\, v^\top / (1 + x'^\top v)$
    \State $f \leftarrow f + \tilde{r} \cdot x'$
\EndFunction
\end{algorithmic}
\end{algorithm}

\section{Base Learner Properties}\label{app:base_learner_axioms}

Algorithm~\ref{alg:rqbandit} is parameterized by a per-cell base
learner~$\mathcal{M}$.
Theorem~\ref{thm:asymptotic} (\S\ref{sec:analysis}) establishes
asymptotic regret under three properties
of~$\mathcal{M}$; the full proof is in
Appendix~\ref{app:fixed_depth_proofs}.
We state these properties below and establish them for each variant.
We use the notation from Appendix~\ref{app:fixed_depth_proofs}:
$\phi_i(x)$ is the level-$i$ feature vector,
$\mu_a(x_t)$ the score mean~\eqref{eq:score_mean},
$\mathrm{Var}(s_a \mid x_t, \mathcal{F}_{t-1})$ the score
variance~\eqref{eq:score_var},
$T_{a,c,i}(t)$ the visit count for arm~$a$ at centroid~$c$
and level~$i$, and
$\tilde{r}_{i,t}$ the per-level target~\eqref{eq:level_target}.

\begin{description}
\item[Property~1 (Eventual boundedness).]
Score means are eventually bounded almost surely:
\begin{equation}\label{eq:axiom1}
    \exists_{t_0, B}\; \forall_{a,\, t > t_0}\;
    |\mu_a(x_t)| \le B
    \qquad \text{a.s.}
\end{equation}

\item[Property~2 (Starvation $\Rightarrow$ positive variance).]
If arm~$a$ visits centroid~$c$ at level~$i$ at most $M$ times
($T_{a,c,i}(t) \le M$ for all~$t$), then
\begin{equation}\label{eq:axiom2}
    \mathrm{Var}(s_a \mid x_t, \mathcal{F}_{t-1})
    \ge \sigma_{\min}^2 > 0
    \qquad \forall\, t \text{ with } k_i(x_t) = c,
\end{equation}
where $\sigma_{\min}^2$ depends on $M, \lambda, \nu, C$
but not on~$t$.

\item[Property~3 (Robust consistency).]
Suppose all centroids at level~$i$ are visited infinitely often
by arm~$a$: $T_{a,c,i}(t) \to \infty$ for all~$c$ with
$P(k_i(x) = c) > 0$.
Suppose the targets have vanishing perturbation:
$\tilde{r}_{i,t} = \tilde{r}_{i,t}^{\infty} + \delta_t$ with
$\delta_t \to 0$ a.s., where $\tilde{r}_{i,t}^{\infty}$ is the
stationary target under converged shallower levels.
Then predictions converge:
\begin{equation}\label{eq:axiom3}
    \phi_i(x)^\top \hat{\theta}_{a,i,t}
    \;\to\; \phi_i(x)^\top \theta_{a,i}^*
    \qquad \text{a.s.\ for all } x \text{ in the support},
\end{equation}
where $\theta_{a,i}^*$ is the best predictor of
$\tilde{r}_{i}^{\infty}$ from $\phi_i(x)$ in the function class
of~$\mathcal{M}$.
\end{description}

\subsection{Verification for LinTS-RQ}\label{sec:axioms_lints}

LinTS-RQ maintains a full-covariance precision
$V_{a,i,t} = \lambda_i I + \sum_{s \le t:\, a_s=a}
\phi_i(x_s)\,\phi_i(x_s)^\top$
and posterior mean
$\hat{\theta}_{a,i,t} = V_{a,i,t}^{-1}
\sum_{s:a_s=a} \phi_i(x_s)\,\tilde{r}_{i,s}$.

\smallskip
\noindent\textbf{Property~1.}
For any arm~$a$, level~$i$, and centroid~$c$:
if $T_{a,c,i}(t) \le M$ for all~$t$, the estimate
receives at most $M$ updates with bounded targets
($|\tilde{r}| \le 1$), so
$\|\hat{\theta}_{a,i,c}\| \le MC/\lambda$
(since $\|V^{-1}\| \le 1/\lambda$ and
$\|\sum \phi \tilde{r}\| \le MC$), giving
$|\hat{\theta}_{a,i,c}^\top \phi_i(x)|
\le MC^2/\lambda$, deterministically bounded.
If $T_{a,c,i}(t) \to \infty$, the strong law gives
$\hat{\theta}_{a,i,c} \to \theta_{a,i,c}^*$, which is finite.
Since the number of arm/centroid combinations is finite,
score means are eventually bounded.

\smallskip
\noindent\textbf{Property~2.}
If $T_{a,c,i}(t) \le M$ for all~$t$, the centroid-$c$
block of $V_{a,i,t}$ has at most $M$ rank-one updates,
giving the time-uniform bound
$V_{a,i,t}\big|_c \preceq (\lambda + MC^2) I$.
When $k_i(x_t) = c$:
\begin{align}
    \mathrm{Var}(s_a \mid x_t, \mathcal{F}_{t-1})
    &\ge \nu^2\, \phi_i(x_t)^\top V_{a,i,t}^{-1} \phi_i(x_t)
    \ge \frac{\nu^2}{\lambda + MC^2}
    \eqqcolon \sigma_{\min}^2 > 0.
\end{align}

\smallskip
\noindent\textbf{Property~3.}
Suppose $T_{a,c,i}(t) \to \infty$ for all centroids~$c$
and targets satisfy
$\tilde{r}_{i,s} = \tilde{r}_{i,s}^{\infty} + \delta_s$
with $\delta_s \to 0$.
The ridge estimator decomposes as
\begin{align}
    \hat{\theta}_{a,i,t}
    &= \underbrace{V_{a,i,t}^{-1}
       \sum_{s:a_s=a} \phi_i(x_s)\,
       \tilde{r}_{i,s}^{\infty}}_{\to\, \theta_{a,i}^*}
    \;+\; \underbrace{V_{a,i,t}^{-1}
       \sum_{s:a_s=a} \phi_i(x_s)\,
       \delta_s}_{\text{transient bias}}.
    \label{eq:lints_decomp}
\end{align}
The first term converges by standard ridge consistency
(stationary targets, $T_{a,c,i} \to \infty$).
For the transient bias, apply Cauchy--Schwarz under
$\|v\|_{M} = \sqrt{v^\top M\, v}$ with $M = V_{a,i,t}^{-1}$:
\begin{align}\label{eq:transient_cs_axiom}
    \biggl|\phi_i(x)^\top V_{a,i,t}^{-1}
    \sum_{s:a_s=a} \phi_i(x_s)\,\delta_s\biggr|
    \le
    \|\phi_i(x)\|_{V_{a,i,t}^{-1}}
    \cdot
    \biggl\|\sum_{s:a_s=a}
    \phi_i(x_s)\,\delta_s\biggr\|_{V_{a,i,t}^{-1}}.
\end{align}

\noindent\emph{First factor.}
Write
$\hat{\Sigma}_t = \frac{1}{T_a(t)}
\sum_{s:a_s=a} \phi_i(x_s)\,\phi_i(x_s)^\top$
and $\Sigma = \mathbb{E}[\phi_i(x)\phi_i(x)^\top \mid a]$.
Since $V_{a,i,t} = T_a(t)\bigl(
\tfrac{\lambda}{T_a(t)}\,I + \hat{\Sigma}_t\bigr)$:
\begin{align}
    \|\phi_i(x)\|_{V_{a,i,t}^{-1}}^2
    &= \phi_i(x)^\top V_{a,i,t}^{-1}\,\phi_i(x)
    \nonumber \\
    &= \frac{1}{T_a(t)}\;
       \phi_i(x)^\top
       \!\left(\tfrac{\lambda}{T_a(t)}\,I
       + \hat{\Sigma}_t\right)^{\!-1}
       \phi_i(x)
    \nonumber \\
    &\xrightarrow{\text{a.s.}}\;
       \frac{1}{T_a(t)}\;
       \phi_i(x)^\top \Sigma^{-1} \phi_i(x)
    \nonumber \\
    &\le \frac{c}{T_a(t)}
    \qquad \text{eventually},
\end{align}
where the third line uses
$\hat{\Sigma}_t \xrightarrow{\text{a.s.}} \Sigma$
(the strong law),
and $c = 2C^2 / \min_c \lambda_{\min}(\Sigma_c) < \infty$.
Each centroid block~$\Sigma_c$ is positive definite
under a mild nondegeneracy condition: the conditional
distribution of $\phi_i(x)$ given $k_i(x) = c$ has
full-rank covariance.
In practice, the intercept feature ensures at least
one nonzero direction, and the RQ residuals within
each centroid span the remaining coordinates when
$T_{a,c}$ is large enough.
If $\Sigma_c$ is rank-deficient, the regularized
estimator still converges in prediction space: since
$\phi_i(x)$ for centroid~$c$ lies in the column space
of~$\Sigma_c$, the prediction
$\phi_i(x)^\top \hat{\theta}$ converges regardless.
Also, $\|\phi_i(x)\| \le C$ by
Assumption~\ref{asm:features}.

\smallskip
\noindent\emph{Second factor.}
Write $\Phi \in \mathbb{R}^{bp \times T_a(t)}$ for the matrix
whose columns are $\phi_i(x_s)$ and
$\Delta \in \mathbb{R}^{T_a(t)}$ for the vector of
drifts~$\delta_s$.
By the push-through identity
$(\lambda I + AB)^{-1}A = A(\lambda I + BA)^{-1}$:
\begin{align}
    \biggl\|\sum_{s:a_s=a}
    \phi_i(x_s)\,\delta_s\biggr\|_{V_{a,i,t}^{-1}}^2
    &= \Delta^\top \Phi^\top
       (\lambda I + \Phi\Phi^\top)^{-1} \Phi\,\Delta
    \nonumber \\
    &= \Delta^\top
       \underbrace{(\lambda I + \Phi^\top\Phi)^{-1}
       \Phi^\top\Phi}_{\preceq\, I}\,\Delta
    \nonumber \\
    &\le \|\Delta\|^2
    = \sum_{s:a_s=a} \delta_s^2.
\end{align}

\noindent\emph{Combining.}
\begin{align}
    \biggl|\phi_i(x)^\top V_{a,i,t}^{-1}
    \sum_{s:a_s=a} \phi_i(x_s)\,\delta_s\biggr|^2
    \le \frac{c}{T_a(t)} \sum_{s:a_s=a} \delta_s^2.
\end{align}
Since $\delta_s \to 0$ and $T_a(t) \to \infty$,
the Ces\`{a}ro mean
$\frac{1}{T_a(t)}\sum_{s:a_s=a} \delta_s^2 \to 0$,
so the transient bias prediction vanishes.
\qed

\subsection{Verification for SGD-LinTS-RQ}\label{sec:axioms_sgd}

SGD-LinTS-RQ maintains a diagonal precision
$p \in \mathbb{R}^d$ with
$p_k = \lambda + \sum_{s:a_s=a} x'^2_{s,k}$
and sufficient statistic
$f_k = \sum_{s:a_s=a} \tilde{r}_s\, x'_{s,k}$,
yielding the coordinate-wise estimator
$\hat{\theta}_k = f_k / p_k$.
This is $d$ independent scalar regressions, one per
feature coordinate.

\smallskip
\noindent\textbf{Property~1.}
Same structure as LinTS-RQ: bounded updates when
starved, SLLN when explored, applied coordinate-wise.

\smallskip
\noindent\textbf{Property~2.}
If $T_{a,c,i}(t) \le M$, then
$p_{a,i,k,t} \le \lambda + MC^2$ for every
coordinate~$k$, giving
\begin{align}
    \mathrm{Var}(s_a \mid x_t, \mathcal{F}_{t-1})
    &= \sum_{j=1}^{\ell} \nu^2 \sum_{k=1}^{d+1}
    \frac{(\phi_{j,k})^2}{p_{a,j,k,t}}
    \ge \frac{\nu^2 \|\phi_i(x_t)\|^2}{\lambda + MC^2}
    \ge \frac{\nu^2}{\lambda + MC^2} > 0
\end{align}
by Assumption~\ref{asm:features}, where the sum is
over $d{+}1$ coordinates (including the intercept that
ensures $\|\phi_i(x)\| \ge 1$ even when context
residuals are small at deep levels).

\smallskip
\noindent\textbf{Property~3.}
By the strong law applied to each coordinate independently:
\begin{align}
    \hat{\theta}_{a,i,k,t} \to \theta^*_{a,i,k}
    \coloneqq \frac{\mathbb{E}[\tilde{r}_{i}^{\infty}
    \cdot x'_k]}{\mathbb{E}[x'^2_k]},
\end{align}
the coordinate-wise optimal coefficient.
The transient bias at coordinate~$k$ satisfies
\begin{align}
    |f^{\mathrm{bias}}_k / p_k|
    \le C\, \frac{1}{T_a(t)} \sum_{s:a_s=a} |\delta_s|
    \to 0
\end{align}
by the Ces\`{a}ro argument.
Note that $\theta^*_{a,i}$ is the coordinate-wise optimal,
which may differ from the full-linear optimal when features
are correlated; consequently
$\Delta_\ell(\mathcal{M})$ for SGD-LinTS-RQ is evaluated
over the coordinate-wise linear function class.
\qed

\subsection{Verification for TS-RQ}\label{sec:axioms_ts}

TS-RQ is the scalar special case: the per-cell model
ignores features and maintains only a count~$n$ and
sum~$s$ (Algorithm~\ref{alg:ts_rq}).
The estimate $\hat{\mu} = s / (\lambda + n)$ is a scalar
ridge regression with constant feature~$1$.
All three properties follow from the LinTS-RQ checks
with $d = 1$ and $\phi_i(x) = 1$.
\qed

\begin{remark}[Adaptive sampling]\label{rem:adaptive_sampling}
The checks above invoke the strong law of large numbers
for per-centroid sample averages.
Under Thompson Sampling, the data arriving at each centroid is not
i.i.d.: arm selection depends on the posterior, which depends on
all past observations.
However, each centroid's samples form a \emph{conditionally}
unbiased sequence:
conditioned on $a_t = a$ and $k_i(x_t) = c$, the target
$\tilde{r}_{i,t}$ has the same conditional expectation regardless
of how $a_t$ was selected.
The key requirement is that every centroid is visited infinitely
often (Lemma~\ref{lem:explore_converge}(a)), which ensures the
denominators grow.
Given infinite visits, the Ces\`{a}ro convergence argument
applies to the prediction error regardless of the adaptive
selection mechanism, since the per-step bias $\delta_s$ vanishes
and the averaging is over a growing number of bounded terms.
This is the standard approach in the TS
literature~\cite{agrawal2013thompson,agrawal2012analysis};
a fully formal treatment under adaptive sampling remains open
for misspecified TS in general.
\end{remark}

\section{Fixed-Depth Regret Analysis}\label{app:fixed_depth_proofs}

\noindent This appendix develops the full fixed-depth analysis
for Algorithm~\ref{alg:rqbandit}, including the asymptotic
regret theorem and the depth-rate decomposition of
Theorem~\ref{thm:depth-rate}.

\begin{assumption}\label{asm:features}
All per-level feature vectors are bounded:
$1 \le \|\phi_i(x)\| \le C$ for all~$i, x$.
\end{assumption}

\noindent The lower bound $\|\phi_i(x)\| \ge 1$ is
ensured by appending a constant intercept feature to
each $\phi_i(x)$.

\noindent\textbf{Definition~\ref{def:delta}} (Additive
misspecification, restated from \S\ref{sec:analysis}).
The additive misspecification $\Delta_\ell(\mathcal{M})$
is defined in~\eqref{eq:delta}.
Since deeper models contain shallower ones
($f_{\ell+1} \equiv 0$),
$\Delta_{\ell+1} \le \Delta_\ell$.

\noindent\textbf{Theorem~\ref{thm:asymptotic}} (Asymptotic
regret, restated from \S\ref{sec:analysis}).
\textit{For any base learner~$\mathcal{M}$ satisfying
Properties~1--3 (Appendix~\ref{app:base_learner_axioms}),
the asymptotic regret rate satisfies~\eqref{eq:asymptotic_regret},
where $\hat{r}_\infty$ is the converged prediction of the
sequential residual-fitting limit
(Lemma~\ref{lem:explore_converge}(b)).
Since $\hat{r}_\infty$ is one feasible additive
predictor, the right-hand side is
$\ge 2\,\Delta_\ell(\mathcal{M})$,
with equality when the stagewise and joint optima coincide.}

\begin{lemma}[Exploration and convergence]\label{lem:explore_converge}
For any base learner~$\mathcal{M}$ satisfying Properties~1--3,
for each arm~$a$ and level~$i \in \{1, \ldots, \ell\}$:
\begin{enumerate}
\item[\emph{(a)}] Every arm visits every centroid at
level~$i$ infinitely often:
$T_{a,c}(T) \to \infty$ a.s.\ for all centroids~$c$
with $P(k_i(x) = c) > 0$.
\item[\emph{(b)}] The per-level predictions converge:
$\phi_i(x)^\top \hat{\theta}_{a,i,t}
\to \phi_i(x)^\top \theta_{a,i}^*$ for all~$x$ in the
support.
\end{enumerate}
\end{lemma}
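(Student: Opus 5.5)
The argument is an induction on the level $i$, proving (a) and (b) together at each level before moving deeper. The base case is level $1$, whose targets $\tilde r_{1,t}=r_t-\sigma(0)$ are stationary given $(x_t,a_t)$, so $\delta_t\equiv 0$ in Property~3. For the inductive step at level $i$, assume (a) and (b) at all levels $j<i$. Then the cumulative prediction $\hat y_{i-1}$ converges pointwise on the support to its stagewise limit $\hat y_{i-1}^\infty$. Since $\sigma$ is $1$-Lipschitz, we get $\tilde r_{i,t}=\tilde r_{i,t}^\infty+\delta_t$ with $|\delta_t|\le|\hat y_{i-1,t}-\hat y^\infty_{i-1}(x_t)|\to 0$ a.s. Uniformity is not an issue here: each level-$j$ prediction depends on $x$ only through the finitely many cells $(k_j(x),a)$, and for the linear variants it is a bounded linear functional of $\phi_j(x)$ with $\|\phi_j\|\le C$. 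So pointwise parameter convergence gives uniform convergence of predictions. This supplies exactly the hypothesis of Property~3.

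Part (a) is proved by contradiction using Properties~1 and~2. Fix a level $i$, arm $a$, and centroid $c$ with $p_c=P(k_i(x)=c)>0$, and suppose that on an event of positive probability $T_{a,c,i}(t)\le M$ for all $t$ (for some finite random $M$; reduce to a fixed $M$ by taking a countable union). On this event, Property~1 bounds every arm's score mean by $B$ after $t_0$. Property~2 gives $\mathrm{Var}(s_a\mid x_t,\mathcal F_{t-1})\ge\sigma^2_{\min}$ whenever $k_i(x_t)=c$. Because scores are Gaussian conditional on $\mathcal F_{t-1}$ and $x_t$, there is a constant $q>0$, depending on $B$, $\sigma_{\min}$, $K$, and an upper bound on the other arms' score variances (bounded by $\nu^2\ell C^2/\lambda$), such that $P(a_t=a\mid\mathcal F_{t-1},x_t)\ge q$ on $\{k_i(x_t)=c\}$. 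The contexts are i.i.d. and independent of the past, so $P(a_t=a,\,k_i(x_t)=c\mid\mathcal F_{t-1})\ge qp_c$ for all $t>t_0$. These conditional probabilities sum to infinity, so the conditional (L\'evy) Borel--Cantelli lemma gives infinitely many visits a.s. on the event, which contradicts $T_{a,c,i}\le M$. Note that (a) does not actually need the inductive hypothesis: Properties~1 and~2 hold at every level simultaneously, so (a) can be proved for all $i$ at once.

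Part (b) then follows at level $i$ by applying Property~3, since (a) gives $T_{a,c,i}\to\infty$ for every positive-mass $c$ and the previous paragraph gives vanishing target perturbation. This closes the induction. The limits $\theta^*_{a,i}$ define $\hat r_\infty$, which is what Theorem~\ref{thm:asymptotic} uses.

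I expect the main obstacle to be the lower bound $q$ in (a). The other arms' scores could in principle be driven to dominate arm $a$ with near-certainty. The fix is to use Gaussian tails: a draw of $s_a$ exceeds $B+\text{const}$ with probability bounded below. Each competitor's score stays below that level with probability bounded below, because its mean is at most $B$ and its variance is bounded above, since $V\succeq\lambda I$. These events are conditionally independent across arms given $\mathcal F_{t-1}$, so their joint probability yields $q$. A secondary subtlety is the circularity that Property~1 itself relies on convergence of explored cells. I would handle this by noting that Property~1 is assumed as a standalone hypothesis, so no circular appeal to (b) is needed.
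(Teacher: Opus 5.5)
Your proposal is correct and follows the paper's proof essentially step for step: induction on the level, part (a) by contradiction via Properties~1--2, Gaussian anti-concentration and L\'evy's extension of Borel--Cantelli, and part (b) via Property~3 with the target drift controlled through the $1$-Lipschitz clipping $\sigma$. The differences are minor refinements. You add a justification that the drift vanishes along the moving contexts $x_t$, which the paper takes directly from the inductive hypothesis. You also use an upper bound on the competitors' score variances, which the paper does not need: it takes the threshold $B$ itself, and $\mu_{a'}\le B$ already gives $P(s_{a'}\le B)\ge\tfrac12$.
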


\noindent The proof proceeds by induction on the level~$i$.
Each level's targets depend on all shallower levels'
estimates, creating a coupled non-stationary system.
Properties~1--3
(Appendix~\ref{app:base_learner_axioms}) abstract this
coupling for three distinct base learners.
The per-level posterior can be viewed as a block-diagonal
approximation of the joint posterior analyzed in
Appendix~\ref{app:analytical_variants_proof}.

\smallskip
\noindent\textbf{Notation.}
Each feature vector $\phi_j(x)$ is supported on a single
centroid block at level~$j$ (placing the context residual
$x^{(j-1)}$ at centroid~$k_j(x)$ and zeros elsewhere).
The TS draw, score, and selection for arm~$a$ at context~$x_t$ are
\begin{align}
\forall_a \tilde{\theta}_{a,j}
&\sim \mathcal{M}[j,\, k_j,\, a].\mathrm{sample}(x_t^{(j-1)}),
\label{eq:ts_draw}
\\
\forall_a
s_a &= \sum_{j=1}^{\ell}
\tilde{\theta}_{a,j}^\top \phi_j(x_t),
\label{eq:score_def}
\\
a_t &= \arg\max_a s_a =
\arg\max_a \sigma(s_a)
\end{align}
independently across levels; the last equality holds
since $\sigma$ is monotone.
The \emph{score mean} and conditional variance are
\begin{align}
\mu_a(x_t) &= \mathbb{E}[s_a \mid x_t, \mathcal{F}_{t-1}]
= \sum_{j=1}^{\ell}
\hat{\theta}_{a,j,t}^\top \phi_j(x_t),
\label{eq:score_mean}
\\
\mathrm{Var}(s_a \mid x_t, \mathcal{F}_{t-1})
&= \sum_{j=1}^{\ell} \nu^2\,
\phi_j(x_t)^\top V_{a,j,t}^{-1} \phi_j(x_t).
\label{eq:score_var}
\end{align}
The reward prediction is $\hat{y}_a = \sigma(\mu_a) \in [0,1]$,
and the per-level target (Algorithm~\ref{alg:rqbandit},
line~\ref{alg:targets}) is
\begin{equation}\label{eq:level_target}
    \tilde{r}_{j,t} = r_t - \sigma\!\left(
    \sum_{i=1}^{j-1}
    \hat{\theta}_{a_t,i,t}^\top \phi_i(x_t)\right) \in [-1, 1].
\end{equation}
For clarity, the damping factor $\eta$ from
Algorithm~\ref{alg:rqbandit} is set to~$1$ in the
analysis; absorbing $\eta^{j-1}$ into $\phi_j$ does
not affect the proof structure, though it scales the
effective variance at level~$j$ by $\eta^{2(j-1)}$,
which slows exploration at deeper levels when
$\eta < 1$. The asymptotic result is unaffected
(the lower bound $\sigma_{\min}^2$ remains positive),
but the pre-asymptotic exploration cost at deep
levels grows as $\eta^{-2(j-1)}$.

\begin{lemma}[Score means are eventually bounded]
\label{lem:score_bounded}
All arm score means are eventually bounded:
\begin{align}
   \exists_{t_0, B} \forall_{a,\, t > t_0} |\mu_a(x_t)| \le B
   \qquad \text{a.s.}
\end{align}
\end{lemma}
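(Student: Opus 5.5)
The plan is to reduce Lemma~\ref{lem:score_bounded} to Property~1 (Eventual boundedness) applied cell by cell, using the fact that the score mean decomposes additively over levels and centroids. By \eqref{eq:score_mean}, $\mu_a(x_t)=\sum_{j=1}^{\ell}\hat{\theta}_{a,j,t}^\top\phi_j(x_t)$. Each $\phi_j(x_t)$ is supported on the single centroid block $k_j(x_t)$ (see the Notation paragraph), so each summand is the prediction of exactly one cell $\mathcal{M}[j,c,a]$ with $c=k_j(x_t)$. It therefore suffices to show that every one of the finitely many cells $(a,j,c)\in[K]\times[\ell]\times[b]$ has an eventually bounded prediction $|\hat{\theta}_{a,j,c,t}^\top\phi_j(x)|\le B_{a,j,c}$. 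We would then take $B=\ell\max_{a,j,c}B_{a,j,c}$, and $t_0$ the maximum of the finitely many per-cell thresholds; a finite maximum of a.s.\ finite random times is a.s.\ finite.

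For a fixed cell we split on the a.s.\ event that its visit count $T_{a,c,j}(t)$ stays bounded versus diverges.

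\emph{Starved case.} Suppose $T_{a,c,j}(t)\le M$ for all $t$. The cell has received at most $M$ updates with targets $|\tilde{r}_{j,s}|\le1$ by \eqref{eq:level_target}, since $\sigma\in[0,1]$ and $r\in\{0,1\}$. It also has features $\|\phi_j\|\le C$ (Assumption~\ref{asm:features}) and precision $\succeq\lambda_j I$. So $\|\hat{\theta}\|\le MC/\lambda_j$ and the prediction is bounded by $MC^2/\lambda_j$ for all $t$. This is exactly the argument recorded under Property~1 for LinTS-RQ; SGD-LinTS-RQ and TS-RQ follow coordinate-wise or with $\phi=1$. Note that the bound holds deterministically along the sample path, even though $M$ is path-dependent.

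\emph{Explored case.} Suppose $T_{a,c,j}(t)\to\infty$. We would avoid invoking Property~3: its hypothesis (converging shallower targets) is what the induction in Lemma~\ref{lem:explore_converge} is meant to establish, and using it here would be circular. Instead we use only boundedness of the targets. The ridge estimate is $V^{-1}\sum_s\phi_s\tilde{r}_s$, and we bound $|\phi(x)^\top V^{-1}\sum_s\phi_s\tilde r_s|$ by the Cauchy--Schwarz step in \eqref{eq:transient_cs_axiom} with $\delta_s$ replaced by $\tilde{r}_s$. The push-through computation used there gives a second factor of at most $(\sum_s\tilde{r}_s^2)^{1/2}\le T_{a,c,j}(t)^{1/2}$. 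The first factor is at most $\sqrt{c'/T_{a,c,j}(t)}$ eventually, by the same strong-law argument on $\hat{\Sigma}_t$, so the product is at most $\sqrt{c'}$. In the rank-deficient case we would use the column-space remark made there. For TS-RQ this step is simply $|s/(\lambda+n)|\le1$, and for SGD-LinTS-RQ it is the coordinate-wise bound $|f_k/p_k|\le \sum|x'_k|/\sum x'^2_k$, which is controlled by the same strong-law argument.

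The main obstacle is the first factor in the explored case. It needs a.s.\ convergence of the within-cell empirical second moment under adaptive sampling, together with nondegeneracy of $\Sigma_c$; as Remark~\ref{rem:adaptive_sampling} acknowledges, this is the least rigorous ingredient. We would try first to get a weaker, sufficient statement: only $\liminf_t\lambda_{\min}(\hat{\Sigma}_t|_c)>0$ on the relevant span is needed, not convergence. Contexts $x_t$ are drawn i.i.d.\ independently of the past, and the event $\{a_t=a\}$ is $\mathcal{F}_{t-1}\cup\{x_t\}$-measurable, so a martingale strong law (e.g.\ via Freedman's inequality) applied to $\phi\phi^\top-\mathbb{E}[\phi\phi^\top\mid k_j=c]$ along the visit subsequence might yield this lower bound. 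If that fails, we would fall back on the cruder bound $\|\hat{\theta}\|\le\|V^{-1}\|\,T\,C$. That bound is not uniform in $t$, which shows the eigenvalue lower bound is genuinely needed.
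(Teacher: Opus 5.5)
\textbf{The missing observation: the lemma is Property~1.} The statement is, symbol for symbol, Property~1 (equation~\eqref{eq:axiom1}). Property~1 is a standing hypothesis on $\mathcal{M}$ throughout Appendix~\ref{app:fixed_depth_proofs}, so the paper's proof is the single line ``Immediate from Property~1.'' Property~1 concerns the full score mean $\mu_a(x_t)$; it is not a per-cell property. So there is nothing to aggregate over $(a,j,c)$, and no starved/explored split is needed.

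By re-deriving boundedness from the internals of the three learners, you take on work the lemma does not require and you also narrow its scope. Your argument would cover only TS-RQ, SGD-LinTS-RQ and LinTS-RQ, whereas Lemma~\ref{lem:explore_converge} invokes the lemma for any $\mathcal{M}$ satisfying Properties~1--3.

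\textbf{The gap in your substitute argument.} Taken as a standalone derivation, your explored case has a genuine gap, which you flag but do not close. The step $\|\phi_j(x)\|_{V^{-1}}^2\le c'/T_{a,c,j}(t)$ needs $\liminf_t\lambda_{\min}(\hat\Sigma_t|_c)>0$ along arm $a$'s visit subsequence. The martingale route you sketch centers at the wrong quantity. Because the event $\{a_t=a\}$ depends on $x_t$, the compensator is $\mathbb{E}[\phi_j\phi_j^\top\mathbf{1}\{a_t=a,\,k_j(x_t)=c\}\mid\mathcal{F}_{t-1}]$. That is a second moment under a selection-tilted, time-varying law, not $\mathbb{E}[\phi_j\phi_j^\top\mid k_j=c]$.

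Thompson sampling can concentrate arm $a$ on a shrinking part of the cell. Bounded targets then no longer control predictions elsewhere in the cell. For example, suppose that on the $s$-th visit one residual coordinate equals a fixed value plus $(-1)^s s^{-1/2}$ and the target is $(-1)^s$. Then the ridge prediction grows like $\sqrt{n}/\log n$ at any point of the cell whose coordinate differs from that value by order one, intercept notwithstanding. Such contexts keep arriving i.i.d., so $\mu_a(x_t)$ is unbounded. Closing this for the linear variants needs an extra non-degeneracy assumption on the selection, not just on $P(x\mid k_j=c)$.

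\textbf{What your route does buy.} For TS-RQ it is complete and deterministic: $|s/(\lambda+n)|<1$ at each level, so $|\mu_a(x_t)|<\ell$ for all $t$. Your bounded-target Cauchy--Schwarz bound also avoids a circularity in the paper's own verification of Property~1 in Appendix~\ref{app:base_learner_axioms}. In the explored case the paper appeals to $\hat\theta\to\theta^*$, which fails to hold without further argument when targets drift; your bound does not need that convergence.
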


\begin{proof}
Immediate from Property~1
(Appendix~\ref{app:base_learner_axioms}).
\end{proof}

\begin{lemma}[Exploration and convergence, restated]
For any base learner~$\mathcal{M}$ satisfying Properties~1--3
(Appendix~\ref{app:base_learner_axioms}),
for each arm~$a$ and level~$i \in \{1, \ldots, \ell\}$:
\begin{enumerate}
\item[\emph{(a)}] Every arm visits every centroid at
level~$i$ infinitely often:
$T_{a,c}(T) \to \infty$ a.s.\ for all centroids~$c$
with $P(k_i(x) = c) > 0$.
\item[\emph{(b)}] The per-level predictions converge:
$\phi_i(x)^\top \hat{\theta}_{a,i,t}
\to \phi_i(x)^\top \theta_{a,i}^*$ for all~$x$ in the
support, where $\theta_{a,i}^*$ is the best linear
predictor for the clipped residual at level~$i$ given
converged shallower levels.
\end{enumerate}
\end{lemma}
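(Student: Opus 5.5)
We argue by strong induction on the level $i$, proving (a) and (b) together. The induction hypothesis at level $i$ is that (a) and (b) hold at every level $j<i$. Under this hypothesis, the cumulative shallow prediction $\sum_{j<i}\hat{\theta}_{a,j,t}^\top\phi_j(x)$ converges for every $x$ in the support. Because $\sigma$ is $1$-Lipschitz, the target~\eqref{eq:level_target} then splits as $\tilde{r}_{i,t}=\tilde{r}_{i,t}^{\infty}+\delta_t$ with $\delta_t\to 0$ a.s. The stationary part is $\tilde{r}_{i,t}^{\infty}=r_t-\sigma(\sum_{j<i}\theta_{a,j}^{*\top}\phi_j(x_t))$. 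For the base case $i=1$ the target is $r_t$ itself, so $\delta_t\equiv 0$. Given this decomposition, part (b) at level $i$ follows directly from Property~3 once part (a) at level $i$ is established. So the real work is part (a).

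\textbf{Part (a).} We argue by contradiction. Suppose that with positive probability some arm $a$ visits some centroid $c$ at level $i$ (with $p_c=P(k_i(x)=c)>0$) only finitely often. On that event there is a random $M$ with $T_{a,c,i}(t)\le M$ for all $t$. Property~2 then gives $\mathrm{Var}(s_a\mid x_t,\mathcal{F}_{t-1})\ge\sigma_{\min}^2$ whenever $k_i(x_t)=c$. By Lemma~\ref{lem:score_bounded}, all score means satisfy $|\mu_{a'}(x_t)|\le B$ for $t>t_0$.

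The draws are Gaussian and independent across arms. Hence, conditional on $\mathcal{F}_{t-1}$ and on $k_i(x_t)=c$, the probability that $s_a$ exceeds every competitor is at least some $q>0$. One way to bound this is to require $s_a>B+1$ while each other $s_{a'}<B+1$. The second event has probability bounded below via Chebyshev, using a uniform upper bound on the competitors' variances: each variance is at most $\nu^2\sum_j C^2/\lambda_j$, since each $V^{-1}\preceq\lambda_j^{-1}I$. The constant $q$ depends only on $\sigma_{\min},B,\nu,C,\lambda,K$ and not on $t$.

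The events $\{k_i(x_t)=c\}$ are i.i.d. with probability $p_c$. Therefore $\sum_t P(a_t=a,\,k_i(x_t)=c\mid\mathcal{F}_{t-1})\ge\sum_t p_c q=\infty$. Lévy's extension of the second Borel--Cantelli lemma then gives infinitely many visits a.s., which contradicts $T_{a,c,i}\le M$.

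A technicality is that $M$ is random. We handle it by taking a countable union over integers $M$, applying the argument on each event $\{\sup_t T_{a,c,i}(t)\le M\}$.

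\textbf{Main obstacle.} The delicate step is the uniform lower bound $q$. Property~2 lower-bounds the total score variance, but we must make sure this variance is not dominated by levels whose means are unbounded. Lemma~\ref{lem:score_bounded} handles the means after time $t_0$. However, $t_0$ is random, so the conditional Borel--Cantelli argument must be run on the event $\{t>t_0\}$, and it must be checked that it survives this conditioning.

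The induction also leans on a point the paper defers to Remark~\ref{rem:adaptive_sampling}: Property~3 applies under adaptively collected data only because visits are infinite. We would state explicitly that Property~3 is being invoked on the sub-sequence of rounds with $a_t=a$. Finally, the hypothesis at level $i$ requires (a) at all shallower levels; this holds because every level $j<i$ has already been treated earlier in the induction.
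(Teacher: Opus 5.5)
Your proposal follows the paper's proof essentially step for step: induction on the level, part~(a) by contradiction via Property~2, the eventual bound of Lemma~\ref{lem:score_bounded}, Gaussian anti-concentration and L\'evy's conditional Borel--Cantelli lemma, and part~(b) from Property~3, with the target drift controlled by the $1$-Lipschitz clip $\sigma$ and the induction hypothesis (your treatment of the random $M$ and $t_0$ is, if anything, more careful than the paper's). The one place you diverge is the bound on the competing arms: the paper uses only that each $s_{a'}$ is Gaussian with mean at most $B$, so $P(s_{a'}\le B)\ge\tfrac12$, which avoids your upper bound on the competitors' variances (not one of Properties~1--3) and also your Chebyshev step, which with plain Chebyshev and margin $1$ gives a vacuous bound once that variance bound exceeds $1$ and so would need Cantelli's inequality or a larger margin.
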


\begin{proof}[Proof of Lemma~\ref{lem:explore_converge}]
The proof is by induction on the level $i$ and applies to
any base learner~$\mathcal{M}$ satisfying Properties~1--3.

\smallskip
\noindent\emph{Induction base ($i=1$).}

\smallskip
\noindent\emph{(a) Per-centroid exploration.}
Assume that for some $a, c$,
\begin{align}
T_{a,c}(T) &\le M < \infty,
\\
P(k_1(x) = c) &\eqqcolon p_c > 0. \label{eq:centroid_positive}
\end{align}
By Property~2, the score variance satisfies
\begin{align}
\mathrm{Var}(s_a \mid x_t, \mathcal{F}_{t-1})
\ge \sigma_{\min}^2 > 0
\qquad \forall\, t \text{ with } k_1(x_t) = c.
\end{align}
By Lemma~\ref{lem:score_bounded}, there exist
(a.s.)\ finite $B$ and~$t_0$ such that every arm's
score mean satisfies $|\mu_{a'}(x_t)| \le B$ for all
$t \ge t_0$.
For $t \ge t_0$ with $k_1(x_t) = c$, the
probability that arm~$a$ is selected satisfies
\begin{align}
P(a_t = a \mid x_t, \mathcal{F}_{t-1})
&\ge P(s_a > B)\,
\prod_{a' \ne a} P(s_{a'} \le B)
\nonumber \\
&\ge \bigl[1 - \Phi(2B/\sigma_{\min})\bigr]
\cdot ({\tfrac{1}{2}})^{K-1}
\eqqcolon p_{\min} > 0,
\end{align}
where the first factor uses anti-concentration of the
score distribution: all three variants use Gaussian
sampling (Algorithms~\ref{alg:ts_rq}--\ref{alg:lints_rq}),
so $s_a$ is Gaussian with variance
$\ge \sigma_{\min}^2$ and mean $\ge -B$, giving the
stated normal tail bound.
Each factor in the product is $\ge \tfrac{1}{2}$ because
$\mu_{a'} \le B$.
Since $k_1(x_t) = c$ occurs infinitely often
(i.i.d.\ contexts with $p_c > 0$) and each
occurrence satisfies
$P(a_t = a \mid x_t, \mathcal{F}_{t-1}) \ge p_{\min}$
(a uniform conditional lower bound, not independence),
the L\'{e}vy extension of the Borel--Cantelli lemma
gives $T_{a,c} \to \infty$,
contradicting~\eqref{eq:centroid_positive}.

\smallskip
\noindent\emph{(b) Prediction convergence.}
The targets $\tilde{r}_{1,t} = r_t - \sigma(0)$ are
stationary (no dependence on shallower levels), so
$\delta_t = 0$.
By~(a), $T_{a,c}(T) \to \infty$ for every centroid~$c$.
Property~3 (with $\delta_t = 0$) gives
$\phi_1(x)^\top \hat{\theta}_{a,1,t}
\to \phi_1(x)^\top \theta_{a,1}^*$.

\smallskip
\noindent\emph{Induction step ($i \ge 2$).}
Assume (a) and (b) hold for levels $1, \ldots, i{-}1$.

\smallskip
\noindent\emph{(a) Per-centroid exploration.}
Similar to the induction base: if $T_{a,c}(T) \le M$
for some centroid~$c$ at level~$i$, Property~2 gives
$\mathrm{Var}(s_a) \ge \sigma_{\min}^2 > 0$
when $k_i(x_t) = c$.
By Lemma~\ref{lem:score_bounded}, score means are
eventually bounded, so Gaussian anti-concentration and
Borel--Cantelli yield the contradiction
$T_{a,c} \to \infty$.

\smallskip
\noindent\emph{(b) Prediction convergence.}
The clipped residual target for level~$i$ is
\begin{equation}
    \tilde{r}_{i,t} = r_t -
    \sigma\!\left(\sum_{j=1}^{i-1}
    \hat{\theta}_{a,j,t}^\top \phi_j(x_t)\right).
\end{equation}
Since $\sigma$ is $1$-Lipschitz and levels
$1, \ldots, i{-}1$ have converged in prediction space
by the induction hypothesis~(b), the target drift
satisfies
\begin{equation}\label{eq:drift}
    |\tilde{r}_{i,t} - \tilde{r}_{i,t}^{\infty}|
    \le \sum_{j=1}^{i-1}
    \left|(\hat{\theta}_{a,j,t} - \theta_{a,j}^*)^\top
    \phi_j(x_t)\right|
    \eqqcolon \delta_t \to 0,
\end{equation}
where each summand vanishes by the induction hypothesis.
By~(a), $T_{a,c,i}(t) \to \infty$ for all centroids.
Property~3 (with $\delta_t \to 0$) gives
$\phi_i(x)^\top \hat{\theta}_{a,i,t}
\to \phi_i(x)^\top \theta_{a,i}^*$.

\end{proof}

\begin{proof}[Proof of Theorem~\ref{thm:asymptotic}
(Asymptotic regret for Algorithm~\ref{alg:rqbandit})]
By Lemma~\ref{lem:explore_converge}(b), for each arm~$a$,
the converged prediction is
\begin{align}\hat{r}_\infty(x, a) =
\sigma\!\left(\sum_{i=1}^{\ell}
\theta_{a,i}^{*\top} \phi_i(x)\right).
\end{align}
Define
\begin{align}\hat{a}(x) = \arg\max_a \hat{r}_\infty(x, a)
\end{align}
as the arm selected under the converged model.

\smallskip
\noindent\emph{Post-convergence regret.}
Once predictions have converged, TS selects $\hat{a}(x_t)$
at every round (the posterior variance vanishes;
model-level ties $\hat{r}_\infty(x, a) = \hat{r}_\infty(x, a')$
occur on a measure-zero set of contexts under continuous features,
and contribute at most $2\Delta_\ell$ regret per round regardless).
The per-round regret is
\begin{align}
    r^*(x_t, a_t^*) - r^*(x_t, \hat{a}(x_t))
    &= \bigl[r^*(x_t, a_t^*) - \hat{r}_\infty(x_t, a_t^*)\bigr]
    \nonumber \\
    &\quad - \bigl[r^*(x_t, \hat{a}) -
       \hat{r}_\infty(x_t, \hat{a})\bigr]
    \nonumber \\
    &\quad + \underbrace{\bigl[\hat{r}_\infty(x_t, a_t^*)
       - \hat{r}_\infty(x_t, \hat{a})\bigr]}_{\le\, 0},
\end{align}
where the last term is $\le 0$ because $\hat{a}$
maximizes $\hat{r}_\infty$.
Therefore
\begin{equation}
    r^*(x_t, a_t^*) - r^*(x_t, \hat{a}(x_t))
    \le 2 \max_a \left|r^*(x_t, a)
    - \hat{r}_\infty(x_t, a)\right|.
\end{equation}
Taking expectation over $x_t$:
\begin{equation}\label{eq:post_conv}
    \mathbb{E}\!\left[r^*(x_t, a_t^*)
    - r^*(x_t, \hat{a}(x_t))\right]
    \le 2\,\mathbb{E}_{x}\!\left[\max_a
    \left|r^*(x, a) - \hat{r}_\infty(x, a)\right|
    \right]
    \eqqcolon 2\,\bar{\Delta}_\ell(\mathcal{M}),
\end{equation}
where $\bar{\Delta}_\ell$ is the approximation error
of the sequential residual-fitting limit
$\hat{r}_\infty(x, a) = \sigma(\sum_i
\theta_{a,i}^{*\top} \phi_i(x))$.
Since $\hat{r}_\infty$ is one feasible additive
predictor, $\bar{\Delta}_\ell \ge \Delta_\ell$
(Definition~\ref{def:delta}), with equality when
the stagewise and joint optima coincide.

\smallskip
\noindent\emph{Exploration regret.}
Before convergence, TS may select suboptimal arms.
By Lemma~\ref{lem:explore_converge}(b),
$\hat{\theta}_{a,i,t} \to \theta_{a,i}^*$ for all
arms and levels (in prediction space), so the posterior
means converge.
Since the posterior variances vanish
($\nu^2 V_{a,i,t}^{-1} \to 0$ as $T_a(t) \to \infty$
by Lemma~\ref{lem:explore_converge}(a)),
the TS draws converge:
\begin{align}\tilde{\theta}_{a,i,t} \to \theta_{a,i}^*
\end{align}
a.s., and arm selection converges to $\hat{a}(x_t)$.
For contexts where the optimal arm gap
$\hat{r}_\infty(x, \hat{a}(x)) -
\max_{a \ne \hat{a}(x)} \hat{r}_\infty(x, a) > 0$,
the TS draws eventually resolve to the correct arm;
exact ties have zero probability under continuous
TS draws.
For any $\epsilon > 0$, there exists $T_0$ such that
for $t > T_0$, $a_t = \hat{a}(x_t)$ except on a set
of measure $< \epsilon$.
The exploration regret (rounds where
$a_t \ne \hat{a}(x_t)$) therefore satisfies
\begin{align}\sum_{t=1}^{T} \mathbf{1}[a_t \ne \hat{a}(x_t)] = o(T).
\end{align}
Since per-round regret is bounded by~$1$,
the cumulative exploration regret is $o(T)$.

\smallskip
\noindent\emph{Total.}
\begin{equation}
    \frac{R(T, \ell)}{T}
    = \underbrace{\frac{o(T)}{T}}_{\to\, 0}
    + \underbrace{2\,
      \bar{\Delta}_\ell(\mathcal{M})}_{\text{from~\eqref{eq:post_conv}}},
\end{equation}
giving $\limsup_{T \to \infty} R(T, \ell)/T
\le 2\,\bar{\Delta}_\ell(\mathcal{M})
= 2\,\mathbb{E}_x[\max_a |r^* - \hat{r}_\infty|]$.
\qedhere
\end{proof}

\subsection{Proof of the depth-rate result}

Fix one arm and let $\mathcal H=L^2(P)$. Because each RQ
symbol takes finitely many values, $A_\ell$ is a closed
finite-dimensional subspace of the closed saturated space
$S_\ell$. Denote their orthogonal projections by
$P_{A_\ell}$ and $P_{S_\ell}$.

\begin{proof}[Proof of Theorem~\ref{thm:depth-rate}]
Since $A_\ell\subseteq S_\ell$,
$P_{A_\ell}=P_{A_\ell}P_{S_\ell}$. We may therefore write
\begin{equation}
r^*-P_{A_\ell}r^*
=\bigl(r^*-P_{S_\ell}r^*\bigr)
 +\bigl(P_{S_\ell}r^*-P_{A_\ell}r^*\bigr).
\end{equation}
The first summand lies in $S_\ell^\perp$ and the second in
$S_\ell$. Pythagoras gives the first identity in
\eqref{eq:depth-decomposition}.
For the reconstruction bound, conditional expectation is the
$L^2$ projection onto $S_\ell$, so
\begin{equation}
(\delta_\ell^{\rm sat})^2
=\mathbb E\!\left[\operatorname{Var}
  (r^*(x)\mid z_{1:\ell})\right].
\end{equation}
Within a prefix cell $c$, let $x,x'$ be independent draws from
$P(\cdot\mid c)$. Lipschitz continuity yields
\begin{align}
\operatorname{Var}(r^*(x)\mid c)
&=\tfrac12\mathbb E[(r^*(x)-r^*(x'))^2\mid c] \\
&\leq \tfrac{L^2}{2}\mathbb E[\|x-x'\|^2\mid c]
 =L^2\operatorname{Var}(x\mid c).
\end{align}
The conditional mean minimizes squared reconstruction error,
whereas $\hat x_\ell$ is merely constant on each prefix cell.
Hence
$\operatorname{Var}(x\mid c)
\leq\mathbb E[\|x-\hat x_\ell\|^2\mid c]$.
Averaging over cells proves the second inequality in
\eqref{eq:depth-decomposition}.

Finally, put $a^*=P_{A_\infty}r^*$ and
$r_\perp=r^*-a^*$. Then
$\|r_\perp\|=\delta_\infty$. For any $a_\ell\in A_\ell$,
projection contraction gives
\begin{align}
\operatorname{dist}(P_{S_\ell}r^*,A_\ell)
&\leq\|P_{S_\ell}r^*-a_\ell\| \\
&\leq\|P_{S_\ell}(a^*-a_\ell)\|
     +\|P_{S_\ell}r_\perp\| \\
&\leq\|a^*-a_\ell\|+\delta_\infty.
\end{align}
Taking the infimum over $a_\ell$ proves the interaction-tail
bound. Combining it with the first two claims gives
\eqref{eq:depth-excess}.
\end{proof}

\begin{proof}[Proof of Corollary~\ref{cor:depth-rate}]
Theorem~\ref{thm:depth-rate} and
$\sqrt{u^2+v^2}\leq u+v$ for $u,v\geq0$ imply
\begin{equation}
\delta_\ell
\leq L\sqrt{\mathrm{MSE}_\ell}
   +\delta_\infty+\tau_\ell.
\end{equation}
Substitution of the two geometric assumptions gives the stated
bound with $\rho=\max\{\sqrt{\rho_x},\rho_\tau\}$.
\end{proof}

\section{Promotion Proofs}\label{app:promotion_proofs}

\noindent\textbf{Theorem~\ref{thm:shadow}} (Shadow false-promotion control, restated)\textbf{.}
\textit{The probability that the shadow is promoted despite having no lower
expected squared prediction error than the active model,
under the active policy's data distribution over the tested window,
is $\le \alpha$ over the full horizon.}

\noindent Shadow promotion differs from
CORRAL~\cite{agarwal2017corralling} in a structurally
important way: CORRAL runs $M$ learners simultaneously
and pays $\tilde{O}(\sqrt{M})$ regret overhead from
model selection; our shadow trains passively and incurs
\emph{zero} regret overhead during evaluation.
The regret during each phase is bounded by
Theorem~\ref{thm:asymptotic} at the \emph{active}
depth, with no model-selection penalty.
We test the paired squared-error differences via
Freedman's inequality~\cite{freedman1975tail}
(no i.i.d.\ assumption needed).

\begin{remark}[Freedman vs.\ t-test]\label{rem:freedman_ttest}
The formal guarantee (Theorem~\ref{thm:shadow}) requires
the Freedman-based test, which makes no i.i.d.\
assumption on the paired differences~$D_t$.
In our experiments, we also evaluate a paired $t$-test
variant as a practical heuristic; because the $D_t$
within each trailing window are not i.i.d.\ (the policy
evolves), the $t$-test $p$-values are not formally
calibrated, though empirically both tests yield similar
promotion decisions.
\end{remark}

\begin{remark}[Post-promotion covariate shift]\label{rem:covariate}
The formal guarantee is false-promotion control (Type~I error
$\le \alpha$). Post-promotion, the policy changes and arm
frequencies shift. For all three variants this shift is structurally bounded:
centroid assignments depend only on $x_t$ and the frozen codebook,
and TS ensures all cells are visited infinitely often
(Lemma~\ref{lem:explore_converge}(a)).
\end{remark}

\begin{proof}[Proof of Theorem~\ref{thm:shadow} (Shadow false-promotion control)]

\noindent\emph{Martingale formulation.}
Since the shadow updates only its own statistics
(line~\ref{alg:shadow}) and does not appear in the scoring
(line~\ref{alg:score}) or arm selection (line~\ref{alg:select}),
the action sequence is identical to Algorithm~\ref{alg:rqbandit}
at fixed depth.
To decide whether the shadow model genuinely outperforms the active
model, we construct a statistical test. At each round $t$, define
the paired squared-error difference between the active and shadow
predictions. Let $\hat{y}_{\mathrm{act}}(x_t, a_t)$ and
$\hat{y}_{\mathrm{shd}}(x_t, a_t)$ denote the posterior mean
predictions of the active and shadow models, respectively. Define:
\begin{equation}\label{eq:paired_diff}
    D_t = \left(r_t - \hat{y}_{\mathrm{act}}(x_t, a_t)\right)^2
        - \left(r_t - \hat{y}_{\mathrm{shd}}(x_t, a_t)\right)^2.
\end{equation}
When $D_t > 0$, the shadow's prediction was closer to the observed
reward at round $t$.

Since the shadow does not influence $a_t$, the conditional
expectation $\mathbb{E}[D_t \mid \mathcal{F}_{t-1}]$ is a
deterministic function of the models' states. Define the centered
differences:
\begin{equation}
    Z_t = D_t - \mathbb{E}[D_t \mid \mathcal{F}_{t-1}].
\end{equation}
By construction, $\mathbb{E}[Z_t \mid \mathcal{F}_{t-1}] = 0$,
so $(Z_t)_{t \ge 1}$ is a martingale difference sequence adapted to
$(\mathcal{F}_t)$. Rewards lie in $\{0,1\}$.
Since $\sigma$ maps to $[0,1]$, the posterior mean
predictions satisfy
$\hat{y}_{\mathrm{act}}, \hat{y}_{\mathrm{shd}} \in [0,1]$
for all three variants.
Each squared error is therefore in $[0, 1]$,
so $|D_t| \le 1 \eqqcolon R_D$ and:
\begin{equation}
    |Z_t| \le |D_t| + |\mathbb{E}[D_t \mid \mathcal{F}_{t-1}]|
    \le 2R_D \quad \text{a.s.}
\end{equation}

\smallskip
\noindent\emph{Freedman-based promotion test.}
Fix a check time $\tau$ and a trailing window of $W$ observations.
The cumulative difference decomposes into signal and noise:
\begin{equation}\label{eq:signal_noise}
    \bar{D}_W = \sum_{t=\tau-W+1}^{\tau} D_t
    = \underbrace{\sum_{t=\tau-W+1}^{\tau}
      \mathbb{E}[D_t \mid \mathcal{F}_{t-1}]}_{\text{signal: }
      \mu_W}
    + \underbrace{\sum_{t=\tau-W+1}^{\tau} Z_t}_{\text{noise: }
      M_W}.
\end{equation}
The signal $\mu_W$ is positive when the shadow is genuinely better
(in expected squared error) over the window. We need to distinguish
$\mu_W > 0$ from noise fluctuations.

\smallskip
\noindent
The noise $M_W = \sum_{t} Z_t$ is a sum of martingale
differences with $|Z_t| \le R \coloneqq 2R_D$.
Its predictable quadratic variation is:
\begin{equation}
    V_W = \sum_{t=\tau-W+1}^{\tau}
    \mathbb{E}\!\left[Z_t^2 \mid \mathcal{F}_{t-1}\right]
    \le R^2 W.
\end{equation}
By Freedman's inequality~\cite{freedman1975tail}:
\begin{equation}\label{eq:freedman}
    P\!\left(M_W \ge a \;\text{ and }\; V_W \le v\right)
    \le \exp\!\left(\frac{-a^2/2}{v + Ra/3}\right).
\end{equation}

\smallskip
\noindent To derive the threshold $a^*$,
we set $v = R^2 W$ (the worst-case variance bound) and require the
right-hand side of~\eqref{eq:freedman} to equal the per-test
significance level $\alpha_k$. This gives the equation:
\begin{equation}
    \exp\!\left(\frac{-a^2/2}{R^2 W + Ra/3}\right) = \alpha_k.
\end{equation}
Taking logarithms and writing $L = \log(1/\alpha_k)$:
\begin{equation}
    \frac{a^2}{2} = L\!\left(R^2 W + \frac{Ra}{3}\right)
    \quad \Longrightarrow \quad
    a^2 - \frac{2RL}{3}\,a - 2R^2 WL = 0.
\end{equation}
Applying the quadratic formula ($a > 0$):
\begin{align}
    a^*(\alpha_k, W)
    &= \frac{RL}{3}
       + \sqrt{\frac{R^2 L^2}{9} + 2R^2 WL}
    \nonumber \\
    &= \frac{R}{3}\log\frac{1}{\alpha_k}
       + R\sqrt{\frac{1}{9}\!\left(\log\frac{1}{\alpha_k}\right)^{\!2}
         + 2W\log\frac{1}{\alpha_k}}.
    \label{eq:threshold}
\end{align}
We promote if $\bar{D}_W > a^*(\alpha_k, W)$.

\smallskip
\noindent
Under the null hypothesis (shadow is not better, $\mu_W \le 0$),
the signal is non-positive, so from~\eqref{eq:signal_noise}:
\begin{equation}
    \bar{D}_W = \mu_W + M_W \le M_W.
\end{equation}
Therefore, a promotion requires $M_W > a^*$, which by
Freedman's inequality~\eqref{eq:freedman} occurs with probability
at most $\alpha_k$:
\begin{equation}
    P(\text{promote} \mid \mu_W \le 0)
    \le P(M_W > a^*)
    \le \alpha_k.
\end{equation}
Here, $a^*$ is the critical value that makes the noise term
unlikely to exceed the threshold, ensuring false promotions are rare.

\smallskip
\noindent\emph{Bonferroni correction.}
The doubling schedule checks at times $\{T_0, 2T_0, 4T_0, \ldots\}$,
producing at most $n_c$ check intervals:
\begin{equation}
    n_c = \lfloor\log_2(T / T_0)\rfloor + 1.
\end{equation}
At each check time, the algorithm only evaluates the immediate
shadow depth $k{+}1$ extending from the current active depth~$k$.
Therefore, it conducts exactly one hypothesis test per scheduled
check interval.
An additional test is only conducted if a promotion successfully
advances the active frontier, prompting the algorithm to immediately
evaluate the next depth.
Note that this immediate re-test uses the trailing window $W$,
which contains data collected under the previous depth's policy;
the conservatism of the Freedman-based test
(Theorem~\ref{thm:shadow}) provides robustness against
this distribution shift.
Because at most $\ell_{\max} - 1$ promotions can ever occur
(from depth~1 to depth~$\ell_{\max}$), the total number of
hypothesis tests over the entire horizon is bounded by:
\begin{equation}
    N \le n_c + (\ell_{\max} - 1).
\end{equation}
Setting $\alpha_k = \alpha / N$ for each individual test and
applying the union bound:
\begin{equation}\label{eq:bonferroni}
    P(\text{any false promotion})
    \le \sum_{k=1}^{N} \alpha_k
    = N \cdot \frac{\alpha}{N}
    = \alpha.
\end{equation}
By tracking only the frontier tests, the multiplicity penalty
is additive $\mathcal{O}(n_c + \ell_{\max})$ rather than
multiplicative $\mathcal{O}(n_c \cdot \ell_{\max})$.
The doubling schedule keeps $n_c = \mathcal{O}(\log T)$.

\smallskip
\noindent\emph{Combining.}
By~\eqref{eq:bonferroni}, with probability at least $1 - \alpha$,
\emph{every} promotion is a true improvement under the active
policy's data distribution:
\begin{equation}
    \mu_W > 0,
\end{equation}
meaning the shadow has strictly lower expected squared error on the
$(x_t, a_t)$ pairs generated by $\pi_{\mathrm{act}}$. This is a
\emph{pre-promotion} guarantee; post-promotion, the policy changes
and the data distribution shifts
(see Remark~\ref{rem:covariate}).

Regardless of whether any individual promotion is true or false,
the shadow never influences arm selection, so the regret during
each phase (at fixed depth $\ell_k$) is bounded by
Theorem~\ref{thm:asymptotic}.
If promotions occur at times
$T_1, T_2, \ldots$ with depths $\ell_1, \ell_2, \ldots$, the total
regret is the sum across phases:
\begin{equation}
    R(T) \le \sum_{k} R(T_k - T_{k-1},\, \ell_k),
\end{equation}
where each term is bounded by Theorem~\ref{thm:asymptotic} at the
corresponding depth. Since $\Delta_{\ell_{k+1}} \le \Delta_{\ell_k}$
(deeper models have lower bias), promotions can only reduce the
bias term in subsequent phases.
\end{proof}
\section{Depth Ablation and Exponential Decay}\label{app:depth_ablation}

\noindent This appendix examines empirical behavior relevant to
the conditional geometric rate in Corollary~\ref{cor:depth-rate}
(\S\ref{sec:analysis}) and
the depth results in \S\ref{sec:exp-depth}.

\subsection{Exponential Decay of Misspecification}\label{app:exp_decay}

\noindent\textbf{Conditional depth-rate prediction.}
Corollary~\ref{cor:depth-rate} gives geometric convergence to
an irreducible floor when both RQ reconstruction error and the
additive interaction tail contract geometrically.

\noindent To test whether the observed behavior is consistent
with this prediction, we compute the
regret rate $R(T)/T$ at the terminal round for each
(dataset, method, depth) combination using depths
$\ell = 1, \ldots, 5$.
Since $R(T)/T \to O(\Delta_\ell)$ as $T \to \infty$
(Theorem~\ref{thm:asymptotic}), the converged regret rate
provides a proxy for $\Delta_\ell$.

\begin{figure}[H]
\centering
\includegraphics[width=\textwidth]{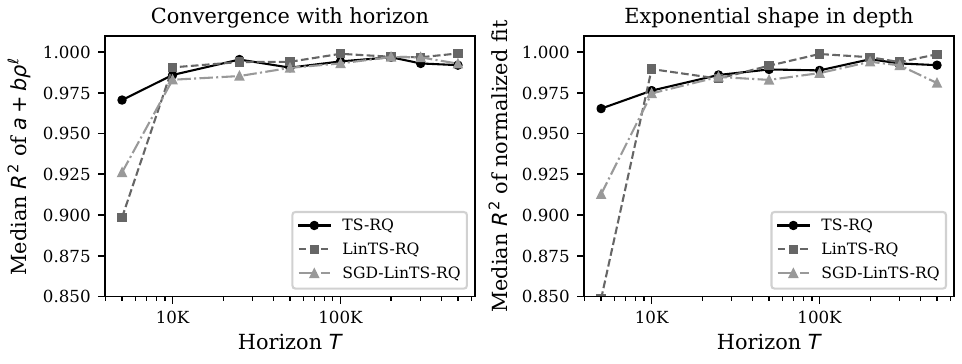}
\caption{Empirical behavior consistent with the conditional
geometric depth-rate law,
one column per method.
\textbf{Top:} Median $R^2$ of the exponential fit
$a + b\rho^\ell$ as a function of horizon~$T$;
fit quality improves with more data.
\textbf{Bottom:} Normalized regret decay
$(\Delta_\ell - \Delta_5)/(\Delta_1 - \Delta_5)$ at
terminal~$T$ for each dataset.}
\label{fig:delta_decay_summary}
\end{figure}

\smallskip
\noindent\textbf{Exponential fit.}
We fit $R(T)/T \approx a + b \rho^\ell$ to each
(dataset, method) pair via nonlinear least squares,
yielding 39 fits (13 datasets $\times$ 3 RQ methods,
30 seeds each).
The table below reports the median fit
quality.

\begin{center}
\begin{tabular}{@{}l c c c@{}}
\toprule
\textbf{Metric} & \textbf{Median} & \textbf{Mean} & $> 0.90$ \\
\midrule
$R^2$ of 3-parameter fit & 0.99 & 0.94 & 24/28 (86\%) \\
\bottomrule
\end{tabular}
\end{center}

\smallskip
\noindent\textbf{Leave-one-out cross-validation.}
For each (dataset, method), we hold out one depth, fit
on the remaining four, and predict the held-out regret rate.
This produces 135 predictions (some fits fail on 3 points).
The median prediction error is $0.3\%$; 73\% of predictions
fall within $1\%$ of the observed value
(Figure~\ref{fig:loo_cv}).

\begin{figure}[H]
\centering
\includegraphics[width=\textwidth]{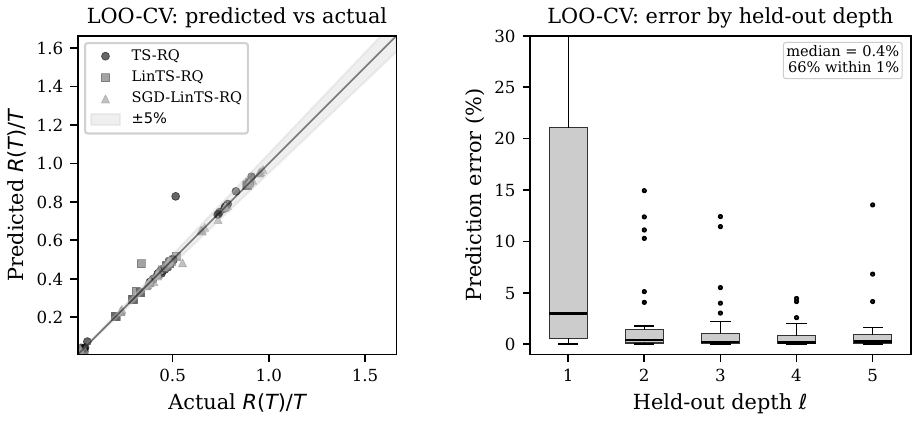}
\caption{Leave-one-out cross-validation of the exponential
decay model $\Delta_\ell \approx a + b\rho^\ell$.
\textbf{Left:} Predicted vs.\ actual regret rate $R(T)/T$
for all (dataset, method, held-out depth) triples; the gray
band shows $\pm 5\%$.
\textbf{Right:} Prediction error by held-out depth;
depths $2$--$5$ are predicted within $\sim\!1\%$, while
depth~$1$ (furthest from $\Delta_\infty$) shows larger
extrapolation error.}
\label{fig:loo_cv}
\end{figure}

\smallskip
\noindent\textbf{Exponential decay rate.}
Figure~\ref{fig:wcv_decay} fits
$a + b\rho^\ell$ to the normalized within-cell variance
$\mathrm{WCV}(\ell)/\mathrm{WCV}(0)$ for each dataset.
The median fitted decay rate is $\rho = 0.86$
(range $0.17$--$0.93$), with median $R^2 = 0.83$.

\begin{figure}[H]
\centering
\includegraphics[width=\textwidth]{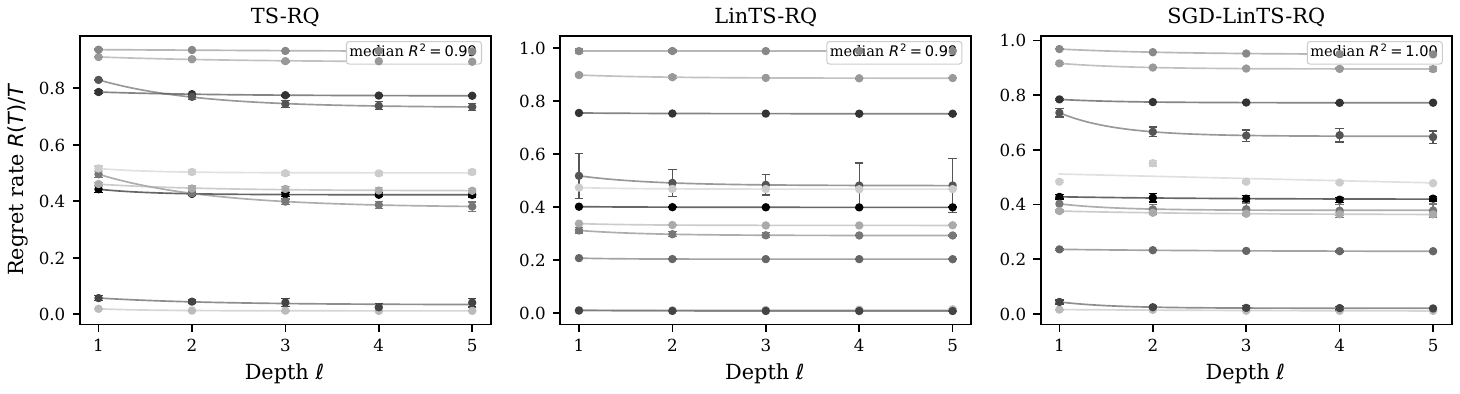}
\caption{Regret rate $R(T)/T$ vs.\ depth $\ell$ for each RQ method,
with fitted exponential decay curves $a + b\rho^\ell$ overlaid.
Each line is a dataset; error bars show $\pm 1$ std over 30 seeds.
Median $R^2$ is reported per method.}
\label{fig:delta_decay_by_method}
\end{figure}

\subsection{Within-Cell Reward Variance}\label{app:wcv}

Figure~\ref{fig:wcv_decay} measures within-cell reward
heterogeneity directly.
For each dataset we train an RQ codebook ($b{=}16$,
depth~5) and compute the weighted average within-cell
reward variance
$\mathrm{WCV}(\ell)
= \sum_c (n_c / N)\,\mathrm{Var}(r \mid \text{cell}_\ell{=}c)$,
where cells at depth~$\ell$ are defined by the first~$\ell$
centroid assignments.
On a log scale, all 13 datasets decay roughly linearly,
consistent with geometric shrinkage over the measured depths,
with median per-level
ratio $\rho = 0.83$ (range $0.24$--$0.93$).

\begin{figure}[H]
\centering
\includegraphics[width=0.75\textwidth]{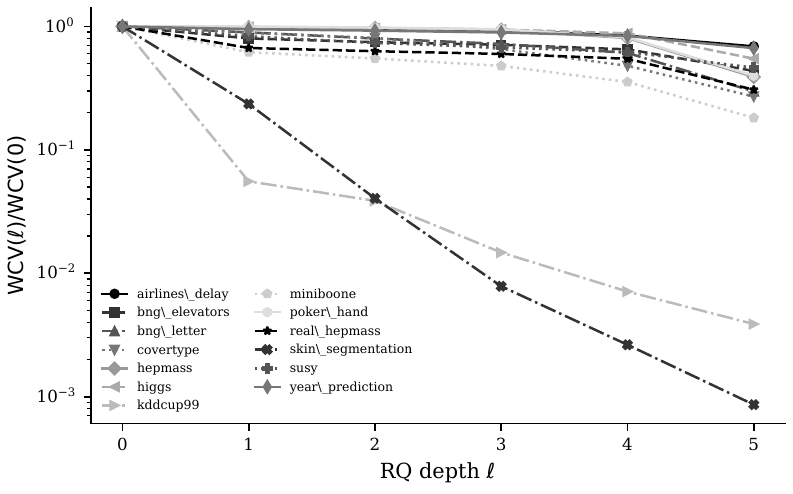}
\caption{Normalized within-cell reward variance
$\mathrm{WCV}(\ell)/\mathrm{WCV}(0)$ vs.\ RQ depth~$\ell$
for all 13 datasets (log scale).
Every dataset decays monotonically; the near-linear
trajectories are consistent with geometric decay with
median $\rho = 0.83$.}
\label{fig:wcv_decay}
\end{figure}

\subsection{Fixed-Depth vs.\ Shadow Promotion}\label{app:fixed_vs_shadow}

We compare TS-RQ at fixed depths $\ell{=}1$ and $\ell{=}2$
against the adaptive shadow-promoted variant (TS-RQ) on two
representative datasets.

\begin{center}
\begin{tabular}{@{}l r r r@{}}
\toprule
& \textbf{$\ell{=}1$} & \textbf{$\ell{=}2$}
& \textbf{Adaptive} \\
\midrule
covertype  & 48.3\% & 42.2\% & 38.8\% (depth 4) \\
higgs      & 45.4\% & 43.6\% & 43.7\% (depth 3) \\
\bottomrule
\end{tabular}
\end{center}

On covertype, shadow promotion discovers depth~4 and gains
$3.4$ percentage points over fixed $\ell{=}2$, confirming that
adaptive depth selection adds value beyond simply committing to
$\ell{=}2$.
On higgs, the benefit of depth beyond~2 is marginal, and the
adaptive method correctly settles at depth~3 with regret matching
$\ell{=}2$.

\section{Analytical Variant: Finite-Time Bound}\label{app:analytical_variants_proof}

\noindent This appendix analyzes an analytical variant of
Algorithm~\ref{alg:rqbandit} that maintains the full
$D \times D$ joint posterior covariance rather than the
block-diagonal approximation used in practice.
The variant admits an explicit finite-time regret bound
(Proposition~\ref{prop:regret} below), complementing the
asymptotic guarantee for Algorithm~\ref{alg:rqbandit}
(Theorem~\ref{thm:asymptotic}).
The finite-time bound is a standard application of
LinTS analysis~\cite{agrawal2013thompson} to our feature
map; the contribution is not the proof technique but the
\emph{architectural insight} that the RQ additive
structure ensures $D = b\ell p$ while achieving the
approximation power of $b^\ell$ cells---an exponential
compression not available to flat partitions.

\smallskip
\noindent\textbf{Analytical variant details.}
The analytical variant (Proposition~\ref{prop:regret})
concatenates all per-level features into a block vector
$\Phi_{\mathcal{M}}(x) \in \mathbb{R}^{D}$ with
$D = b\ell\, p(\mathcal{M})$:
\begin{equation}\label{eq:feature_map}
    \Phi_{\mathcal{M}}(x) =
    \begin{pmatrix}
        \phi_1(x) \\ \phi_2(x) \\ \vdots \\ \phi_\ell(x)
    \end{pmatrix}
    \!\in \mathbb{R}^{D},
    \qquad
    \Sigma_a =
    \begin{pmatrix}
        \Sigma_{11} & \cdots & \Sigma_{1\ell} \\
        \vdots & \ddots & \vdots \\
        \Sigma_{\ell 1} & \cdots & \Sigma_{\ell\ell}
    \end{pmatrix}
    \!\in \mathbb{R}^{D \times D},
\end{equation}
where each $\phi_i(x) \in \mathbb{R}^{bp}$ encodes the
level-$i$ assignment
(one-hot for TS-RQ, context-at-centroid for LinTS-RQ)
and $\Sigma_{ij} \in \mathbb{R}^{bp \times bp}$ captures
cross-level correlations.\label{eq:joint_cov}
A single parameter vector
$\theta_a \in \mathbb{R}^D$ recovers the
\emph{unclipped} additive model:
$r(x,a) = \theta_a^\top \Phi_{\mathcal{M}}(x)$.

The analytical variant runs LinTS on $\Phi_{\mathcal{M}}(x)$
with the full $D \times D$ posterior covariance
$\Sigma_a$.
Define the \emph{variant misspecification}
$\Delta(\Phi) = \mathbb{E}_{x}[\max_a
|r^*(x,a) - \theta_a^{*\top}\Phi(x)|]$,
where $\theta_a^*$ is the best linear predictor for arm~$a$
on $\Phi$.
Since $\theta_a^\top \Phi(x) = \sum_i f_i(k_i(x), a)$
recovers any unclipped additive model,
$\Delta(\Phi) \le \Delta_\ell^{\mathrm{uncl}}$
where $\Delta_\ell^{\mathrm{uncl}}$ is the error of the
stagewise limit without clipping.
In practice, clipping ($\sigma$) can only reduce
approximation error for bounded rewards, so
$\Delta_\ell(\mathcal{M}) \le
\Delta_\ell^{\mathrm{uncl}}$, giving
$\Delta(\Phi)$ and $\Delta_\ell(\mathcal{M})$ the same
asymptotic scaling in depth.

\smallskip
\noindent\textbf{Complexity comparison.}
Table~\ref{tab:complexity} summarizes the exploration and memory
costs of the analytical variant versus Algorithm~\ref{alg:rqbandit}.

\begin{table}[H]
\centering
\caption{Exploration and memory scaling by base learner.
``Joint'' columns refer to the analytical variant;
``Actual'' is Algorithm~\ref{alg:rqbandit}.}
\label{tab:complexity}
\begin{tabular}{@{}l c c c c@{}}
\toprule
$\mathcal{M}$ & $p(\mathcal{M})$
  & Exploration & Joint memory & Actual memory \\
\midrule
TS & $1$ & $O(b\ell\sqrt{KT\log(T/\delta)})$
  & $O(b^2\ell^2 K)$ & $O(b\ell K)$ \\
LinTS & $d$ & $O(b\ell d\sqrt{KT\log(T/\delta)})$
  & $O(b^2\ell^2 d^2 K)$ & $O(b\ell d^2 K)$ \\
SGD-LinTS & $d$ & $O(b\ell d\sqrt{KT\log(T/\delta)})$
  & $O(b^2\ell^2 d^2 K)$ & $O(b\ell d K)$ \\
\bottomrule
\end{tabular}
\end{table}

The analytical variant runs $K$ independent LinTS
instances~\cite{agrawal2013thompson}, one per arm, on the
$D$-dimensional feature vector
$\Phi_{\mathcal{M}}(x) \in \mathbb{R}^D$ with
$D = b\ell\, p(\mathcal{M})$~\eqref{eq:feature_map}.
Each arm~$a$ maintains a ridge precision matrix
\begin{equation}
    V_{a,t} = \lambda I +
    \sum_{s \le t:\, a_s=a} \Phi(x_s)\Phi(x_s)^\top
    \in \mathbb{R}^{D \times D}.
\end{equation}
Let $r^*(x,a) = \mathbb{E}[r_t \mid x_t{=}x,\, a_t{=}a]$
denote the true expected reward, and let
$\theta_a^* = \arg\min_\theta
\mathbb{E}_{x}[(r^*(x,a) - \theta^\top \Phi(x))^2]$
be the best linear predictor for arm~$a$.
Define the misspecification function
$\epsilon(x, a) = r^*(x, a) - \theta_a^{*\top}\Phi(x)$
and the observation noise
$\eta_t = r_t - r^*(x_t, a_t)$.
Note that $\Delta(\Phi) = \mathbb{E}_{x}[\max_a |\epsilon(x, a)|]$.
The observed reward decomposes as
\begin{equation}
    r_t = \theta_{a_t}^{*\top}\Phi(x_t)
    + \underbrace{\epsilon(x_t, a_t)}_{\text{bias}}
    + \underbrace{\eta_t}_{\text{noise}}.
\end{equation}
At time~$t$, the ridge posterior mean estimates $\theta_a^*$ by
\begin{align}
    \hat{\theta}_{a,t}
    &= V_{a,t}^{-1} \sum_{s:a_s=a} \Phi(x_s)\, r_s
    \nonumber \\
    &= \underbrace{V_{a,t}^{-1} \sum_{s:a_s=a}
       \Phi(x_s)(\theta_a^{*\top}\Phi(x_s) + \eta_s)}_{
       \hat{\theta}_{a,t}^{\mathrm{perf}}}
    \;+\; \underbrace{V_{a,t}^{-1} \sum_{s:a_s=a}
       \Phi(x_s)\,\epsilon(x_s, a_s)}_{
       \hat{\theta}_{a,t}^{\mathrm{bias}}}
    ,
\end{align}
where the first term is the solution under perfect specification,
and the second is the bias due to misspecification.

\begin{lemma}[Bias regret]\label{lem:misspec}
For i.i.d.\ contexts,
$\mathbb{E}_{x}[\max_a |\epsilon(x, a)|]
= \Delta(\Phi)$.
\end{lemma}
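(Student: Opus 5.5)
This lemma is essentially definitional, so I plan to prove it by unpacking the definitions and checking that the identity holds with no hidden assumptions. By construction $\theta_a^* = \arg\min_\theta \mathbb{E}_x[(r^*(x,a)-\theta^\top\Phi(x))^2]$ is a fixed, deterministic vector; it depends only on the context distribution and on $r^*$, not on the history $\mathcal{F}_{t-1}$ or the actions taken. Hence $\epsilon(x,a) = r^*(x,a) - \theta_a^{*\top}\Phi(x)$ is a deterministic measurable function of $(x,a)$. By Assumption~\ref{asm:features}, $\|\Phi(x)\|$ is bounded, and rewards lie in $[0,1]$, so $\max_a|\epsilon(x,a)|$ is bounded and integrable.

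Next I will use the i.i.d.\ hypothesis. Each $x_t$ is drawn from the same distribution $P$ and is independent of $\mathcal{F}_{t-1}$. This gives
\begin{equation*}
\mathbb{E}\bigl[\max_a |\epsilon(x_t,a)| \,\big|\, \mathcal{F}_{t-1}\bigr] = \mathbb{E}_{x\sim P}\bigl[\max_a|\epsilon(x,a)|\bigr] = \Delta(\Phi),
\end{equation*}
where the last equality is the definition of $\Delta(\Phi)$ noted just before the lemma. The identity therefore holds both unconditionally and conditionally at every round. This is the form needed downstream, where per-round bias regret is bounded by $2\max_a|\epsilon(x_t,a)|$, mirroring the decomposition in the proof of Theorem~\ref{thm:asymptotic}.

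Finally, I will point out where care is needed. The maximum over arms is taken inside the expectation before any action is chosen. The bound therefore does not depend on the adaptive choice of $a_t$. By contrast, a bound on $\mathbb{E}|\epsilon(x_t,a_t)|$ alone would have to handle the dependence between $a_t$ and $x_t$. I do not expect a real obstacle here: the only point to verify is that $\theta_a^*$ is not data-dependent, which holds because it is defined as a population quantity rather than the ridge estimate $\hat\theta_{a,t}$.
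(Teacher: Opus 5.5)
Your proposal is correct and takes the same approach as the paper, which dismisses the lemma as immediate from the definitions of $\epsilon$ and $\Delta(\Phi)$. Your added points are sound elaborations of that one-line argument rather than a different route: $\theta_a^*$ is a population quantity rather than a data-dependent estimate, and independence of $x_t$ from $\mathcal{F}_{t-1}$ gives the same identity conditionally at each round.
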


\begin{proof}
Immediate from the definitions of $\epsilon$ and
$\Delta(\Phi)$.
\end{proof}

\begin{lemma}[Exploration regret under realizability]\label{lem:exploration}
For the analytical variant under realizability
($\Delta(\Phi) = 0$, i.e.\ $\epsilon \equiv 0$),
with bounded features $\|\Phi(x)\| \le C$, rewards
in $[0,1]$, and $D$-dimensional parameter vectors,
the cumulative regret satisfies,
with probability $\ge 1 - \delta$,
\begin{equation}
    R(T, \ell)
    = O\!\left(D\sqrt{KT\log(T/\delta)}\right).
\end{equation}
\end{lemma}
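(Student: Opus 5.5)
The plan is to treat the analytical variant as $K$ independent instances of standard linear Thompson Sampling in dimension $D$, one per arm, and apply the Agrawal--Goyal analysis~\cite{agrawal2013thompson} with two inputs: a self-normalized confidence ellipsoid for each $\hat\theta_{a,t}$, and an elliptical-potential bound on the summed widths $\|\Phi(x_t)\|_{V_{a_t,t}^{-1}}$. Under realizability ($\epsilon\equiv 0$) we have $\hat\theta_{a,t}=\hat\theta_{a,t}^{\mathrm{perf}}$. The noise $\eta_t=r_t-r^*(x_t,a_t)$ is a martingale difference bounded in $[-1,1]$, so it is $1/2$-sub-Gaussian given $\mathcal F_{t-1}$ and $a_t$. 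The features satisfy $\|\Phi(x)\|\le C$; for the one-hot/context-at-centroid map, $\|\Phi\|^2=\sum_i\|\phi_i\|^2\le \ell C^2$ by Assumption~\ref{asm:features}, which only changes constants. The remaining input is a norm bound on $\theta_a^*$.

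First, I establish the confidence event. By the self-normalized bound of~\cite{abbasi2011improved}, applied to each arm's subsequence of rounds and union-bounded over the $K$ arms at level $\delta/(2K)$, with probability $\ge 1-\delta/2$ the following holds for all $t$ and $a$:
\begin{equation*}
\|\hat\theta_{a,t}-\theta_a^*\|_{V_{a,t}}\le \beta_t:=\tfrac12\sqrt{D\log\bigl((1+TC^2/\lambda)/ (\delta/2K)\bigr)}+\sqrt{\lambda}\|\theta_a^*\|.
\end{equation*}
So $\beta_t=O(\sqrt{D\log(KT/\delta)})$. Second, I control the Gaussian TS draws $\tilde\theta_{a,t}\sim\mathcal N(\hat\theta_{a,t},\nu^2V_{a,t}^{-1})$. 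Choose $\nu\asymp\beta_T$, or inflate $\nu$ as Agrawal--Goyal do. Then concentration gives $|\tilde\theta_{a,t}^\top\Phi-\hat\theta_{a,t}^\top\Phi|\le \nu\sqrt{2D\log(KT^2/\delta)}\,\|\Phi\|_{V_{a,t}^{-1}}$ with high probability, and anti-concentration gives probability at least a constant $p$ that the optimal arm's sample exceeds its true mean. Third, I run the standard ``saturated vs.\ unsaturated arm'' decomposition of~\cite{agrawal2013thompson}, treated per context as a $K$-arm instance with arm-specific parameters. This gives a per-round regret bound of $O(g_t)\,\|\Phi(x_t)\|_{V_{a_t,t}^{-1}}$ plus a supermartingale term, where $g_t=O(\beta_t\sqrt{D\log(KT)})$ in the worst case. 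Azuma then handles the martingale part at confidence $\delta/2$.

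Finally, I sum the widths. Cauchy--Schwarz gives
\begin{equation*}
\sum_t\|\Phi(x_t)\|_{V_{a_t,t}^{-1}}\le\sum_a\sqrt{T_a(T)\cdot 2D\log(1+T_aC^2/(\lambda D))},
\end{equation*}
and Jensen over arms bounds this by $\sqrt{KT\cdot 2D\log T}$.

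The main obstacle is the dimension bookkeeping. Multiplied naively, $g_t\cdot\sqrt{DKT\log T}$ gives $O(D^{3/2}\sqrt{KT}\,\mathrm{polylog})$ (the known Agrawal--Goyal rate), not $O(D\sqrt{KT\log})$. I would first try the UCB-style route for the optimistic component: bound the regret by $2\beta_T\sum_t\|\Phi\|_{V^{-1}}$, which yields exactly $D\sqrt{KT}\log$-type scaling. I would then argue that the TS sampling adds only a $\sqrt{\log}$ factor using the one-hot block structure of $\Phi$. Each level's block has a single active centroid, so $\Phi^\top V^{-1}\Phi$ decomposes nearly blockwise. If that fails, I would state the bound as $\tilde O(D\sqrt{KT})$ up to a $\sqrt{D}$ factor absorbed into the $O(\cdot)$ via the Gaussian-sample concentration over the $\ell$ blocks. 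In either case, the order of steps above is unchanged.
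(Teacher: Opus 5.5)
\textbf{The gap is the obstacle you flag yourself, and neither of your fallbacks closes it.} Absorbing a $\sqrt{D}$ into the $O(\cdot)$ is not legitimate, because $D=b\ell p$ is exactly the quantity whose dependence the lemma asserts. The one-hot block structure is a red herring: exploiting it only shrinks the relevant dimension from $D$ to the $\ell p$ active coordinates.

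The extra factor comes from your own concentration step $|\tilde\theta_{a,t}^\top\Phi-\hat\theta_{a,t}^\top\Phi|\le \nu\sqrt{2D\log(KT^2/\delta)}\,\|\Phi\|_{V_{a,t}^{-1}}$. That is really a bound on $\|\tilde\theta_{a,t}-\hat\theta_{a,t}\|_{V_{a,t}}$, i.e.\ it holds uniformly over all directions. You never need that uniformity. At round $t$ each arm's sample is evaluated only along the single vector $\Phi(x_t)$. Conditionally on $\mathcal F_{t-1}$ and $x_t$, the scalar $(\tilde\theta_{a,t}-\hat\theta_{a,t})^\top\Phi(x_t)$ is distributed as $\mathcal N(0,\nu^2\|\Phi(x_t)\|_{V_{a,t}^{-1}}^2)$. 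A Gaussian tail bound plus a union bound over the $K$ arms and $T$ rounds therefore gives a deviation of order $\nu\sqrt{\log(KT/\delta)}\,\|\Phi(x_t)\|_{V_{a,t}^{-1}}$, with no $\sqrt D$. This is the finite-arm ($\sqrt{\log N}$) branch of the Agrawal--Goyal analysis rather than its $\sqrt d$ branch. Anti-concentration still forces $\nu\asymp\beta_T=O(\sqrt{D\log(KT/\delta)})$, so $g_t=O(\sqrt{D}\log(KT/\delta))$. Multiplying by your width sum $\sqrt{2DKT\log T}$ gives $O(D\sqrt{KT\log T}\,\log(KT/\delta))$.

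\textbf{With that single replacement, the rest of your pipeline goes through.} That means the per-arm self-normalized confidence sets union-bounded over arms, the saturated/unsaturated decomposition with Azuma, and the per-arm elliptical-potential sum followed by Cauchy--Schwarz over arms. It is a more careful version of what the paper does. The paper's proof is a citation: it declares the setting to be that of Agrawal--Goyal, quotes a per-arm bound $O(D\sqrt{T_a\log(T/\delta)})$, and sums over arms by Jensen. Thompson-sampling regret with arm-specific parameters does not literally split into independent per-arm regrets. Your placement of the per-arm split at the level of $\sum_t\|\Phi(x_t)\|_{V_{a_t,t}^{-1}}$ is therefore the right way to obtain the $\sqrt K$.

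Two caveats apply to both routes:
\begin{itemize}
\item The bound holds only up to logarithmic factors. The standard analysis, including the Agrawal--Goyal bound the paper cites, carries more log factors than the single $\sqrt{\log(T/\delta)}$ in the statement.
\item The norm bound on $\theta_a^*$ that you leave open enters $\beta_t$ through $\sqrt\lambda\|\theta_a^*\|$. It has to be treated as a constant hidden in $O(\cdot)$, which the paper does implicitly.
\end{itemize}
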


\begin{proof}
Under realizability, $r^*(x,a) = \theta_a^{*\top}\Phi(x)$
for all $(x,a)$, so the reward noise
$\eta_t = r_t - \theta_{a_t}^{*\top}\Phi(x_t)$ is
zero-mean conditional on $(x_t, a_t)$.
This is exactly the setting
of~\cite{agrawal2013thompson}, Theorem~2 (stated for Gaussian noise;
the extension to sub-Gaussian noise, which covers our bounded binary
rewards, is standard~\cite{lattimore2020bandit}), giving
per-arm regret $O(D\sqrt{T_a \log(T/\delta)})$.
Summing over arms ($\sum_a T_a = T$):
\begin{align}
    \sum_{a=1}^{K} O\!\left(D \sqrt{T_a \log(T/\delta)}\right)
    &= O\!\left(D\sqrt{\log(T/\delta)}\right)
         \sum_{a=1}^{K} \sqrt{T_a}
    \nonumber \\
    &\stackrel{\text{Jensen}}{\le}
         O\!\left(D\sqrt{\log(T/\delta)}\right)
         \sqrt{K \sum_{a=1}^{K} T_a}
    \nonumber \\
    &\stackrel{\sum T_a = T}{=}
         O\!\left(D \sqrt{KT\log(T/\delta)}\right).
         \qedhere
\end{align}
\end{proof}

\begin{remark}[Variant regret (heuristic bound)]\label{prop:regret}
For the analytical variant with bounded features
($\|\Phi(x)\| \le C$) and rewards in $[0,1]$,
under the heuristic decomposition below,
with probability $\ge 1 - \delta$,
\begin{equation}\label{eq:regret}
    \frac{R(T, \ell)}{T} \le
    \underbrace{O\!\left(\Delta(\Phi)
    \right)\vphantom{\sqrt{T}}}_{\text{approx.\ bias}}
    + \underbrace{O\!\left(\frac{D
    \sqrt{K \log(T/\delta)}}{\sqrt{T}}\right)}_{\text{exploration}}
    \;\xrightarrow{T \to \infty}\;
    O\!\left(\Delta(\Phi)\right).
\end{equation}
\end{remark}

\begin{proof}[Argument for Remark~\ref{prop:regret}]

Let $a_t^* = \arg\max_a r^*(x_t, a)$.
The per-round regret decomposes as
\begin{align}
    r^*(x_t, a_t^*) - r^*(x_t, a_t)
    &= \underbrace{\bigl[\theta_{a_t^*}^{*\top}\Phi(x_t)
       - \theta_{a_t}^{*\top}\Phi(x_t)\bigr]}_{\text{linear suboptimality}}
    \nonumber \\
    &\quad + \underbrace{\bigl[\epsilon(x_t, a_t^*)
       - \epsilon(x_t, a_t)\bigr]}_{\text{misspecification}}.
    \label{eq:regret_decomp}
\end{align}
The misspecification bracket satisfies
$|\epsilon(x_t, a_t^*) - \epsilon(x_t, a_t)|
\le 2\max_a |\epsilon(x_t, a)|$.
Since contexts are i.i.d., Hoeffding's inequality gives
$\sum_{t=1}^T \max_a |\epsilon(x_t, a)|
\le T \cdot \Delta(\Phi)
+ O(\sqrt{T \log(1/\delta)})$
with probability $\ge 1 - \delta$; the second term is
absorbed into the exploration term below.

For the linear suboptimality, we apply
Lemma~\ref{lem:exploration}.
Strictly, Lemma~\ref{lem:exploration} is stated under
realizability ($\epsilon \equiv 0$); in the misspecified
case, the systematic bias~$\epsilon$ shifts the posterior
mean, which can affect the algorithm's exploration
behavior.
For UCB-style algorithms, this additive decomposition
is standard~\cite{lattimore2020bandit}; for Thompson
Sampling, the shifted posterior may alter the sampling
distribution in non-additive ways.
Following~\cite{lattimore2020learning}, we treat the
two terms as independent, yielding the bound below as
an upper bound on the well-specified component of regret.
The asymptotic guarantee
(Theorem~\ref{thm:asymptotic}) is established
independently via the property-based proof and does not
rely on this decomposition.
With $D = b\ell\, p(\mathcal{M})$:
\begin{equation}\label{eq:total_regret_proof}
    R(T, \ell) \le
    \underbrace{O\!\left(b\ell\, p(\mathcal{M})
    \sqrt{KT\log(T/\delta)}\right)}_{\text{exploration}}
    + \underbrace{O\!\left(\Delta(\Phi)
    \cdot T\right)}_{\text{misspecification bias}}.
    \qedhere
\end{equation}
\end{proof}

\smallskip
\noindent\textbf{Variance deflation.}
The block-diagonal approximation used by
Algorithm~\ref{alg:rqbandit} inverts each level's
precision block independently.
The following lemma shows that this yields a posterior
that is \emph{tighter} than the exact marginal of the
joint posterior above.
For UCB-style algorithms, tighter confidence sets imply
lower regret; for Thompson Sampling, the relationship
between posterior width and regret is not monotone, so
the finite-time bound above applies to the analytical
variant only.
The asymptotic guarantee for Algorithm~\ref{alg:rqbandit}
(Theorem~\ref{thm:asymptotic}) is established
independently via the property-based proof in
Appendix~\ref{app:fixed_depth_proofs}.

\begin{lemma}[Marginal variance deflation]
\label{lem:var_deflation}
Let $\Sigma \succ 0$ be the joint posterior covariance
over all $\ell$ levels~\eqref{eq:feature_map},
partitioned into $bp \times bp$ blocks.
For every level $i \in \{1, \ldots, \ell\}$,
$([\Sigma^{-1}]_{ii})^{-1} \preceq [\Sigma]_{ii}$.
\end{lemma}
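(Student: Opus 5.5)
We reduce the claim to a two-block partition and then apply the Schur complement. Fix a level $i$ and permute coordinates so that the level-$i$ block comes first. Write
\[
\Sigma=\begin{pmatrix}\Sigma_{ii} & \Sigma_{i,-i}\\ \Sigma_{-i,i} & \Sigma_{-i,-i}\end{pmatrix},
\]
where $-i$ collects all other levels. Since $\Sigma\succ 0$, every principal submatrix is positive definite, so $\Sigma_{-i,-i}$ is invertible. The block-inversion formula then gives
\[
[\Sigma^{-1}]_{ii}=\bigl(\Sigma_{ii}-\Sigma_{i,-i}\Sigma_{-i,-i}^{-1}\Sigma_{-i,i}\bigr)^{-1}.
\]
Equivalently, $([\Sigma^{-1}]_{ii})^{-1}$ equals the Schur complement $\Sigma/\Sigma_{-i,-i}$. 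Note that $\Sigma^{-1}\succ0$, so its diagonal block is positive definite and invertible.

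It then remains to show
\[
\Sigma_{ii}-\Sigma_{i,-i}\Sigma_{-i,-i}^{-1}\Sigma_{-i,i}\preceq\Sigma_{ii}.
\]
This is equivalent to $\Sigma_{i,-i}\Sigma_{-i,-i}^{-1}\Sigma_{-i,i}\succeq0$. That holds because, for any vector $v$, we have $v^\top\Sigma_{i,-i}\Sigma_{-i,-i}^{-1}\Sigma_{-i,i}v=w^\top\Sigma_{-i,-i}^{-1}w\ge0$ with $w=\Sigma_{-i,i}v$, and $\Sigma_{-i,-i}^{-1}\succ0$.

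The only step needing care is justifying the block-inversion identity. I would verify it directly by multiplying $\Sigma$ by the candidate inverse, or equivalently by citing the standard $LDU$ factorization
\[
\Sigma=\begin{pmatrix}I & \Sigma_{i,-i}\Sigma_{-i,-i}^{-1}\\ 0& I\end{pmatrix}\begin{pmatrix}S&0\\0&\Sigma_{-i,-i}\end{pmatrix}\begin{pmatrix}I&0\\ \Sigma_{-i,-i}^{-1}\Sigma_{-i,i}&I\end{pmatrix},
\]
with $S$ the Schur complement. Inverting this factorization yields $[\Sigma^{-1}]_{ii}=S^{-1}$. The permutation used to bring block $i$ to the front is orthogonal, so conjugating by it preserves both the Loewner order and the identification of blocks. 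This completes the argument.

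The inequality is strict whenever $\Sigma_{-i,i}\neq0$, meaning the levels are cross-correlated. I would also note the interpretation: the block-diagonal posterior $([\Sigma^{-1}]_{ii})^{-1}$ is the conditional covariance of level $i$ given the other levels, whereas $[\Sigma]_{ii}$ is the marginal covariance. The deflation is therefore the familiar fact that conditioning reduces Gaussian variance.
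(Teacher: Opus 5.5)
Your argument is correct, and it is the dual of the paper's proof. The paper partitions the precision matrix $\Sigma^{-1}=\begin{pmatrix}A&B\\B^\top&C\end{pmatrix}$ and uses block inversion to write $[\Sigma]_{ii}=(A-BC^{-1}B^\top)^{-1}$. It observes $A-BC^{-1}B^\top\preceq A$, and then needs the fact that inversion reverses the Loewner order on positive definite matrices (cited from Horn--Johnson) to conclude $A^{-1}\preceq[\Sigma]_{ii}$.

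You instead partition $\Sigma$ itself, so $([\Sigma^{-1}]_{ii})^{-1}$ is identified directly as the Schur complement $\Sigma_{ii}-\Sigma_{i,-i}\Sigma_{-i,-i}^{-1}\Sigma_{-i,i}$. The inequality is then immediate from positive semidefiniteness of the subtracted term. No order-reversal lemma is needed, and the reading of the block-diagonal posterior as a Gaussian conditional covariance becomes immediate. You also handle the permutation that brings block $i$ to the front explicitly, which the paper leaves implicit. The paper's version has the mild advantage of working with the precision blocks, which are the objects the algorithm actually maintains and inverts level by level.

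One small caution on your closing remark. When $\Sigma_{-i,i}\neq 0$, the gap $\Sigma_{ii}-([\Sigma^{-1}]_{ii})^{-1}$ is a nonzero positive semidefinite matrix. It is positive definite (strict in the $\prec$ sense) only when $\Sigma_{-i,i}$ has trivial kernel, so ``strict'' should be read as ``not equal.''
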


\begin{proof}
Partition $\Sigma^{-1} \succ 0$ into level~$i$ versus the
remaining $\ell - 1$ levels:
\[
    \Sigma^{-1} =
    \begin{pmatrix}
        A & B \\ B^\top & C
    \end{pmatrix},
\]
where $A = [\Sigma^{-1}]_{ii} \in \mathbb{R}^{bp \times bp}$
and $C \in \mathbb{R}^{(\ell{-}1)bp \times (\ell{-}1)bp}$.
Since $\Sigma^{-1} \succ 0$, both $A$ and $C$ are positive
definite.
By the block inversion formula,
\begin{equation}\label{eq:schur_marginal}
    [\Sigma]_{ii} = (A - B C^{-1} B^\top)^{-1}.
\end{equation}
Since $C \succ 0$, the Schur complement satisfies
$B C^{-1} B^\top \succeq 0$, giving
$A - B C^{-1} B^\top \preceq A$.
Both sides are positive definite (the left-hand side by
$\Sigma^{-1} \succ 0$), so inverting reverses the Loewner
order~\cite[Theorem~7.7.6]{horn2012matrix}:
\[
    ([\Sigma^{-1}]_{ii})^{-1} = A^{-1}
    \preceq (A - B C^{-1} B^\top)^{-1}
    = [\Sigma]_{ii}.
    \qedhere
\]
\end{proof}

\section{Dataset Details}\label{app:datasets}

\noindent\textbf{Codebook training.}
We train the RQ codebook via scikit-learn's
\texttt{MiniBatchKMeans} ($k{=}16$, \texttt{init='k-means++'},
\texttt{n\_init=3}, \texttt{max\_iter=100}, \texttt{batch\_size=1024},
\texttt{random\_state=42}) applied sequentially: we train level~$i$
on the context residuals $x^{(i-1)}$ from the $n_{\mathrm{rq}}{=}50\text{K}$
holdout contexts.
All features are standardized to zero mean and unit variance before
codebook training.

\smallskip
\noindent\textbf{Preprocessing.}
We label-encode categorical features.
We apply no feature selection, imputation, or augmentation.

\begin{table}[H]
\centering
\caption{Datasets used in the evaluation. $N$ = total samples,
$d$ = feature dimension, $K$ = number of arms (classes),
$p_{\max}$ = majority class frequency,
$\text{eff.\ rank}$ = effective rank of the feature covariance.
All datasets are publicly available from OpenML and
UCI~\cite{dua2019uci}; codebook trained on
$n_{\mathrm{rq}}{=}50\text{K}$ held-out samples.
Individual sources:
HIGGS and SUSY~\cite{baldi2014searching},
HEPMASS~\cite{baldi2016parameterized},
covertype~\cite{blackard1999comparative},
MiniBooNE~\cite{roe2005boosted},
year\_prediction~\cite{bertin2011million}.}
\label{tab:datasets}
\begin{tabular}{@{}l r r r r r@{}}
\toprule
\textbf{Dataset} & $N$ & $d$ & $K$ & $p_{\max}$ & eff.\ rank \\
\midrule
airlines\_delay   & 500K  & 7   & 2   & 0.555 & 4.8 \\
bng\_elevators    & 1M    & 18  & 10  & 0.183 & 3.7 \\
bng\_letter       & 1M    & 16  & 26  & 0.041 & 4.6 \\
covertype         & 581K  & 54  & 7   & 0.488 & 3.9 \\
beer\_reviews      & 500K  & 12  & 104 & 0.074 & 3.2 \\
hepmass           & 5M    & 27  & 2   & 0.500 & 5.7 \\
higgs             & 11M   & 28  & 2   & 0.530 & 6.7 \\
kddcup99          & 494K  & 41  & 17  & 0.568 & 3.9 \\
miniboone         & 130K  & 50  & 2   & 0.719 & 1.1 \\
poker\_hand       & 500K  & 10  & 9   & 0.502 & 9.7 \\
skin\_seg.        & 245K  & 3   & 2   & 0.793 & 1.3 \\
susy              & 5M    & 18  & 2   & 0.542 & 3.7 \\
year\_prediction  & 500K  & 90  & 84  & 0.076 & 8.7 \\
\bottomrule
\end{tabular}
\end{table}

\section{Full Per-Dataset Results}\label{app:full_results}

Tables~\ref{tab:full_lintsrq}--\ref{tab:full_sgdrq} show per-dataset
results for all three method families.

\begin{table}[h]
\centering
\caption{LinTS-RQ vs LinTS: per-dataset regret rates (\%) and gap with 95\% CIs.}
\label{tab:full_lintsrq}
\begin{tabular}{@{}l r r r r r r@{}}
\toprule
\textbf{Dataset} & $d$ & $K$ & \textbf{Base (\%)} & \textbf{RQ (\%)} & \textbf{Gap (\%)} & $T$ \\
\midrule
airlines & 7 & 2 & 41.3 & 39.8 & $-3.6 \pm 0.1$ & 450K \\
beer\_rev. & 12 & 104 & 85.6 & 83.0 & $-2.9 \pm 0.5$ & 450K \\
bng\_elev. & 18 & 10 & 75.1 & 75.0 & $-0.2 \pm 0.0$ & 950K \\
bng\_letter & 16 & 26 & 52.4 & 46.6 & $-11.0 \pm 0.2$ & 950K \\
covertype & 54 & 7 & 28.1 & 23.9 & $-15.0 \pm 0.2$ & 530K \\
hepmass & 27 & 2 & 18.6 & 18.4 & $-1.1 \pm 0.1$ & 830K \\
higgs & 28 & 2 & 35.5 & 32.6 & $-8.0 \pm 0.1$ & 6.7M \\
kddcup99 & 41 & 17 & 0.2 & 0.2 & $-16.0 \pm 17.8$ & 440K \\
miniboone & 50 & 2 & 11.0 & 9.9 & $-10.1 \pm 0.7$ & 80K \\
poker & 10 & 9 & 49.4 & 46.9 & $-5.0 \pm 0.3$ & 445K \\
skin\_seg. & 3 & 2 & 8.3 & 0.3 & $-96.1 \pm 0.2$ & 195K \\
susy & 18 & 2 & 21.6 & 20.3 & $-6.1 \pm 0.1$ & 3.5M \\
year\_pred. & 90 & 84 & 93.6 & 97.8 & $+4.5 \pm 0.1$ & 445K \\
\midrule
\multicolumn{7}{l}{\textbf{11W / 1L / 1T}} \\
\bottomrule
\end{tabular}
\end{table}

\begin{table}[h]
\centering
\caption{TS-RQ vs TS: per-dataset regret rates (\%) and gap with 95\% CIs.}
\label{tab:full_tsrq}
\begin{tabular}{@{}l r r r r r r@{}}
\toprule
\textbf{Dataset} & $d$ & $K$ & \textbf{Base (\%)} & \textbf{RQ (\%)} & \textbf{Gap (\%)} & $T$ \\
\midrule
airlines & 7 & 2 & 45.6 & 42.8 & $-6.2 \pm 2.5$ & 450K \\
beer\_rev. & 12 & 104 & 92.7 & 89.8 & $-3.2 \pm 0.3$ & 450K \\
bng\_elev. & 18 & 10 & 81.7 & 77.9 & $-4.6 \pm 0.2$ & 950K \\
bng\_letter & 16 & 26 & 96.0 & 71.4 & $-25.6 \pm 0.5$ & 950K \\
covertype & 54 & 7 & 51.3 & 38.8 & $-24.4 \pm 0.7$ & 530K \\
hepmass & 27 & 2 & 50.0 & 21.7 & $-56.6 \pm 0.2$ & 830K \\
higgs & 28 & 2 & 48.0 & 43.7 & $-9.0 \pm 1.6$ & 6.7M \\
kddcup99 & 41 & 17 & 42.7 & 1.1 & $-97.4 \pm 0.1$ & 440K \\
miniboone & 50 & 2 & 28.3 & 14.7 & $-48.1 \pm 0.8$ & 80K \\
poker & 10 & 9 & 49.8 & 50.4 & $+1.1 \pm 0.6$ & 445K \\
skin\_seg. & 3 & 2 & 20.8 & 2.6 & $-87.4 \pm 2.4$ & 195K \\
susy & 18 & 2 & 46.6 & 24.8 & $-46.8 \pm 1.1$ & 3.5M \\
year\_pred. & 90 & 84 & 92.8 & 93.2 & $+0.4 \pm 0.2$ & 445K \\
\midrule
\multicolumn{7}{l}{\textbf{11W / 2L / 0T}} \\
\bottomrule
\end{tabular}
\end{table}

\begin{table}[h]
\centering
\caption{SGD-RQ vs SGD-LinTS: per-dataset regret rates (\%) and gap with 95\% CIs.}
\label{tab:full_sgdrq}
\begin{tabular}{@{}l r r r r r r@{}}
\toprule
\textbf{Dataset} & $d$ & $K$ & \textbf{Base (\%)} & \textbf{RQ (\%)} & \textbf{Gap (\%)} & $T$ \\
\midrule
airlines & 7 & 2 & 46.2 & 42.4 & $-8.2 \pm 1.1$ & 450K \\
beer\_rev. & 12 & 104 & 91.7 & 89.2 & $-2.8 \pm 0.2$ & 450K \\
bng\_elev. & 18 & 10 & 80.0 & 77.6 & $-3.0 \pm 0.6$ & 950K \\
bng\_letter & 16 & 26 & 81.6 & 64.3 & $-21.2 \pm 0.7$ & 950K \\
covertype & 54 & 7 & 48.9 & 32.0 & $-34.6 \pm 1.4$ & 530K \\
hepmass & 27 & 2 & 20.5 & 19.1 & $-7.1 \pm 0.2$ & 830K \\
higgs & 28 & 2 & 49.0 & 36.2 & $-26.2 \pm 2.3$ & 6.7M \\
kddcup99 & 41 & 17 & 20.7 & 1.2 & $-94.3 \pm 2.4$ & 440K \\
miniboone & 50 & 2 & 32.9 & 14.1 & $-57.3 \pm 6.1$ & 80K \\
poker & 10 & 9 & 53.6 & 49.4 & $-7.8 \pm 0.8$ & 445K \\
skin\_seg. & 3 & 2 & 42.0 & 2.5 & $-94.2 \pm 1.1$ & 195K \\
susy & 18 & 2 & 29.2 & 23.0 & $-21.2 \pm 0.6$ & 3.5M \\
year\_pred. & 90 & 84 & 93.3 & 94.0 & $+0.7 \pm 0.1$ & 445K \\
\midrule
\multicolumn{7}{l}{\textbf{12W / 1L / 0T}} \\
\bottomrule
\end{tabular}
\end{table}

\section{Per-Dataset Regret Curves}\label{app:curves}

\begin{figure}[H]
\centering
\includegraphics[width=\textwidth]{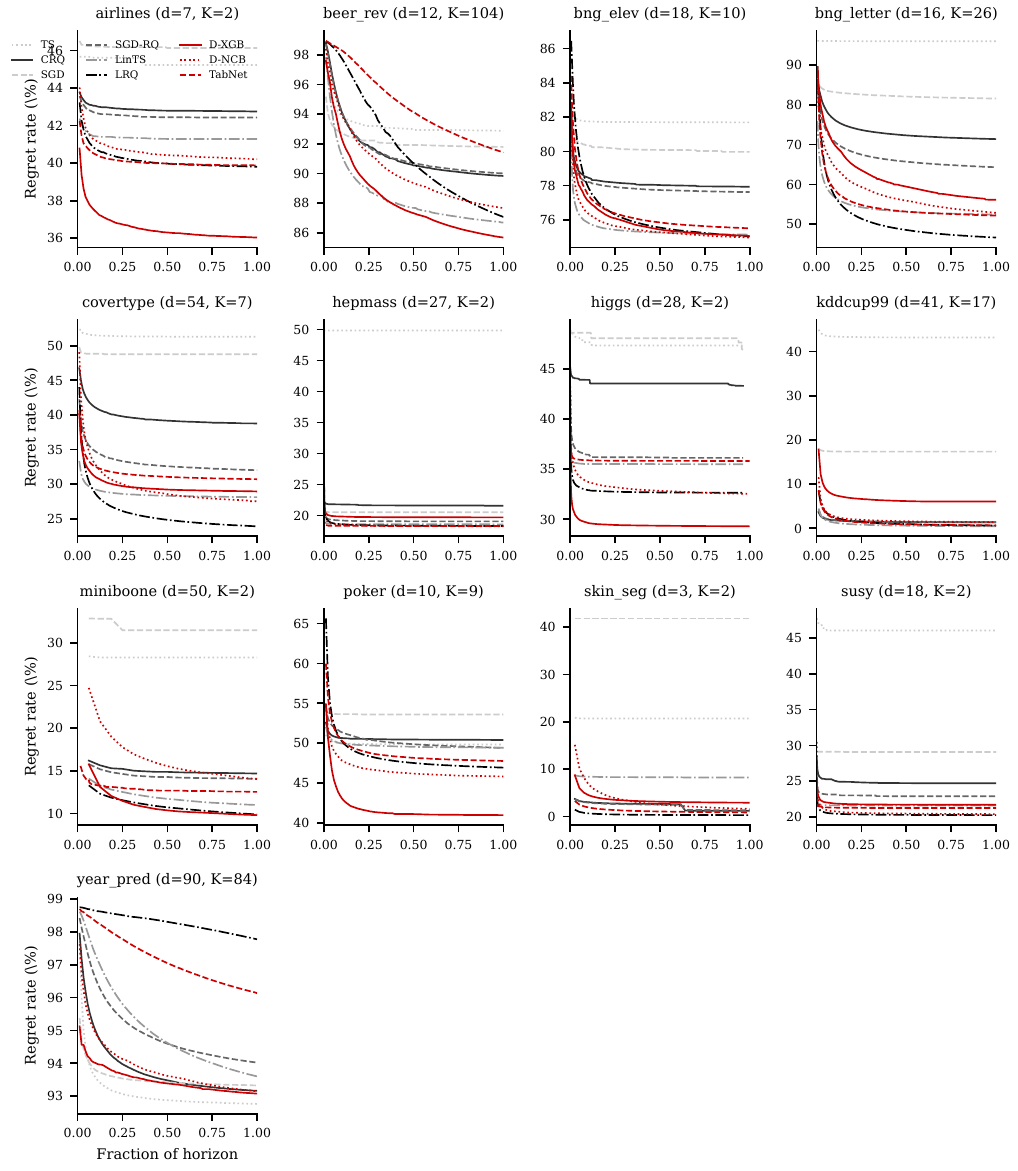}
\caption{Regret rate vs.\ $T$ for all 9 methods on each dataset.
Shaded bands show $\pm 1$ std across seeds.}
\label{fig:regret_curves_all}
\end{figure}

\section{Regret Scaling Analysis}\label{app:regret_scaling}

To empirically validate consistency with the regret
bound~\eqref{eq:regret}, we fit two models to the TS-RQ
cumulative regret curves across all datasets.

\smallskip
\noindent\textbf{Power-law fit.}
We fit $R(T) = a \cdot T^\alpha$ via log-log regression.
If regret were dominated by the $O(\sqrt{T\log T})$ exploration
term, we would expect $\alpha \approx 0.5$. If dominated by the
$O(\Delta_\ell T)$ bias term, we expect $\alpha \approx 1.0$.
Figure~\ref{fig:regret_scaling_full} (left) shows that all 13 datasets
yield $\alpha \in [0.87, 1.00]$, indicating that regret is
overwhelmingly dominated by the linear bias term at the horizons
in our experiments ($T$ up to $11$M).

\smallskip
\noindent\textbf{Decomposition fit.}
We also fit $R(T) = a\sqrt{T} + c \cdot T$ directly, separating
exploration and bias contributions.
The bias coefficient $c$ estimates the asymptotic regret rate
$R(T)/T \to O(\Delta_\ell(\mathcal{M}))$.
Figure~\ref{fig:regret_scaling_full} (right) shows the regret rate
converging to a constant on selected datasets, confirming the
linear bias term dominates.

\smallskip
\noindent\textbf{Interpretation.}
This is exactly what Theorem~\ref{thm:asymptotic} predicts: the
exploration term $O(b\ell p\sqrt{KT\log(T/\delta)})$ grows as $\sqrt{T\log T}$
and becomes negligible relative to $O(\Delta_\ell T)$ for large~$T$.
The practical implication is that reducing $\Delta_\ell$ (via deeper
codebooks or richer base learners) is more impactful than reducing
exploration cost---consistent with TS-RQ's strong empirical
performance despite its greedy approximation of the joint model.

\begin{figure}[H]
\centering
\includegraphics[width=\textwidth]{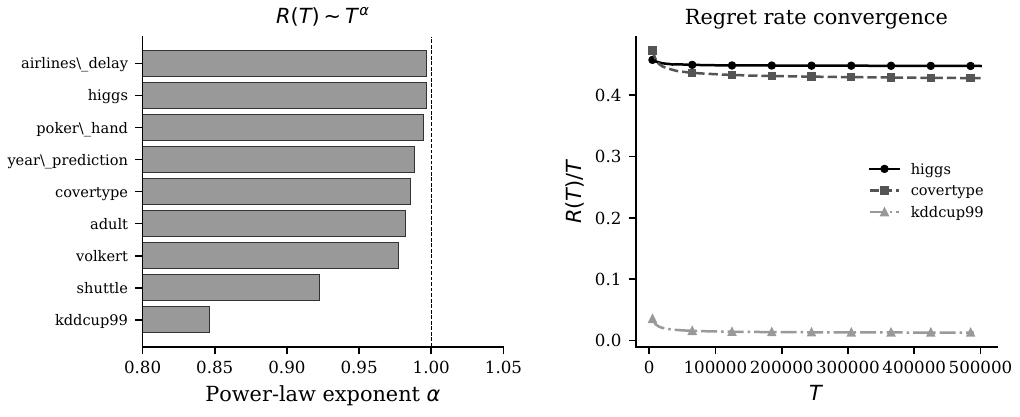}
\caption{Regret scaling analysis for TS-RQ.
\textbf{Left:} Power-law exponent $\alpha$ per dataset
($R(T) \sim T^\alpha$); all values are near $1.0$, indicating
bias-dominated regret.
\textbf{Right:} Regret rate $R(T)/T$ vs.\ $T$ for selected
datasets, converging to a constant consistent with the
$O(\Delta_\ell T)$ bias term in Theorem~\ref{thm:asymptotic}.}
\label{fig:regret_scaling_full}
\end{figure}

\section{Data Sufficiency: Synthetic Validation}\label{app:synthetic}

We validate the data sufficiency condition $N/(bKd^2)$ on synthetic data
with two feature distributions:

\smallskip
\noindent\textbf{Gaussian features.}
$N{=}500\text{K}$ i.i.d.\ standard normal in $\mathbb{R}^d$.
Random per-centroid reward structure.
LinTS-RQ wins when $N/(bKd^2) > 100$, ties at $30{-}100$, loses below $30$.
This is the pessimistic bound: structureless features provide no
exploitable correlation for the codebook.

\smallskip
\noindent\textbf{Structured features.}
Correlated continuous groups, binary categoricals, planted decision tree
for class assignment.
LinTS-RQ wins at \emph{all} tested ratios ($\ge 14$), because the
planted structure aligns with the RQ codebook's partition.

\smallskip
\noindent Real datasets behave like the structured synthetic:
12 of 13 datasets win at ratios as low as $1.1$. The sole failure
(year\_prediction, ratio $0.04$) is $25\times$ below any tested
threshold.

\begin{figure}[H]
\centering
\includegraphics[width=0.8\textwidth]{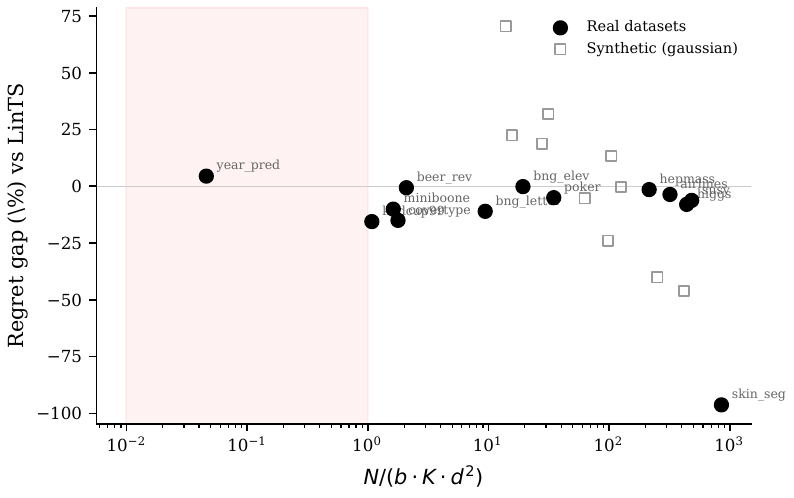}
\caption{Data sufficiency: $N/(bKd^2)$ vs.\ LinTS-RQ regret gap.
Real datasets (filled circles) win at ratios as low as $1.1$.
Synthetic Gaussian (open squares) requires ratio $> 30$.
year\_prediction is the sole outlier at ratio $0.04$.}
\label{fig:sufficiency_appendix}
\end{figure}

\section{year\_prediction Depth Analysis}\label{app:year_depth}

On year\_prediction ($d{=}90$, $K{=}84$), LinTS-RQ degrades even at
depth 1 ($96.3\%$ vs.\ LinTS baseline $93.4\%$). Adding depth makes it
progressively worse ($96.9\%$ at $\ell{=}2$, $97.4\%$ at $\ell{=}4$).

TS-RQ is unaffected: $93.0\%$ at all depths, matching the baseline.
SGD-LinTS-RQ improves slightly with depth ($96.9\% \to 95.0\%$) because
diagonal estimation is more data-efficient than full covariance in $\mathbb{R}^{90}$.

The root cause is data fragmentation: with $b{=}16$ centroids, each
centroid receives $\approx 28\text{K}$ samples to estimate
$d^2 K = 680\text{K}$ parameters---a ratio of $0.04$.
Context-free TS achieves $92.7\%$, outperforming all feature-based
methods, confirming that the features carry negligible signal for the
84-way quantile-binned target.

\section{Reconstruction Error}\label{app:recon}

The normalized reconstruction error
$\|x - \hat{x}\|^2 / \|x\|^2$ at depth 4 ranges from $0.005$
(skin\_segmentation) to $0.648$ (year\_prediction).
year\_prediction's high error confirms that $b{=}16$ centroids in
$\mathbb{R}^{90}$ cannot capture the feature geometry, consistent with
its status as the sole LinTS-RQ failure.

\begin{table}[H]
\centering
\caption{Normalized reconstruction error at depth 4.}
\label{tab:recon}
\begin{tabular}{@{}l r r@{}}
\toprule
\textbf{Dataset} & \textbf{NMSE (d=4)} & \textbf{LinTS-RQ wins?} \\
\midrule
skin\_seg. & 0.005 & Yes ($-96\%$) \\
kddcup99 & 0.018 & Yes ($-16\%$) \\
airlines & 0.078 & Yes ($-3.6\%$) \\
miniboone & 0.111 & Yes ($-10\%$) \\
covertype & 0.185 & Yes ($-15\%$) \\
beer\_rev. & 0.186 & Yes ($-2.9\%$) \\
susy & 0.187 & Yes ($-6.1\%$) \\
poker & 0.196 & Yes ($-5.0\%$) \\
bng\_elev. & 0.211 & Yes ($-0.2\%$) \\
bng\_letter & 0.293 & Yes ($-11\%$) \\
hepmass & 0.400 & Yes ($-1.1\%$) \\
higgs & 0.431 & Yes ($-8.0\%$) \\
year\_pred. & \textbf{0.648} & \textbf{No} ($+4.5\%$) \\
\bottomrule
\end{tabular}
\end{table}

\section{Effective Rank and SGD-LinTS Failure}\label{app:correlation}

On datasets with effective rank $< 1.5$---skin\_segmentation
(eff.\ rank $1.3$, $d{=}3$) and miniboone (eff.\ rank $1.1$,
$d{=}50$)---SGD-LinTS performs \emph{worse than context-free TS}
($42.0\%$ vs $20.8\%$ and $32.9\%$ vs $28.3\%$ regret rate).
The diagonal approximation treats each feature coordinate
independently; when most variance lies on a single axis, it wastes
$d{-}1$ coordinates on noise.
TS-RQ is unaffected: $2.6\%$ on skin\_segmentation and $14.7\%$
on miniboone.

\begin{figure}[H]
\centering
\includegraphics[width=0.7\textwidth]{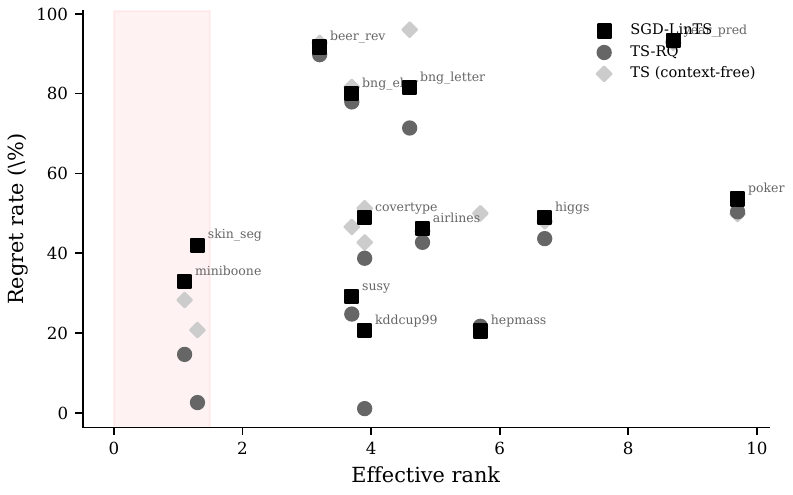}
\caption{Effective rank vs.\ regret rate. SGD-LinTS (squares) fails
severely when eff.\ rank $< 1.5$ (red zone), performing worse
than context-free TS (diamonds). TS-RQ (circles) is unaffected.}
\label{fig:correlation}
\end{figure}

\section{XGBoost and NeuralCB Comparison}\label{app:xgb}

LinTS-RQ achieves regret competitive with both nonlinear replay
baselines. Against Doubling-XGB: 7 wins, 4 losses, 2 ties
(30 LinTS-RQ seeds, 10 XGB seeds, 95\% CIs;
Table~\ref{tab:xgb_detail}). Against Doubling-NeuralCB: 5 wins,
2 losses, 5 ties (30 LinTS-RQ seeds, 10 NeuralCB seeds;
Table~\ref{tab:ncb_detail}). Wins tend to be larger in magnitude
than losses for both comparisons. The key differentiator is memory:
both baselines require an $O(T)$ replay buffer while LinTS-RQ uses
fixed $O(1)$ memory (Table~\ref{tab:memory}).

\begin{table}[H]
\centering
\caption{Per-dataset regret rate (\%): LinTS-RQ vs.\ Doubling-XGB.
Negative gap = LinTS-RQ has lower regret.}
\label{tab:xgb_detail}
\begin{tabular}{@{}l r r r l@{}}
\toprule
\textbf{Dataset} & \textbf{LinTS-RQ} & \textbf{D-XGB} & \textbf{Gap} & \textbf{Call} \\
\midrule
kddcup99          &  0.2 &  9.0 & $-8.8$ & Win \\
bng\_letter       & 46.6 & 56.1 & $-9.5$ & Win \\
covertype         & 23.9 & 28.9 & $-5.0$ & Win \\
beer\_reviews     & 83.1 & 85.7 & $-2.6$ & Win \\
skin\_seg         &  0.3 &  2.9 & $-2.6$ & Win \\
susy              & 20.3 & 21.7 & $-1.4$ & Win \\
hepmass           & 18.4 & 19.7 & $-1.3$ & Win \\
\midrule
bng\_elevators    & 75.0 & 75.1 & $0.0$ & Tie \\
miniboone         &  9.9 &  9.8 & $+0.1$ & Tie \\
\midrule
higgs             & 32.7 & 29.3 & $+3.4$ & Loss \\
airlines\_delay   & 39.8 & 36.0 & $+3.8$ & Loss \\
year\_prediction  & 97.8 & 93.1 & $+4.7$ & Loss \\
poker\_hand       & 46.9 & 41.0 & $+6.0$ & Loss \\
\bottomrule
\end{tabular}
\end{table}

\begin{table}[H]
\centering
\caption{Per-dataset regret rate (\%): LinTS-RQ vs.\ Doubling-NeuralCB
(30 LinTS-RQ seeds, 10 NeuralCB seeds).
Negative gap = LinTS-RQ has lower regret.}
\label{tab:ncb_detail}
\begin{tabular}{@{}l r r r l@{}}
\toprule
\textbf{Dataset} & \textbf{LinTS-RQ} & \textbf{D-NeuralCB} & \textbf{Gap} & \textbf{Call} \\
\midrule
beer\_reviews     & 83.0 & 87.7 & $-4.6$ & Win \\
bng\_letter       & 46.6 & 52.8 & $-6.2$ & Win \\
covertype         & 23.9 & 27.6 & $-3.7$ & Win \\
kddcup99          &  0.2 &  1.4 & $-1.2$ & Win \\
skin\_seg         &  0.3 &  1.6 & $-1.3$ & Win \\
\midrule
airlines\_delay   & 39.8 & 40.2 & $-0.4$ & Tie \\
bng\_elevators    & 75.0 & 75.0 & $+0.1$ & Tie \\
hepmass           & 18.4 & 18.2 & $+0.1$ & Tie \\
higgs             & 32.6 & 32.4 & $+0.3$ & Tie \\
susy              & 20.3 & 20.4 & $-0.1$ & Tie \\
\midrule
poker\_hand       & 46.9 & 45.8 & $+1.1$ & Loss \\
year\_prediction  & 97.8 & 93.1 & $+4.6$ & Loss \\
\bottomrule
\end{tabular}
\end{table}

\begin{figure}[H]
\centering
\includegraphics[width=0.8\textwidth]{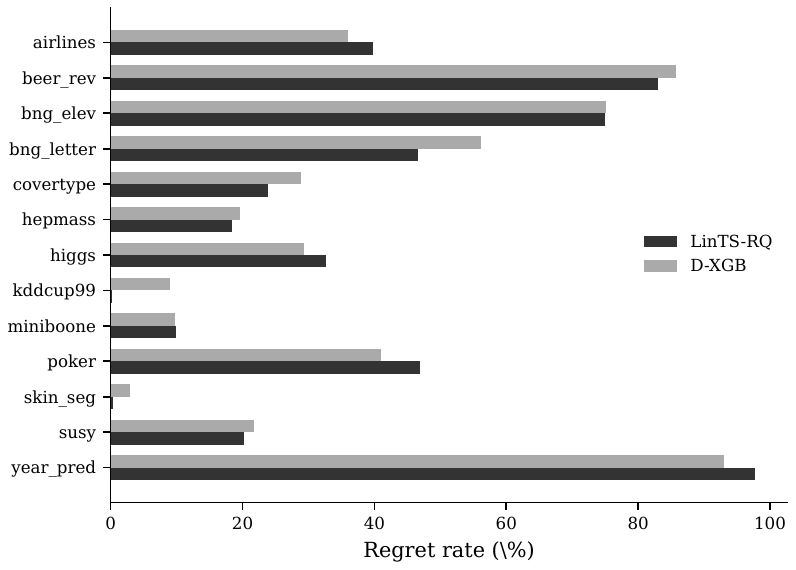}
\caption{Per-dataset regret rate: LinTS-RQ vs.\ Doubling-XGB.}
\label{fig:xgb_comparison}
\end{figure}

\section{TabNet-LinTS Comparison}\label{app:tabnet}

LinTS-RQ wins on 11 of 13 datasets against TabNet-LinTS (11W/2L;
losses: hepmass $+0.5\%$, year\_prediction $+1.7\%$; Welch's
$t$-test, 30 seeds each, 95\% CIs).
Both use the same 50K holdout (unsupervised); the RQ codebook
matches or exceeds TabNet's reconstruction-based embedding.

\begin{figure}[H]
\centering
\includegraphics[width=0.8\textwidth]{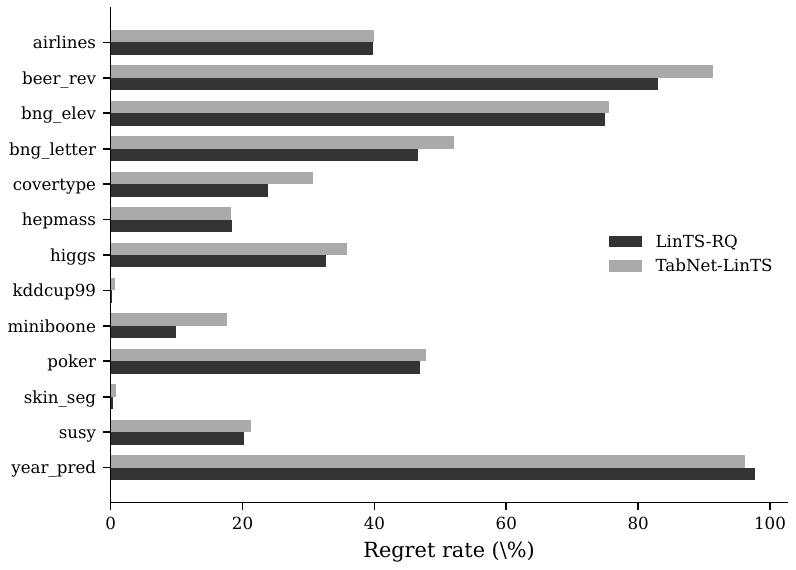}
\caption{Per-dataset regret rate: LinTS-RQ vs.\ TabNet-LinTS.}
\label{fig:tabnet_comparison}
\end{figure}

\section{Timing}\label{app:timing}

\begin{figure}[H]
\centering
\includegraphics[width=\textwidth]{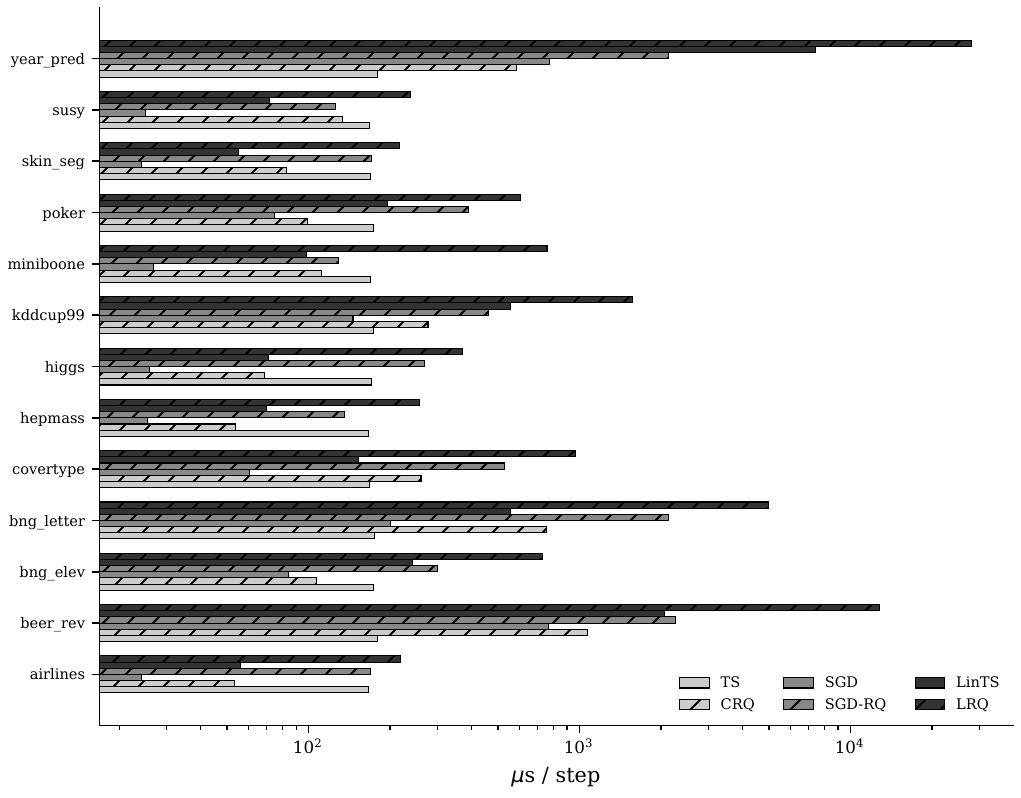}
\caption{Wall-clock $\mu$s/step per method at adapted DRQm depths.
TS-RQ is consistently the cheapest RQ variant ($50{-}750\,\mu$s).
LinTS-RQ on year\_prediction ($d{=}90$, $K{=}84$) reaches $28$\,ms.}
\label{fig:timing}
\end{figure}

\section{Memory Formulas}\label{app:memory}

Model memory = (codebook + model parameters) $\times$ 8 bytes.

\begin{center}
\begin{tabular}{@{}l l l@{}}
\toprule
\textbf{Component} & \textbf{Parameters} & \textbf{Formula} \\
\midrule
Codebook        & $\ell b d$        & Shared across arms \\
TS-RQ      & $2 \ell b K$      & (sum, count) per cell \\
SGD-LinTS-RQ    & $2 \ell b d K$    & (prec, $f$) vectors per cell \\
LinTS-RQ        & $(d^2{+}d) \ell b K$ & ($V$ matrix + $f$ vector) per cell \\
XGB replay      & $T (d{+}2)$       & Grows linearly with $T$ \\
\bottomrule
\end{tabular}
\end{center}

\noindent Adapted depth $\ell$ from shadow promotion varies by dataset
(median across 30 seeds). TS-RQ goes deepest (up to $\ell{=}8$ on
bng\_letter); LinTS-RQ stays shallowest ($\ell{=}2{-}3$, limited by
cascaded convergence).

\begin{tabular}{@{}l r r r r r r r r@{}}
\toprule
\textbf{Dataset} & $d$ & $K$ & TS & TS-RQ & SGD-LinTS & SGD-RQ & LinTS & LinTS-RQ \\
\midrule
airlines\_delay & 7 & 2 & 32B & 4.1KB & 224B & 13.1KB & 896B & 29.8KB \\
beer\_reviews & 12 & 104 & 1.6KB & 82.5KB & 19.5KB & 627.0KB & 126.8KB & 7.9MB \\
bng\_elevators & 18 & 10 & 160B & 9.5KB & 2.8KB & 94.5KB & 26.7KB & 859.5KB \\
bng\_letter & 16 & 26 & 416B & 68.0KB & 6.5KB & 742.0KB & 55.2KB & 5.2MB \\
covertype & 54 & 7 & 112B & 68.0KB & 5.9KB & 405.0KB & 162.4KB & 10.2MB \\
hepmass & 27 & 2 & 32B & 27.1KB & 864B & 50.6KB & 11.8KB & 384.8KB \\
higgs & 28 & 2 & 32B & 12.0KB & 896B & 105.0KB & 12.7KB & 619.5KB \\
kddcup99 & 41 & 17 & 272B & 37.5KB & 10.9KB & 358.8KB & 228.7KB & 7.2MB \\
miniboone & 50 & 2 & 32B & 33.8KB & 1.6KB & 62.5KB & 39.8KB & 2.5MB \\
poker\_hand & 10 & 9 & 144B & 7.0KB & 1.4KB & 95.0KB & 7.7KB & 250.0KB \\
skin\_segmentation & 3 & 2 & 32B & 3.5KB & 96B & 5.6KB & 192B & 6.8KB \\
susy & 18 & 2 & 32B & 16.5KB & 576B & 22.5KB & 5.3KB & 263.2KB \\
year\_prediction & 90 & 84 & 1.3KB & 64.5KB & 118.1KB & 3.7MB & 5.2MB & 252.0MB \\
\bottomrule
\end{tabular}


\newpage
\section*{NeurIPS Paper Checklist}

\begin{enumerate}

\item {\bf Claims}
    \item[] Question: Do the main claims made in the abstract and introduction accurately reflect the paper's contributions and scope?
    \item[] Answer: \answerYes{}
    \item[] Justification: The abstract claims RQ codebooks improve contextual bandits with fixed memory; this is supported by the asymptotic regret analysis (Theorem~\ref{thm:asymptotic}), shadow false-promotion control (Theorem~\ref{thm:shadow}), and experiments across 13 datasets (\S\ref{sec:experiments}).

\item {\bf Limitations}
    \item[] Question: Does the paper discuss the limitations of the work performed by the authors?
    \item[] Answer: \answerYes{}
    \item[] Justification: \S\ref{sec:conclusion} discusses three limitations: the squared-loss approximation for binary rewards, the data sufficiency condition $N/(bKd^2) \gg 1$ (with year\_prediction as a concrete failure case), and the dependence on a holdout set for codebook training.

\item {\bf Theory assumptions and proofs}
    \item[] Question: For each theoretical result, does the paper provide the full set of assumptions and a complete (and correct) proof?
    \item[] Answer: \answerYes{}
    \item[] Justification: Theorem~\ref{thm:asymptotic} (asymptotic regret, Appendix~\ref{app:fixed_depth_proofs}) is conditional on three base-learner convergence properties, which are stated explicitly and verified for all three model classes (Appendix~\ref{app:base_learner_axioms}). Theorem~\ref{thm:shadow} (false-promotion control) is proved in Appendix~\ref{app:promotion_proofs}. Theorem~\ref{thm:depth-rate} exactly decomposes depth-dependent approximation error, and Corollary~\ref{cor:depth-rate} proves geometric decay under explicit reconstruction- and additive-tail contraction assumptions; their proofs are in Appendix~\ref{app:fixed_depth_proofs}.

\item {\bf Experimental result reproducibility}
    \item[] Question: Does the paper fully disclose all the information needed to reproduce the main experimental results of the paper to the extent that it affects the main claims and/or conclusions of the paper (regardless of whether the code and data are provided or not)?
    \item[] Answer: \answerYes{}
    \item[] Justification: Algorithm~\ref{alg:rqbandit} specifies the method. \S\ref{sec:experiments} describes all datasets (publicly available from OpenML), codebook parameters ($b{=}16$, depths 1--5), number of seeds (30), and baselines.

\item {\bf Open access to data and code}
    \item[] Question: Does the paper provide open access to the data and code, with sufficient instructions to faithfully reproduce the main experimental results, as described in supplemental material?
    \item[] Answer: \answerYes{}
    \item[] Justification: Code is available at \url{https://github.com/atavory/rqb}. All 13 datasets are publicly available from OpenML.

\item {\bf Experimental setting/details}
    \item[] Question: Does the paper specify all the training and test details (e.g., data splits, hyperparameters, how they were chosen, type of optimizer) necessary to understand the results?
    \item[] Answer: \answerYes{}
    \item[] Justification: \S\ref{sec:experiments} specifies codebook size ($b{=}16$), clipping parameter ($\sigma$), Thompson sampling noise scale, and the holdout/bandit split. Appendix tables provide per-dataset details.

\item {\bf Experiment statistical significance}
    \item[] Question: Does the paper report error bars suitably and correctly defined or other appropriate information about the statistical significance of the experiments?
    \item[] Answer: \answerYes{}
    \item[] Justification: All experiments use 30 seeds. Figures show $\pm 1$ standard deviation; tables report 95\% CIs. The shadow promotion test uses Freedman's martingale inequality with Bonferroni correction at significance level $\alpha{=}0.05$.

\item {\bf Experiments compute resources}
    \item[] Question: For each experiment, does the paper provide sufficient information on the computer resources (type of compute workers, memory, time of execution) needed to reproduce the experiments?
    \item[] Answer: \answerYes{}
    \item[] Justification: Timing benchmarks are reported in Appendix (Table and Figure). All experiments are CPU-only (no GPU required). Memory usage is analyzed in Appendix.

\item {\bf Code of ethics}
    \item[] Question: Does the research conducted in the paper conform, in every respect, with the NeurIPS Code of Ethics \url{https://neurips.cc/public/EthicsGuidelines}?
    \item[] Answer: \answerYes{}
    \item[] Justification: The research uses only publicly available datasets and proposes a general-purpose bandit algorithm with no foreseeable ethical concerns.

\item {\bf Broader impacts}
    \item[] Question: Does the paper discuss both potential positive societal impacts and negative societal impacts of the work performed?
    \item[] Answer: \answerYes{}
    \item[] Justification: Section~\ref{sec:broader_impact} discusses positive impacts (efficient personalization, resource-constrained deployments) and notes that standard fairness and privacy concerns for bandit algorithms apply.

\item {\bf Safeguards}
    \item[] Question: Does the paper describe safeguards that have been put in place for responsible release of data or models that have a high risk for misuse (e.g., pre-trained language models, image generators, or scraped datasets)?
    \item[] Answer: \answerNA{}
    \item[] Justification: The paper does not release pre-trained models, generative models, or scraped datasets. RQ-Bandit is a lightweight bandit algorithm.

\item {\bf Licenses for existing assets}
    \item[] Question: Are the creators or original owners of assets (e.g., code, data, models), used in the paper, properly credited and are the license and terms of use explicitly mentioned and properly respected?
    \item[] Answer: \answerYes{}
    \item[] Justification: All datasets are cited with their original publications in Appendix~\ref{app:datasets}. All are publicly available from OpenML and UCI under open licenses.

\item {\bf New assets}
    \item[] Question: Are new assets introduced in the paper well documented and is the documentation provided alongside the assets?
    \item[] Answer: \answerYes{}
    \item[] Justification: The RQ-Bandit implementation will be released as supplementary code with documentation and scripts to reproduce the main experiments.

\item {\bf Crowdsourcing and research with human subjects}
    \item[] Question: For crowdsourcing experiments and research with human subjects, does the paper include the full text of instructions given to participants and screenshots, if applicable, as well as details about compensation (if any)?
    \item[] Answer: \answerNA{}
    \item[] Justification: No crowdsourcing or human subjects research was conducted.

\item {\bf Institutional review board (IRB) approvals or equivalent for research with human subjects}
    \item[] Question: Does the paper describe potential risks incurred by study participants, whether such risks were disclosed to the subjects, and whether Institutional Review Board (IRB) approvals (or an equivalent approval/review based on the requirements of your country or institution) were obtained?
    \item[] Answer: \answerNA{}
    \item[] Justification: No human subjects research was conducted.

\item {\bf Declaration of LLM usage}
    \item[] Question: Does the paper describe the usage of LLMs if it is an important, original, or non-standard component of the core methods in this research? Note that if the LLM is used only for writing, editing, or formatting purposes and does \emph{not} impact the core methodology, scientific rigor, or originality of the research, declaration is not required.
    \item[] Answer: \answerNA{}
    \item[] Justification: LLMs were used only for writing assistance and editorial polishing. They are not a component of the RQ-Bandit method or experimental pipeline.

\end{enumerate}

\end{document}